\DeclareSIUnit{\byte}{B}
\newcommand{\pacsbo}{\textsc{Pacsbo}}
\newcommand{\safeopt}{\textsc{SafeOpt}}
\newcommand{\ie}{i\/.\/e\/.,\/~}
\newcommand{\eg}{e\/.\/g\/.,\/~}
\newtheorem{lemma}{\bf Lemma}
\newtheorem{theorem}{\bf Theorem}
\newtheorem{corollary}{\bf Corollary}
\newtheorem{assumption}{\bf Assumption}
\newtheorem{remark}{\bf Remark}
\newcommand{\domain}{\mathcal A}
\newcommand{\underover}[2]{\underset{#1}{\underbrace{#2}}}
\newcommand{\fakepar}[1]{\vspace{1mm}\noindent\textbf{#1.}}
\newcommand{\C}{\mathrm{C}}
\DeclareMathOperator*{\argmax}{arg\,max}
  \definecolor{aaltoRed}{RGB}{239,51,64}%
  \definecolor{aaltoBlue}{RGB}{0,94,184}%
  \definecolor{magenta}{RGB}{255,0,255}
  \definecolor{gray}{RGB}{128,128,128}
\definecolor{orange}{RGB}{255,165,0}
\definecolor{purple}{RGB}{128,0,128}
\begin{document}

%

%

\twocolumn[

\aistatstitle{Safe exploration in reproducing kernel Hilbert spaces}

\aistatsauthor{Abdullah Tokmak \And Kiran G.\ Krishnan \And  Thomas B.\ Schön \And Dominik Baumann}

\aistatsaddress{\footnotesize{Department of Electrical} \\ \footnotesize{Engineering and Automation} \\ Aalto University \And  \footnotesize{Department of Electrical} \\ \footnotesize{Engineering and Automation} \\ Aalto University  \And \footnotesize{Department of} \\ \footnotesize{Information Technology} \\ Uppsala University  \And  \footnotesize{Department of Electrical} \\ \footnotesize{Engineering and Automation} \\ Aalto University}
]

\begin{abstract}
Popular safe Bayesian optimization (BO) algorithms learn control policies for safety-critical systems in unknown environments. However, most algorithms make a smoothness assumption, which is encoded by a known bounded norm in a reproducing kernel Hilbert space (RKHS). The RKHS is a potentially infinite-dimensional space, and it remains unclear how to reliably obtain the RKHS norm of an unknown function. In this work, we propose a safe BO algorithm capable of estimating the RKHS norm from data. We provide statistical guarantees on the RKHS norm estimation, integrate the estimated RKHS norm into existing confidence intervals and show that we retain theoretical guarantees, and prove safety of the resulting safe BO algorithm. We apply our algorithm to safely optimize reinforcement learning policies on physics simulators and on a real inverted pendulum, demonstrating improved performance, safety, and scalability compared to the state-of-the-art.
\end{abstract}
\addtocontents{toc}{\protect\setcounter{tocdepth}{0}}
\section{INTRODUCTION}\label{sec:intro}

\begin{figure*}
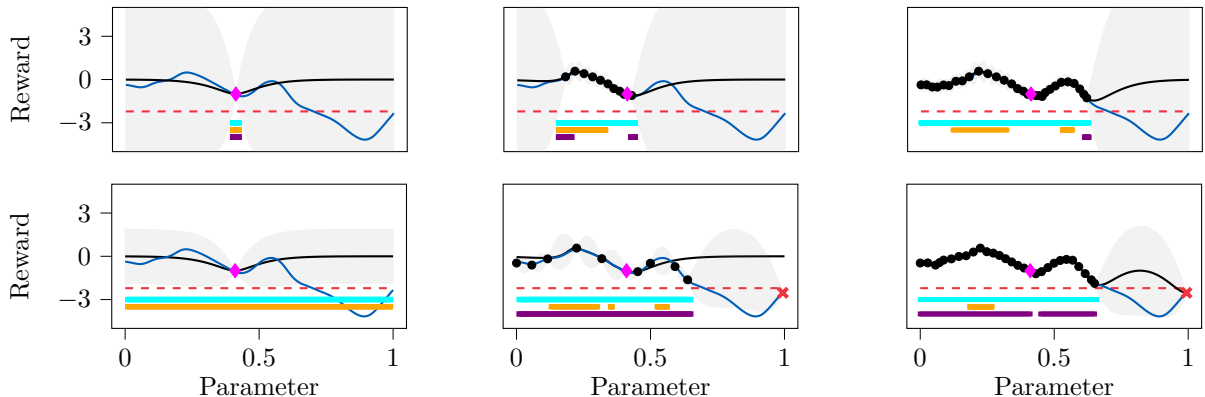

    \centering
    \begin{subfigure}{6cm}
    \centering
    \input{final_figs/SafeOpt_intro/True-Iteration-0}
    \end{subfigure}
    \begin{subfigure}{5.25cm}
    \centering
    \input{final_figs/SafeOpt_intro/True-Iteration-10}
    \end{subfigure}
    \begin{subfigure}{5.25cm}
    \centering
    \input{final_figs/SafeOpt_intro/True-Iteration-last}
    \end{subfigure}
    \begin{subfigure}{6cm}
    \centering
    \input{final_figs/SafeOpt_intro/Under-estimation-0}
    \end{subfigure}
    \begin{subfigure}{5.25cm}
    \centering
    \input{final_figs/SafeOpt_intro/Under-estimation-10}
    \end{subfigure}
    \begin{subfigure}{5.25cm}
    \centering
    \input{final_figs/SafeOpt_intro/Under-estimation-last}
    \end{subfigure}
    \caption{\emph{Toy example of safe BO and the influence of the RKHS norm.} 
    We aim to maximize the reward function (blue) while only sampling above the safety threshold (red dashed line).
    The predicted function (black line) is computed based on iteratively acquired samples (black dots, initial sample in magenta), and the gray shaded area shows the confidence intervals.
    At each iteration, we compute a set of parameters that we believe to be safe (cyan), potential expanders (purple), and potential maximizers (orange), thus safely balancing exploration and exploitation.    
    The upper sub-figures show safe BO, where the true RKHS norm is used to compute the confidence intervals, while the lower sub-figures are generated with an under-estimated RKHS norm.
An under-estimation of the RKHS norm can yield confidence intervals that do not contain the reward function, which may eventually lead to unsafe experiments (red cross). 
}
    \label{fig:SafeOpt intro}
\end{figure*}

When learning policies for systems that act in the real world, such as mobile robots or autonomous vehicles, two crucial requirements must be met: \emph{(i)} the learning algorithms we use must be sample efficient, as learning experiments are time-consuming and cause wear and tear to the hardware; and \emph{(ii)} we must guarantee safety during exploration, \ie while testing new policies, for systems not to damage themselves, their environment, or endanger people.
Currently, one of the most popular tools for policy learning is reinforcement learning (RL).
Without the need for a dynamics model, RL learns a policy through trial and error, \ie by performing experiments and receiving a reward signal in return that it tries to maximize.
Unfortunately, RL struggles with both requirements.
Hence, the most impressive results of RL algorithms have been achieved in simulated or gaming environments 
\citep{Lillicrap2019Continuous}.

An alternative to RL for policy learning is combining Bayesian optimization (BO) \citep{garnett2023bayesian} with Gaussian process (GP) regression \citep{Rasmussen2006Gaussian}.
When modeling the reward function with a GP, we can leverage this model and pose the decision of where to explore next as an optimization problem.
This way of sequential decision-making drastically improves sample efficiency, as shown in numerous hardware experiments \citep{antonova2017deep, calandra2016bayesian, marco2016automatic}.
Thus, combining GPs and BO meets the first requirement.
For the second requirement, safe BO algorithms guarantee safety during exploration with high probability; a well-known example is \safeopt\ \citep{sui2015safe}.
\safeopt, as well as other popular safe BO algorithms, assume that the reward function lies in a reproducing kernel Hilbert space (RKHS). 
Moreover, guaranteeing safety requires an additional smoothness assumption, which is encoded by knowing an upper bound on the norm of the reward function in that RKHS.
Even though the assumption elegantly paves the way to guarantee safety with high probability, it is highly unrealistic since the RKHS is a potentially infinite-dimensional space, and it is unclear how to guess that upper bound for unknown reward functions.
If we incorrectly specify the RKHS norm, \ie if the true RKHS norm is larger than the bound we assume, safety guarantees may become obsolete, as we illustrate in Figure~\ref{fig:SafeOpt intro}.

\paragraph{Contribution}
In response, we present a data-driven approach to compute an RKHS norm over-estimation with statistical guarantees.
We integrate this RKHS norm over-estimation into a safe BO algorithm reminiscent of \safeopt, for which we prove safety with high probability.
Moreover, we extend our safe BO algorithm by introducing a notion of locality.
By considering local RKHS norms, which are potentially smaller than the global RKHS norm, we can explore more optimistically and significantly improve scalability by separately discretizing local sub-domains.
We compare our algorithm with \safeopt\ in a synthetic example and challenging robotic simulation benchmarks, where we demonstrate the benefits of over-estimating the RKHS norm from data instead of randomly guessing it.
Finally, we demonstrate the applicability of our algorithm to real-world systems in a hardware experiment.\footnote{%
See
\url{https://github.com/tokmaka1/AISTATS_2025} for the code and \url{https://safeexploration.wordpress.com/} for videos of the experiments.
}

\section{PROBLEM SETTING AND PRELIMINARIES}\label{sec:preliminaries}
We cast safe policy search as a constrained optimization problem, where the objective function quantifies the performance. 
We consider parameterized policies. The parameters, which could be controller parameters, serve as the decision variables of the optimization problem.

\paragraph{Problem setting}
We aim to maximize an unknown reward function $f{:}\;\domain\subseteq\mathbb{R}^n\to\mathbb{R}$ while guaranteeing safety.
We define safety as only sampling parameters~$a\in\domain$ corresponding to reward values larger than a pre-defined safety threshold~$h\in\mathbb R.$
Thus, we write the optimization problem as
\begin{align} \label{eq:opt}
    \max_{a \in \domain} f(a) \quad \text{subject to } \; f(a_t)\geq h, \; \forall t \geq 1.
\end{align}
We solve~\eqref{eq:opt} by sequentially querying the reward function at each iteration~$t\in\mathbb N$.
In return, we receive measurements
 $y_t\coloneqq f(a_t)+\epsilon_t$, where~$\epsilon_t$ is independent and identically distributed (i.i.d.)  $\sigma$-sub-Gaussian measurement noise.
We denote the queried parametrizations until iteration~$t$ by~$a_{1:t}\coloneqq[a_1, \ldots, a_{\mathrm t}]^\top$ and the corresponding measurements by~$y_{1:t}$. 
GP regression provides a natural tool to estimate~$f$, as done in \safeopt\ \citep{sui2015safe} and other BO algorithms \citep{srinivas2012information}.
Given data~$a_{1:t}$ and~$y_{1:t}$ at each iteration~$t$, the posterior GP mean and variance are
\begin{align} 
        \mu_t(a) &= k_t(a)^\top(K_t+\sigma^2 I_t)^{-1}y_{1:t}, \label{eq:mean}\\
    \sigma_t^2(a) &= k(a,a) - k_t(a)^\top(K_t+\sigma^2 I_t)^{-1}k_t(a), \label{eq:covariance}
\end{align}
respectively \citep{Rasmussen2006Gaussian}, where~$k(a,a)$ is the kernel evaluated at~$a\in\domain$,~$k_t(a)=[k(a,a_1), \ldots, k(a,a_{\mathrm{t}})]^\top\in\mathbb{R}^t$ the covariance vector,~$K_t\in\mathbb{R}^{t\times t}$ the covariance matrix with entry~$k(a_i,a_j)$ at row~$i$ and column~$j$ for all~$i,j\in\{1,\ldots,\mathrm{t}\}$, and~$I_t$ the $t\times t$ identity matrix.
Similar to \safeopt\ and other safe BO algorithms, we assume that the reward function lies in the RKHS of kernel~$k$, \ie $f\in H_k$.
This assumption is, in general, non-restricting since many kernels satisfy the universal approximation property \citep{micchelli2006universal}, \ie even if~$f\not\in H_k$, there exists a $\tilde f \in H_k$ such that $\sup_{a\in\domain}\lvert f(a)-\tilde f(a)\rvert < \epsilon$ for all $\epsilon>0$.
Given this assumption, we can obtain frequentist confidence intervals~$Q_t(a)$ around the posterior mean~$\mu_t$ that contain~$f$ with high probability, \ie 
\begin{align}\label{eq:Q}
    Q_t(a)&\coloneqq   \mu_{t}(a)\pm \beta_t \sigma_t(a), \\
    \beta_t &= B_t+\sqrt{\sigma
\log\det(I_t+K_t/\sigma)
-2\sigma\log(\delta)}, \nonumber
\end{align}
with confidence parameter~$\delta\in(0,1)$.
%
We obtain~\eqref{eq:Q} by combining Theorem~3.11 by \cite{abbasi2013online} with Remark~3.13 by \cite{abbasi2013online} as detailed in Appendix~\ref{app:abbasi}.
In~\eqref{eq:Q},~$B_t$ is an over-estimation of the ground truth RKHS norm, \ie $B_t\geq \|f\|_k = \sqrt{\sum_{s=1}^\infty\sum_{t=1}^\infty \alpha_s\alpha_t k(x_s,x_t)}$, where~$\alpha$ are the coefficients and~$x$ are the center points of the RKHS function~$f$ (see Appendix~\ref{app:RKHS_norm_derivation} for the derivation).
Notably, an under-estimation of the RKHS norm might lead to unsafe experiments (see Figure~\ref{fig:SafeOpt intro}), while a too conservative over-estimation might yield too cautious exploration and even premature stopping (see Figure~\ref{fig:intro_over-estimation} in Appendix~\ref{app:intro}). 
In this paper, we 
compute a data-dependent~$B_t$ at each iteration~$t$ that over-estimates~$\|f\|_k$ with high probability.
The data-driven RKHS norm over-estimation is the chief distinction between our approach and other safe BO algorithms like \safeopt\ that \emph{guess the RKHS norm a priori}. 

\paragraph{Lipschitz constant}
Besides knowing an upper bound on the RKHS norm, safe BO algorithms like \safeopt\ typically assume a known  Lipschitz constant.
We replace the Lipschitz constant with an RKHS norm induced continuity  using the  kernel (semi)metric 
\begin{align}\label{eq:semi-metric}
  \hspace*{-0.5em} d_k(a,a^\prime)^2= k(a,a)+k(a^\prime, a^\prime)-k(a,a^\prime)-k(a^\prime,a),
\end{align}
which we derive in Lemma~\ref{le:cont} in Appendix~\ref{proof:safety}.

\paragraph{Safe exploration}
Equivalent to \safeopt, we define the contained set~$C_t(a)\coloneqq C_{t-1}(a)\cap Q_t(a),\, C_0=\mathbb R$, lower bound $\ell_t(a)\coloneqq\min C_t(a)$, and upper bound~$u_t(a)\coloneqq\max C_t(a)$
to quantify probabilistically whether a policy parameter~$a$ is safe.
At each iteration~$t$, we restrict function evaluations to a safe set~$S_t\subseteq \domain$ that only contains parameters~$a$ that are safe with high probability:
\begin{align} \label{eq:safe_set}
    S_t \coloneqq \cup_{a\in S_{t-1}}
    \{
a^\prime \in \domain \vert \ell_t(a)- B_t d_k(a,a^\prime) \geq h
    \}.
\end{align}
To start exploration, we assume that a set of initial safe samples~$\emptyset\neq S_0\subseteq\domain$ is given. 
Moreover, we define
\begin{align}
M_t&\coloneqq \{a\in S_t \vert u_t(a)\geq \max_{a^\prime\in S_t} \ell_t(a^\prime)\}, \label{eq:maximizer}\\
G_t&\coloneqq\{a\in S_t\vert g_t(a)>0\},\label{eq:expander} \\ g_t (a) &= \mathrm{card}(a^\prime \in \domain\setminus S_t \vert u_t(a)-B_td_k(a,a^\prime)\geq h), \nonumber
\end{align}
as the set of potential safe maximizers and potential safe expanders, respectively.
At each iteration~$t$, 
the next parameter~$a_{t+1}$ is given by the most uncertain parameter within~$M_t\cup G_t$, \ie
\begin{align}\label{eq:acquisition}
    a_{t+1}= \argmax_{a\in M_t\cup G_t} \beta_t \sigma_{t}(a),
\end{align}
which results in safely balancing exploration and exploitation to solve~\eqref{eq:opt}.

\section{SAFE BO WITH RKHS NORM OVER-ESTIMATION}
Algorithm~\ref{alg:global_safe_BO} summarizes the proposed safe BO algorithm with the RKHS norm over-estimation.
In each iteration, we 
determine the next sample with which we conduct a new experiment.
The sample acquisition is described in Algorithm~\ref{alg:acquisition}.
First, we define the GP model given the current set of samples.
Then, we compute an over-estimation of the RKHS norm by querying Algorithm~\ref{alg:PAC}, which we extensively explain in Section~\ref{sec:RKHS}.
Moreover, we compute the confidence intervals, the set of safe samples~$S_t$, the set of potential maximizers~$M_t$, and the set of potential expanders~$G_t$.
Finally, we return the most uncertain parameter within~$M_t\cup G_t$ and its corresponding uncertainty.
The acquisition function is reminiscent of \safeopt\ with the crux difference lying in the RKHS norm~$B_t$ (l.~2), where \safeopt\ \emph{guesses the RKHS norm a priori} and \emph{maintains that guess}.
Hence, we naturally recover \safeopt\ by replacing the query of Algorithm~\ref{alg:PAC} with an oracle.

\begin{algorithm}
\begin{algorithmic}[1]
\Require $k$, $\domain$, $S_0$, $\delta$, $\kappa$, $\gamma$ $m$, $\sigma$
\State Init: $a_{1}, y_{1}$ samples corresponding to~$S_0$, $B_0=\infty$
\For{$t=1,2,\ldots$}
\State $a_{t+1}\gets $  Algorithm~\ref{alg:acquisition}$(k, \domain, S_{t-1}, \delta, \kappa, \gamma, m, \sigma, t)$
\State $y_{t+1} \gets  f(a_{t+1})+\epsilon_{t+1}$ \Comment{Conduct experiment}
\EndFor
\State \Return Best safely evaluable parameter~$a\in\domain$
\end{algorithmic}
\caption{Proposed safe BO algorithm with RKHS norm over-estimation.}
\label{alg:global_safe_BO}
\end{algorithm}

\begin{algorithm}
\begin{algorithmic}[1]
\Require $k$, $\domain$, $S_{t-1}$, $\delta$, $\kappa$, $\gamma$, $t$, $B_{t-1}$, $t$, $\sigma$
\State Compute~$\mu_t$ and~$\sigma_{t}^2$ given~$a_{1:t}, y_{1:t}$ \Comment{\eqref{eq:mean}, \eqref{eq:covariance}}
\State $B_t \gets$ Algorithm~\ref{alg:PAC}$(\gamma, \kappa, m, \domain, k, B_{t-1}, a_{1:t}, y_{1:t}, k)$
\State Compute sets~$Q_t(a)$, $C_t(a)$, and bounds~$u_t(a),\ell_t(a)$ from samples~$a_{1:t}, y_{1:t}$, and~$B_t$
\Comment{\eqref{eq:Q}}
\If{$t>1$} compute $S_t$~\eqref{eq:safe_set} \textbf{else} $S_t\gets S_0$
\Comment{\eqref{eq:safe_set}}
\EndIf
\State Compute $\omega_t \coloneqq \beta_t\sigma_{t}$ and $M_t,G_t$ \Comment{\eqref{eq:maximizer}, \eqref{eq:expander}}
\State \Return $\underset{a\in M_t\cup G_t}{\argmax}\omega_t(a)$, $ \underset{a\in M_t\cup G_t}{\max}\omega_t(a)$ \Comment{\eqref{eq:acquisition}}
\caption{Sample acquisition.}
\label{alg:acquisition}
\end{algorithmic}
\end{algorithm}

In the remainder of this section, we
present the RKHS norm over-estimation to compute~$B_t$ (Section~\ref{sec:RKHS}), provide theoretical guarantees for~$B_t\geq \|f\|_k$, integrate the estimated RKHS norm into existing confidence intervals and prove safety of Algorithm~\ref{alg:global_safe_BO} (Section~\ref{sec:theory}), and extend Algorithm~\ref{alg:global_safe_BO} by introducing a notion of locality (Section~\ref{sec:locality}).

\subsection{RKHS norm over-estimation} \label{sec:RKHS}

The RKHS norm over-estimation used in Algorithm~\ref{alg:acquisition} is based on two pillars: 
\emph{(i)} a recurrent neural network (RNN) \citep{hochreiter1997long} that predicts the RKHS norm for each iteration, and \emph{(ii)} random RKHS functions that simulate the potential behavior of the unknown reward function~$f$.

\paragraph{RNN}
We use an RNN to estimate the RKHS norm~$\|f\|_k$ based on the current samples~$a_{1:t}$ and~$y_{1:t}$.
Specifically, for each iteration, we compute the RKHS norm of the GP mean function~$\|\mu_t\|_k$ and the reciprocal integral of the posterior variance~$\sigma_t^2$, which quantifies sampling density, and store them as sequences.
As the sampling density increases, the GP mean~$\mu_t$ and its RKHS norm~$\|\mu_k\|_k$ approximate the reward function~$f$ and its RKHS norm~$\|f\|_k$  more closely. 
While the two sequences serve as the input to the RNN, we also require labels to train it.
To this end, we optimize artificial RKHS functions~$g\in H_k$, whose known RKHS norms~$\|g\|_k$ serve as the labels 
for training the RNN, using our proposed safe BO algorithm.
We provide more details on the RNN in Appendix~\ref{app:RNN}, including its architecture, the generation of training data, its performance, and its role while executing the algorithm.
We want to highlight that the RNN merely provides a \emph{heuristic lower bound} on the estimated RKHS norm and solely acts as an additional layer of conservatism (see~\eqref{eq:impossible} in Appendix~\ref{proof:RKHS_scenario}); the hereafter introduced guarantees are \emph{independent} from the estimation of the RNN.
The RNN could be replaced by different function approximators; we choose an RNN to exploit the sequential nature of the inputs.

\paragraph{Random RKHS functions}
The second pillar is the computation of random RKHS functions with which we obtain theoretical guarantees on the RKHS norm over-estimation. 
In essence, the random RKHS functions~$\rho_j\in H_k, \, j \in\{1,\ldots,\mathrm m\}$ capture the behavior of the unknown reward function~$f$, as shown in Figure~\ref{fig:random_RKHS_functions}. 
Ideally, we would create random RKHS functions that capture the entire RKHS; however, this would require computing infinite sums.
Hence, in implementation, we follow a pre-RKHS approach as described in Appendix~C.1 by \cite{Fiedler2021Practical} to create random RKHS functions~$\rho_j=\sum_{s=1}^{\hat N} \alpha_s k(\cdot, x_s),\, \hat N \gg t$.
We require the random RKHS functions to interpolate the given samples~$y_{1:t}$ subject to~$\sigma$-sub-Gaussian noise.
Thus, the interpolating property determines the first~$\alpha_{1:t}$ coefficients.
Moreover, we assume that the first center points~$x_{1:t}$ are equal to the parameters~$a_{1:t}$.  
The remaining ~$\alpha_{t+1:\hat N}, x_{t+1:\hat N}$ are i.i.d.\ samples from uniform distributions with $x\in\domain$ and $a\in[-\bar\alpha, \bar\alpha]$, introducing the required stochasticity.
Subsequently, the random RKHS functions exhibit vastly different behavior for fewer samples and approach~$f$ for more samples (see Figure~\ref{fig:random_RKHS_functions}), which will yield tighter RKHS norm over-estimations for an increasing sample density.

\begin{figure*}
    \centering
    \input{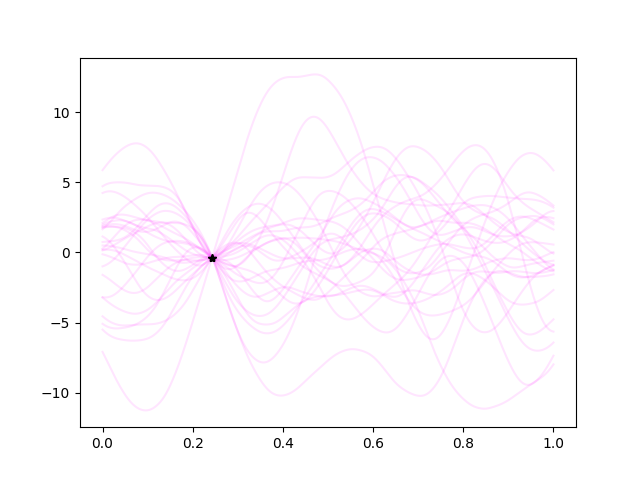}
    \input{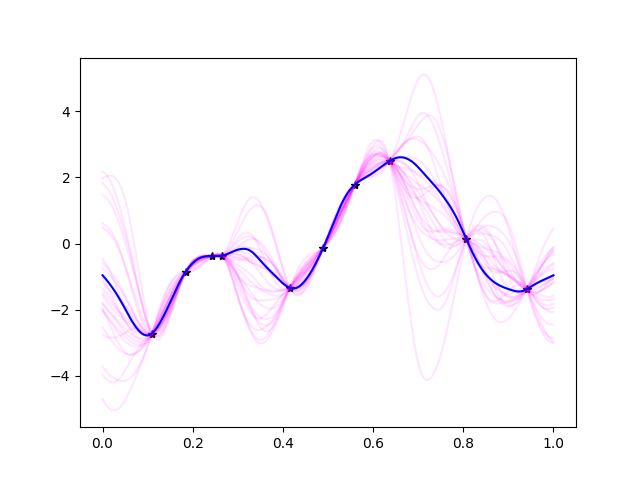}
    \input{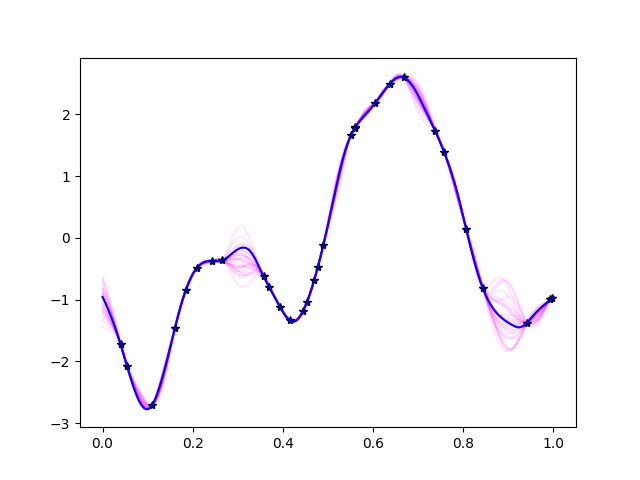}
    \caption{\emph{Random RKHS functions.}
    The random RKHS functions approach the unknown reward function with more samples.
    We generated the plots with the Matérn32 kernel with~$\ell=0.1$.
    The remaining hyperparameters were $\hat N=500$, $\bar\alpha=1$, and $\sigma=10^{-2}$.
    The reward function~$f$ has 1000 random center points and coefficients, which we scale to yield~$\|f\|_k=5$.
    We sampled the parameters~$a_{1:t}\subseteq \domain$ from a uniform distribution.
    }
    \label{fig:random_RKHS_functions}
\end{figure*}

\begin{algorithm}
\begin{algorithmic}[1]    
\Require $\gamma$, $\kappa$, $m$, $\domain$, $k$, $B_{t-1}$, $a_{1:t}$, $y_{1:t}$, $k$
\State $B_t \gets$ RKHS norm estimation given~$a_{1:t},y_{1:t}, k, \domain$ with RNN
\State Construct~$m$ random RKHS 
functions~$H_k \ni \rho_{t,j}{:}\;\domain\rightarrow\mathbb R$, with~$\|\rho_{t,j}\|_k$ given $a_{1:t},y_{1:t}$
\State Sort functions by ascending RKHS norm~$\{\rho_{t,j}\}_{j=1}^m$
\If{$B_t<\|\rho_{t,m}\|_k$}
\Statex $r\gets \max_{r\in\{1,\ldots, m-1\}} r \quad \text{subject to}$
\Statex $\sum^{r}_{i=0}{m \choose i} \gamma^i (1-\gamma)^{m-i}\leq\kappa\;\land\;  B_t  < \|\rho_{t,m-r}\|_k$
\Statex $B_t \gets \|\rho_{t,m-r}\|_k$
\EndIf
\If{$B_t < B_{t-1}$}
$B_t\gets B_{t-1}$
\EndIf
\State \Return $B_t$
\end{algorithmic}
\caption{RKHS norm over-estimation}
\label{alg:PAC}
\end{algorithm}

\paragraph{Algorithm} The RKHS norm over-estimation is summarized in Algorithm~\ref{alg:PAC}. 
First, we receive the RKHS norm estimation from the RNN given the current set of samples.
Second, we construct~$m$ i.i.d.\ random RKHS functions with known RKHS norms.
Based on the return of the RNN and the RKHS norms of the random RKHS functions, we return~$B_t$, which over-estimates~$\|f\|_k$ with high probability.  
The explicit form of~$B_t$ becomes clear in Theorem~\ref{th:RKHS_scenario}.

\begin{remark}\label{re:limit}
Our design choices decrease the space that is covered by the random RKHS functions.
Nevertheless, the random RKHS functions in Figure~\ref{fig:random_RKHS_functions} display a high degree of randomness, although they lie in a sub-space of the pre-RKHS from which $f$ is generated.
This supports the design choices.
An alternative to the pre-RKHS approach is to work with orthonormal bases of RKHSs provided in Theorem~4.38 by \cite{Steinwart2008SVM} for the squared-exponential kernel and by \cite{tronarp2024orthonormal} for other translation-invariant kernels.
\end{remark}

\begin{remark}
Although we integrate the RKHS norm over-estimation into \safeopt, it applies equally to any extension such as by \cite{berkenkamp2023bayesian, sui2018stagewise, Bhavi2023GSO}.
Besides, the relevance of the RKHS norm goes beyond BO. 
It appears in, \eg statistics \citep{gretton2012kernel} or kernel-based function approximation \citep{maddalena2021deterministic}.
\end{remark}

\subsection{Theoretical analysis}\label{sec:theory}
In the following, we present theoretical guarantees for the RKHS norm over-estimation and Algorithm~\ref{alg:global_safe_BO}.
First, we make an assumption on the inputs, the noise, and the kernel, akin to \cite{chowdhury2017kernelized}.
\begin{assumption}\label{asm:chow}
The kernel~$k{:}\; \mathbb R \times \mathbb R \rightarrow \mathbb R_{\geq 0}$ is symmetric, positive definite, and continuous.
Moreover, the action sequence~$\{a_t\}_{t=1}^\infty$ is an~$\mathbb{R}^n$-valued discrete time stochastic process
    and~$a_t$ is $\mathcal F_{t-1}$-measurable $\forall t\geq 1$.
The noise~$\{\epsilon_t\}_{t=1}^\infty$ is a real-valued stochastic process and for some~$\sigma\geq 0$ and all~$t\geq 1$, $\epsilon_t$ is \emph{(i)}~$\mathcal{F}_t$-measurable and \emph{(ii)}~$\sigma$-sub-Gaussian conditionally on~$\mathcal F_{t-1}$.
\end{assumption}

Next, we connect the RKHS norms of the random RKHS functions and the reward function.
\begin{assumption}\label{asm:random}
For any iteration~$t\geq 1$, given~$a_{1:t}, y_{1:t}$, the RKHS norms of the random RKHS functions~$\|\rho_{t,j}\|_k, j \in \{1, \ldots, \mathrm m\}$, and the RKHS norm of the reward function~$\|f\|_k$ are i.i.d.\ samples from the same---potentially unknown---probability space.
\end{assumption}


We discuss Assumption~\ref{asm:random} in Section~\ref{sec:limitations} and in Appendix~\ref{app:assumption_rebuttal}.
The following theorem is our main theoretical contribution and   proves~$B_t\geq\|f\|_k $ with high probability.
Specifically, it shows that~$B_t\geq\|f\|_k$ is probably approximately correct (PAC) \citep{Shwartz2014Understanding}. 
\begin{theorem} [RKHS norm over-estimation]\label{th:RKHS_scenario}
Given Assumptions~\ref{asm:chow} and~\ref{asm:random}, for any iteration~$t\geq 1$,~$\gamma,\kappa\in(0,1)$, and $m\in\mathbb{N}$ such that $(1-\gamma)^{m-1}(1+\gamma(m-1))\leq\kappa$, consider~$B_t$ returned by Algorithm~\ref{alg:PAC}.
With confidence at least~$1-\kappa$, we have~$B_t\geq \|f\|_k$ with probability at least~$1-\gamma$.
\end{theorem}
\begin{proof}(Idea)
We formulate the RKHS norm over-estimation as a chance-constrained optimization problem, which we solve using a sampling-and-discarding scenario approach. 
We obtain PAC bounds by leveraging Theorem~2.1 by \cite{campi2011sampling}.
We provide a detailed proof in Appendix~\ref{proof:RKHS_scenario}.    
\end{proof}

The following corollary lifts Theorem~\ref{th:RKHS_scenario} to hold jointly for all iterations~$t\geq 1.$
\begin{corollary}[Lifting Theorem~\ref{th:RKHS_scenario} to all iterations]\label{co:RKHS_stop}
Under the assumptions of Theorem~\ref{th:RKHS_scenario}, receive~$B_t$ from Algorithm~\ref{alg:PAC} at all iterations~$t$.
Then, with confidence at least~$1-\kappa$,~$B_t$ over-estimates the ground truth RKHS norm~$\|f\|_k$ jointly for all iterations~$t\geq 1$ with probability at least~$1-\gamma$.
\end{corollary}
\begin{proof}(Idea)
First, we show that the discrete-time stochastic process~$\{B_t\}_{t=1}^T, T\in\mathbb N$, containing the PAC RKHS norms is a supermartingale.
Then, we use a standard stopping time criterion construction as in Theorem~1 by \cite{Abbasi2011Improved}.
We provide a detailed proof in Appendix~\ref{proof:RKHS_stop}.
\end{proof}

Next, we integrate the RKHS norm over-estimation into existing confidence intervals that contain the reward function~$f$ with high probability.

\begin{theorem}[Confidence intervals] \label{th:error_bound}
Under the same assumptions as those of Corollary~\ref{co:RKHS_stop},  let~$B_t$ be returned by Algorithm~\ref{alg:PAC} $\forall t\geq 1$ with~$\kappa,\gamma\in(0,1)$.
Moreover, define~$Q_t(a)$ as in~\eqref{eq:Q} with any~$\delta\in(0,1)$ and~$C_t\coloneqq C_{t-1}\cap Q_t$ with~$C_0=\mathbb R$.
Then, with confidence at least~$1-\kappa$, $f(a) \in  C_t(a)$ holds jointly for all~$a \in \domain$ and for all~$t\geq 1$ with probability at least~$(1-\gamma)(1-\delta)$.
\end{theorem}
\begin{proof}(Idea)
We use classic confidence intervals by \cite{abbasi2013online}.
Then, we combine this result with the PAC RKHS norm over-estimation from Corollary~\ref{co:RKHS_stop} by constructing a product probability space.
We provide a detailed proof in Appendix~\ref{proof:error_bound}.
\end{proof}

Finally, we prove safety of the proposed safe BO algorithm with RKHS norm over-estimation.
\begin{theorem}[Safety] \label{th:safety}
Under the same assumptions as those of Theorem~\ref{th:error_bound}, initialize Algorithm~\ref{alg:global_safe_BO} with a safe set~$S_0\neq\emptyset: f(a)\geq h \; \forall a \in S_0$.
Then, with confidence at least~$1-\kappa$,
$
f(a_t) \geq h
$
jointly $\forall t\geq 1$
with probability at least~$(1-\gamma)(1-\delta)$ when running Algorithm~\ref{alg:global_safe_BO}.
\end{theorem}
\begin{proof}(Idea)
The proof is similar to the proof of Theorem~1 by \cite{sui2015safe}.
However, we replace the Lipschitz continuity therein with an RKHS norm induced continuity from Proposition~3.1 by \cite{Fiedler2023Lipschitz} using the (semi)metric~\eqref{eq:semi-metric}.
Then, we combine the confidence intervals from Theorem~\ref{th:error_bound} with the definition of the safe set to prove that all~$a\in S_t$ are safe with high probability.
We provide a detailed proof in Appendix~\ref{proof:safety}.
\end{proof}

\subsection{Locality} \label{sec:locality}
Thus far, we proposed a safe BO algorithm with theoretical guarantees.
At its heart lies the \emph{data-driven computation of the RKHS norm}, which is required to, \eg compute the safe set~\eqref{eq:safe_set}.
The definition of the safe set implies that the algorithm explores in a neighborhood of already collected samples.
Thus, we may not achieve the high sampling density on the entire parameter space that we would, following Figure~\ref{fig:random_RKHS_functions}, desire for a \emph{tight} RKHS norm over-estimation.
However, as we restrict exploration to the safe subset~$S_t$ of the parameter space~$\domain$, estimating the RKHS norm on~$\domain\setminus S_t$ is superfluous.
Actually, it is precisely in unsafe areas where we expect non-smooth behavior and, hence, large RKHS norms.\footnote{%
Let the reward be the distance to an equilibrium and consider the system $x_{k+1}=ax_k$. 
The system is stable at the equilibrium for, \eg $a=0.9999$ but moves away exponentially from the equilibrium for, \eg $a=1.0001$.
Therefore, a small change in parameter~$a$ causes a significant change in the reward and thus a large local RKHS norm.
}
Thus, considering even the \emph{true} global RKHS norm may yield overly conservative exploration, as also reported by \cite{fiedler2024safety}.
Therefore, we use local RKHS norms---inspired by local Lipschitz methods \citep{jordan2020exactly}---to execute safe BO on sub-domains while inheriting the theoretical guarantees derived for Algorithm~\ref{alg:global_safe_BO}.

Algorithm~\ref{alg:local_safe_BO} summarizes the proposed localized safe BO algorithm with the data-driven RKHS norm over-estimation.
We adopt an \emph{adaptive} notion of locality by forming uniform local cubes around each sample~$a\in a_{1:t}$.
Specifically, we define~$N$ local cubes of width~$(1,\ldots,\mathrm N)\cdot \Delta$ around each sample with hyperparameter~$\Delta>0$.
Besides the local cubes, we preserve the global domain~$\domain$ and naturally recover Algorithm~\ref{alg:global_safe_BO} by setting~$N=0$.
We introduce the notation~$\mathcal C_t\coloneqq \{0,\ldots, t\cdot N\}$ as the set of integers labeling the local cubes and the global domain and use the integer~$c\in \mathcal C_t$ to refer to each object.
Figure~\ref{fig:local_cubes} illustrates the structure of the local cubes.
At each iteration~$t$ and local cube~$c\in\mathcal C_t$, we compute the local RKHS norm and determine a candidate parameter with~\eqref{eq:acquisition}.
We choose the parameter for the next experiment as the most uncertain candidate parameter among all cubes.

\begin{figure}[t]
\centering
\begin{tikzpicture}
\pgfmathsetmacro{\dist}{0.6}
\pgfmathsetmacro{\twodist}{1.2}
\pgfmathsetmacro{\threedist}{1.8}
\pgfmathsetmacro{\fourdist}{2.4}

    \draw[thick] (0,0) rectangle (3.75,3.75) node[anchor=north east] {$c_0$};

    \coordinate (A) at (1.1,2.5);
    \node[fill=gray, circle, inner sep=1pt] at (A) {};
    \node[gray, anchor=south] at (A) {\scriptsize{$a_1$}};

    \draw[aaltoBlue, thick] (A) ++(-\dist/2,-\dist/2) rectangle ++(\dist,\dist);
    \draw[aaltoBlue, thick] (A) ++(-\dist,-\dist) rectangle ++(\twodist, \twodist);
    \draw[aaltoBlue, thick] (A) ++(-\threedist/2,-\threedist/2) rectangle ++(\threedist,\threedist);
    \node[aaltoBlue, anchor=south] at ($(A) + (0, \dist/2)$) {\scriptsize{$c_1$}}; 
\node[aaltoBlue, anchor=south] at ($(A) + (0, \dist)$) {\scriptsize{$c_2$}}; 
\node[aaltoBlue, anchor=south] at ($(A) + (0, \threedist/2)$) {\scriptsize{$c_3$}}; 
    
    \coordinate (B) at (2.75, 1.5);
    \node[fill=gray, circle, inner sep=1pt] at (B) {};
    \node[gray,anchor=south] at (B) {\scriptsize{$a_2$}};

    \draw[aaltoBlue, thick] (B) ++(-\dist/2,-\dist/2) rectangle ++(\dist,\dist);
    \draw[aaltoBlue, thick] (B) ++(-\dist,-\dist) rectangle ++(\twodist, \twodist);
    \draw[aaltoBlue, thick] (B) ++(-\threedist/2,-\threedist/2) rectangle ++(\threedist,\threedist);
    \node[aaltoBlue, anchor=south] at ($(B) + (0, \dist/2)$) {\scriptsize{$c_4$}}; 
\node[aaltoBlue, anchor=south] at ($(B) + (0, \dist)$) {\scriptsize{$c_5$}}; 
\node[aaltoBlue, anchor=south] at ($(B) + (0, \threedist/2)$) {\scriptsize{$c_6$}}; 


\end{tikzpicture}
\caption{\emph{Exemplary structure of the local cubes at iteration~$t=2$.}
Each sample $a_1,a_2$ is the center of~$N=3$ local cubes of edge lengths~$\Delta, 2\Delta$, and~$3\Delta$, respectively.
The global domain~$c_0$ is depicted in black while the local cubes~$c_1,\ldots, c_6$ are illustrated in blue.
}
\label{fig:local_cubes}
\end{figure}
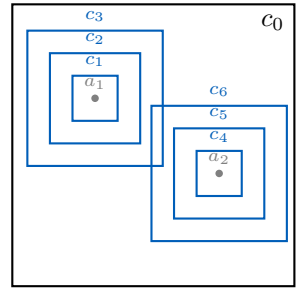

\begin{algorithm}
\begin{algorithmic}[1]   
\Require $k$, $\domain$, $S_0$, $\delta$, $\kappa$, $\gamma$ $m$, $\sigma$, $\Delta$, $N$
\State Init: $a_{1}, y_{1}$ samples corresponding to~$S_0$, $B_0=\infty$
\For{$t=1,2,\ldots$}
\State Compute~$\mathcal C_t$ given~$t$ and~$N$
\For{$c \in \mathcal C_t$} \Comment{Iterate through sub-domains}
\State Determine~$\domain_c\subseteq\domain$, $a_{1:t,c}\subseteq \domain_c$, and~$y_{1:t,c} \subseteq \phantom{forfor} y_{1:t}$ given~$c$ and~$\Delta$
\State $a_{t+1,c}, \,\omega_{t,c}(a_{t+1,c})\gets $ Algorithm~\ref{alg:acquisition}
\EndFor
\State $a_{t+1}\gets \argmax_{a_{t+1,c}, c\in\mathcal C_t} \omega_{t,c}(a_{t+1,c})$
\State  $y_{t+1} \gets  f(a_{t+1})+\epsilon_{t+1}$ \Comment{Conduct experiments} 
\EndFor
\State \Return Best safely evaluable parameter
\end{algorithmic}
\caption{Localized safe BO algorithm with PAC RKHS norm over-estimation}
\label{alg:local_safe_BO}
\end{algorithm}

Besides exploration benefits, the localized approach significantly improves the scalability of discretized BO algorithms like \safeopt.
These discretized BO algorithms suffer from the curse of dimensionality since either the computational and memory complexities grow exponentially or we must accept a coarser discretization; the latter implying exponentially growing distances between the samples, in the worst case causing an empty safe set.
The localized approach sequentially loops through each local cube when acquiring the next sample.
This enables separate discretization in each local cube, which increases the discretization density and, therefore, simplifies exploration.

The following corollary formally states the inherited theoretical guarantees of Algorithm~\ref{alg:local_safe_BO}.

\begin{corollary}[Localized safe BO]
Choose any~$N\in\mathbb{N}$, any~$\Delta>0$,
consider any~$t\geq 1$, and any~$c\in\mathcal C_t$.
Define $f_c{:}\;\domain_c\subseteq\domain\rightarrow\mathbb R$, $f_c(a)=f(a)$ for all~$a\in\domain_c$ and assume that~$f_c\in H_k$, \ie $\|f_c\|_k<\infty$.
Moreover, let all assumptions of Theorems~\ref{th:RKHS_scenario}-\ref{th:safety} and Corollary~\ref{co:RKHS_stop} hold for~$f_c$.
Then, the results from Theorems~\ref{th:RKHS_scenario}-\ref{th:safety} and Corollary~\ref{co:RKHS_stop} directly apply for the local reward functions~$f_c$ and Algorithm~\ref{alg:local_safe_BO}.
\end{corollary}
\begin{proof}
    Instead of deriving the mathematical statements only for the function~$f$ on the global domain~$\domain$, they are derived for~$f_c$ on~$\domain_c$ for all~$c\in\mathcal C_t$ at any iteration~$t\geq 1$.
    Since Algorithm~\ref{alg:local_safe_BO} only samples from the corresponding safe sets, safety directly follows from Theorem~\ref{th:safety}.
\end{proof}

\section{RELATED WORK}\label{sec:related_work}
Next, we relate our safe BO algorithm with RKHS norm over-estimation to the state-of-the-art.

\safeopt\ \citep{sui2015safe} and its extensions \citep{berkenkamp2016safe, sui2018stagewise} require an upper bound on the RKHS norm of the unknown reward function to prove safety with high probability.
The impracticability of this assumption has been addressed by \cite{fiedler2024safety} by proposing an algorithm similar to \safeopt, which instead relies on a priori upper bounds on \emph{(i)} the noise and \emph{(ii)} the Lipschitz constant of the unknown reward function; both of which are unknown and estimating the Lipschitz constant is similarly nontrivial as estimating RKHS norms. 
\cite{berkenkamp2019no} investigate BO with unknown hyperparameters by decreasing the length scale and increasing the RKHS norm to construct an iteratively richer RKHS that will eventually contain the ground truth.
However, they do not provide safety nor guarantee \emph{when} the RKHS contains the function.

Only a few works tackle the RKHS norm estimation.
\cite{Hashimoto2020Nonlinear} and \cite{Scharnhorst2022Robust} observe that the RKHS norm of the approximating function under-estimates the RKHS norm of the ground truth.
Nevertheless, for safety guarantees, we require an over-estimation.
Based on these works, \cite{Tokmak2023Arxiv} propose a simple RKHS norm extrapolation, which empirically results in an upper bound, however, without statistical guarantees and in a noise-free setting with equidistant samples.
Moreover, \cite{karvonen2022error} empirically estimates a computable error bound by using the RKHS norm of the GP mean, similar to the idea presented in Equation~(10) by \cite{fasshauer2011positive}.
Both works do not provide any guarantees.

The only other work we are aware of that develops a safe BO algorithm capable of estimating the RKHS norm with theoretical guarantees is \cite{tokmak2024pacsbo}. 
In contrast, we prove safety guarantees for the resulting safe BO algorithm (Theorem~\ref{th:safety}) instead of only providing statistical guarantees for the RKHS norm over-estimation.
Moreover, we obtain a tighter RKHS norm over-estimation by using a sampling-and-discarding scenario approach instead of Hoeffding's inequality.
We compare the tightness in Appendix~\ref{app:tightness}.
Further, our algorithm is \emph{significantly} more scalable by using an \emph{adaptive} notion of locality, allowing for separate discretization in each sub-domain.
We elaborate on the improved scalability through adaptive locality in Appendix~\ref{app:adaptive_locality}.
Finally, we work under less restricting and more interpretable assumptions as discussed in Appendix~\ref{app:assumption_rebuttal}.

\section{EXPERIMENTS}\label{sec:experiments}
In this section, we first numerically investigate the scenario approach.
Then, we evaluate Algorithm~\ref{alg:local_safe_BO}  and compare it with \safeopt.
Specifically, we illustrate the impact of estimating the RKHS norm instead of randomly guessing it in a one-dimensional toy experiment before comparing both algorithms on challenging RL benchmarks.
Finally, we demonstrate the practicability of our algorithm by optimizing a controller for a real Furuta pendulum \citep{furuta1992swing}.
All experiments were conducted with hyperparameters~$\sigma=10^{-2}, \delta=10^{-2}, \gamma=10^{-1},\kappa=10^{-2}, \overline\alpha=1$, $m=1000$, and~$\hat N_c=\max\{500 \mathrm{width}(\domain_c), t+10\}$.
Moreover, we shift and normalize the domains to yield~$\domain=[0,1]^n$ and use the Matérn32 kernel with~$\ell=0.1$ unless stated otherwise.

\paragraph{Scenario approach}
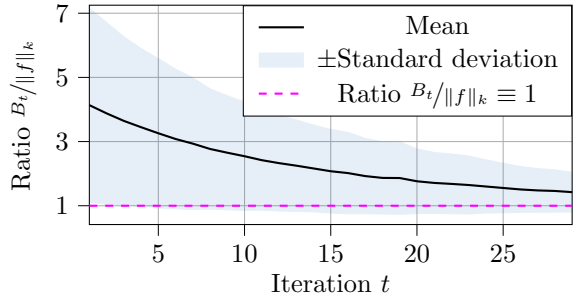
\begin{figure}
\centering
\begin{tikzpicture}

\definecolor{darkgray176}{RGB}{176,176,176}
\definecolor{lightblue}{RGB}{173,216,230}
\definecolor{magenta}{RGB}{255,0,255}
\pgfplotsset{
every axis legend/.append style={
at={(1, 1)},
anchor=north east,
},
}
\begin{axis}[
tick align=outside,
tick pos=left,
x grid style={darkgray176},
xlabel={Iteration~$t$},
xmin=1, xmax=29,
xtick style={color=black},
y grid style={darkgray176},
ylabel={Ratio $\nicefrac{B_t}{\|f\|_k}$},
width=8cm,
ytick={1,3,5,7},
    ylabel style={yshift=-1em}, 
height=4.5cm,
ymin=0.413651417952565, ymax=7.25,
ytick style={color=black},
grid=both,
]
\path [draw=aaltoBlue, fill=aaltoBlue, opacity=0.1]
(axis cs:1,7.16042221773556)
--(axis cs:1,1.11009610363831)
--(axis cs:2,1.04732360078726)
--(axis cs:3,1.02587165505926)
--(axis cs:4,0.994059129846926)
--(axis cs:5,0.932517168892915)
--(axis cs:6,0.904504013881886)
--(axis cs:7,0.883046131843942)
--(axis cs:8,0.894937838484696)
--(axis cs:9,0.875699410160979)
--(axis cs:10,0.853498080819181)
--(axis cs:11,0.863147594126054)
--(axis cs:12,0.836740533911606)
--(axis cs:13,0.822699976747126)
--(axis cs:14,0.811496583139754)
--(axis cs:15,0.773307223712149)
--(axis cs:16,0.748951765475134)
--(axis cs:17,0.749104874239569)
--(axis cs:18,0.741069823198915)
--(axis cs:19,0.734926217942231)
--(axis cs:20,0.756894114747096)
--(axis cs:21,0.761898568595918)
--(axis cs:22,0.750094977200294)
--(axis cs:23,0.741621440864143)
--(axis cs:24,0.772522406499804)
--(axis cs:25,0.781942034559374)
--(axis cs:26,0.794217728755342)
--(axis cs:27,0.79628322159567)
--(axis cs:28,0.801476569056672)
--(axis cs:29,0.806597437972088)
--(axis cs:29,2.03996196341065)
--(axis cs:29,2.03996196341065)
--(axis cs:28,2.11973907297098)
--(axis cs:27,2.15792859631459)
--(axis cs:26,2.2215380525058)
--(axis cs:25,2.32472189673878)
--(axis cs:24,2.428861488062)
--(axis cs:23,2.54970633150165)
--(axis cs:22,2.61083346062668)
--(axis cs:21,2.65971878985793)
--(axis cs:20,2.76781176781792)
--(axis cs:19,2.99165646486777)
--(axis cs:18,2.99344417126849)
--(axis cs:17,3.09900117081778)
--(axis cs:16,3.2863403561777)
--(axis cs:15,3.37686349440062)
--(axis cs:14,3.51268914102317)
--(axis cs:13,3.67535672630223)
--(axis cs:12,3.8187688269661)
--(axis cs:11,3.97733220363973)
--(axis cs:10,4.23229297661537)
--(axis cs:9,4.42951837690876)
--(axis cs:8,4.64375168891368)
--(axis cs:7,4.98797824195428)
--(axis cs:6,5.25603562071026)
--(axis cs:5,5.58495267743436)
--(axis cs:4,5.90137138916865)
--(axis cs:3,6.26267280883328)
--(axis cs:2,6.70994751599485)
--(axis cs:1,7.16042221773556)
--cycle;


\addplot [thick, black]
table {%
1 4.13525915145874
2 3.87863564491272
3 3.6442723274231
4 3.44771528244019
5 3.25873494148254
6 3.0802698135376
7 2.93551230430603
8 2.76934480667114
9 2.65260887145996
10 2.54289555549622
11 2.42023992538452
12 2.32775473594666
13 2.24902844429016
15 2.07508540153503
16 2.01764607429504
17 1.92405307292938
18 1.86725699901581
19 1.86329138278961
20 1.76235294342041
21 1.710808634758
22 1.68046426773071
23 1.64566385746002
25 1.55333197116852
26 1.50787794589996
27 1.47710585594177
28 1.4606077671051
29 1.42327964305878
};

\addplot [line width=0.2cm, draw=aaltoBlue, fill=aaltoBlue, opacity=0.1]
table {%
-2 4.13525915145874
-1 3.87863564491272
};

\addplot [thick, magenta, dashed]
table {%
1 1
29 1
};
\legend{Mean, $\pm $Standard deviation, Ratio $\nicefrac{B_t}{\|f\|_k}\equiv 1$}
\end{axis}

\end{tikzpicture}
                \caption{\emph{Numerical investigation of the RKHS norm over-estimation.} For an increasing sample size, we receive tighter bounds.
    }
            \label{fig:numerical_investigation}
\end{figure}
To test Corollary~\ref{co:RKHS_stop}, we create 200 RKHS functions with RKHS norms sampled uniformly from~$[1,10]$.
We sample the number of center points for each RKHS function uniformly from~$[100,1000]$, and scale the corresponding coefficients~$\alpha$ to satisfy the pre-determined~$\|f\|_k$. 
At each iteration, we compute the over-estimations~$B_t$ using Algorithm~\ref{alg:PAC} for each RKHS function~$f$ and append a new parameter sampled uniformly from~$\domain$.
We present the numerical investigation in Figure~\ref{fig:numerical_investigation}.
As already discussed in Section~\ref{sec:RKHS}, we see that the RKHS norm over-estimation gets tighter for an increasing sample set, supporting the sensibility of the proposed RKHS norm over-estimation.
Crucially, in only two out of 200 cases did Algorithm~\ref{alg:PAC} under-estimate the RKHS norm.
As we chose~$\gamma=10^{-1}$ and $\kappa=10^{-2}$, this is well within the guaranteed range specified in Corollary~\ref{co:RKHS_stop}.
Moreover, we investigate the required computation time for the scenario approach in Appendix~\ref{app:scenario_time} and conduct an ablation study in Appendix~\ref{app:ablation_scenario}.


\paragraph{Numerical experiments}
To illustrate the benefits of our algorithm compared to \safeopt, we let both maximize a synthetic function~$f\in H_k$ generated with 1000 random center points~$x$ and coefficients~$\alpha$ scaled to yield~$\|f\|_k=5$, which we present in Figure~\ref{fig:1D_toy}.
For \safeopt, we perform two runs, one with an over-estimation ($B_t\equiv 25$, center) and one with an under-estimation ($B_t\equiv 1$, right) of the RKHS norm. 
The former yields conservative exploration (crucially, it does not find the optimum within the given number of iterations), while the latter incurs failures (red crosses).
In contrast, our algorithm (left) stays safe and finds the optimum.
For Algorithm~\ref{alg:local_safe_BO}, we used~$N=5$ and~$\Delta = 0.1$.
We provide ablation studies with different kernels and different locality parameters in Appendices~\ref{app:numerical_ablation} and~\ref{app:locality_ablation}, respectively.


\begin{figure*}[t]
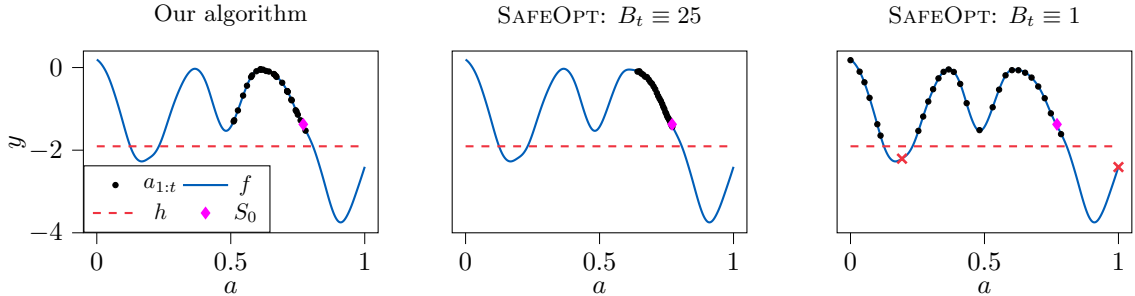

    \centering   
\begin{subfigure}{5.5cm}
        \input{final_figs/1D_toy/PACSBO.tex}
\end{subfigure}
\begin{subfigure}{5cm}
        \input{final_figs/1D_toy/SafeOpt_over-estimation}
\end{subfigure}
\begin{subfigure}{5cm}
        \input{final_figs/1D_toy/SafeOpt_under-estimation}
\end{subfigure}
        \caption{
\emph{Toy example to compare Algorithm~\ref{alg:local_safe_BO} with \safeopt.}
Algorithm~\ref{alg:local_safe_BO} (left) explores the domain and stays safe, while \safeopt\ is either too conservative (center) or samples unsafely (right).
}
     \label{fig:1D_toy}
\end{figure*}

\paragraph{RL benchmarks}
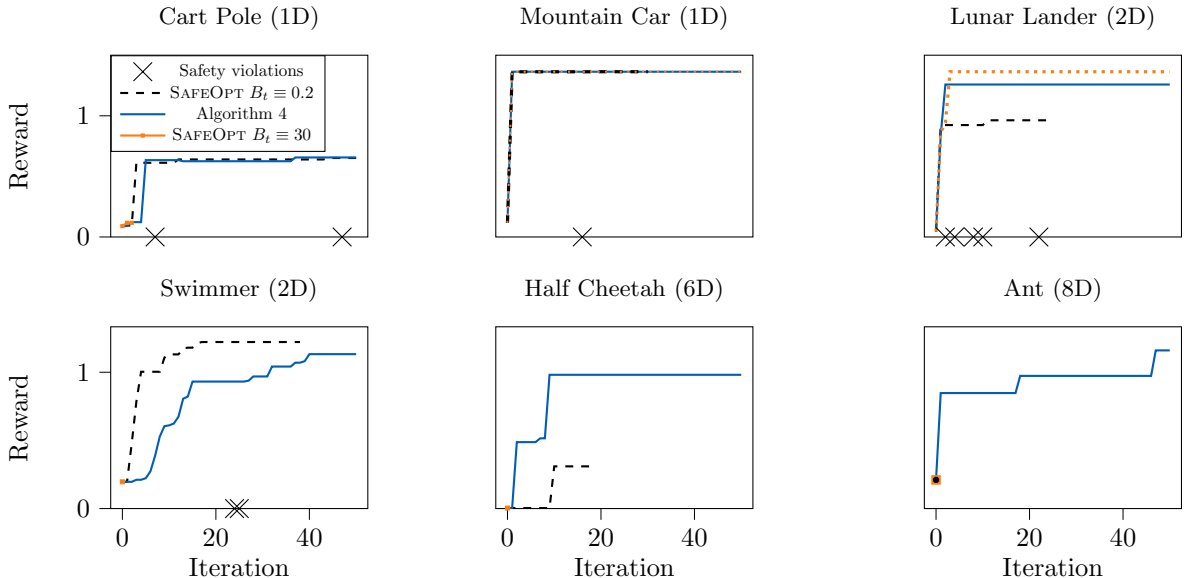
\begin{figure*}
    \begin{subfigure}{5.75cm}
\begin{tikzpicture}

\definecolor{darkgray176}{RGB}{176,176,176}
\definecolor{darkorange25512714}{RGB}{255,127,14}
\definecolor{forestgreen4416044}{RGB}{44,160,44}
\definecolor{lightgray204}{RGB}{204,204,204}
\definecolor{steelblue31119180}{RGB}{31,119,180}
\pgfplotsset{
every axis legend/.append style={
at={(0, 1)},
anchor=north west,
},
}

\begin{axis}[
tick align=outside,
tick pos=left,
x grid style={darkgray176},
xmin=-2.5, xmax=52.5,
xtick style={color=black},
y grid style={darkgray176},
ymin=0.5, ymax=0.8,
height=4cm,
width=5cm,
title={\small{Cart Pole (1D)}},
ytick style={color=black},
ylabel={Reward},
xtick style={draw=none},
legend style={nodes={scale=0.65}, legend columns=1}, 
xticklabels={,,},
ytick={0.5,0.7},
yticklabels={0, 1},
]
\addplot [draw=black, fill=black, mark=x, only marks, mark size=5]
table {%
7 0.5
47 0.5
};
\addplot [thick, black, dashed]
table {%
0 0.51800549030304
1 0.518889784812927
2 0.518889784812927
3 0.622481942176819
4 0.622481942176819
5 0.622481942176819
6 0.622481942176819
7 0.622481942176819
8 0.622481942176819
9 0.622481942176819
10 0.622481942176819
11 0.622481942176819
12 0.627945959568024
13 0.627945959568024
14 0.627945959568024
15 0.627945959568024
16 0.627945959568024
17 0.627945959568024
18 0.627945959568024
19 0.627945959568024
20 0.627945959568024
21 0.627945959568024
22 0.627945959568024
23 0.627945959568024
24 0.627945959568024
25 0.627945959568024
26 0.627945959568024
27 0.627945959568024
28 0.627945959568024
29 0.627945959568024
30 0.627945959568024
31 0.627945959568024
32 0.627945959568024
33 0.627945959568024
34 0.627945959568024
35 0.627945959568024
36 0.627945959568024
37 0.627945959568024
38 0.627945959568024
39 0.627945959568024
40 0.627945959568024
41 0.627945959568024
42 0.627945959568024
43 0.627945959568024
44 0.630361199378967
45 0.630361199378967
46 0.630361199378967
47 0.630361199378967
48 0.630361199378967
49 0.630361199378967
50 0.630361199378967
};

\addplot [thick, aaltoBlue]
table {%
0 0.51800549030304
1 0.51800549030304
2 0.524470567703247
3 0.524470567703247
4 0.524470567703247
5 0.626824259757996
6 0.626824259757996
7 0.626824259757996
8 0.626824259757996
9 0.626824259757996
10 0.626824259757996
11 0.626824259757996
12 0.626824259757996
13 0.625010430812836
14 0.625010430812836
15 0.625010430812836
16 0.625010430812836
17 0.625010430812836
18 0.625010430812836
19 0.625010430812836
20 0.625010430812836
21 0.625010430812836
22 0.625010430812836
23 0.625010430812836
24 0.625010430812836
25 0.625010430812836
26 0.625010430812836
27 0.625010430812836
28 0.625010430812836
29 0.625010430812836
30 0.625010430812836
31 0.625010430812836
32 0.625010430812836
33 0.625010430812836
34 0.625010430812836
35 0.625010430812836
36 0.625010430812836
37 0.631279170513153
38 0.631279170513153
39 0.631279170513153
40 0.631279170513153
41 0.631279170513153
42 0.631279170513153
43 0.631279170513153
44 0.631279170513153
45 0.631279170513153
46 0.631279170513153
47 0.631279170513153
48 0.631279170513153
49 0.631279170513153
50 0.631279170513153
};

\addplot [thick, darkorange25512714, mark=square*, mark size=0.4]
table {%
0 0.51800549030304
1 0.523368537425995
2 0.523368537425995
};
\addplot [semithick, red, dashed]
table {%
0 0
1 0
2 0
3 0
4 0
5 0
6 0
7 0
8 0
9 0
10 0
11 0
12 0
13 0
14 0
15 0
16 0
17 0
18 0
19 0
20 0
21 0
22 0
23 0
24 0
25 0
26 0
27 0
28 0
29 0
30 0
31 0
32 0
33 0
34 0
35 0
36 0
37 0
38 0
39 0
40 0
41 0
42 0
43 0
44 0
45 0
46 0
47 0
48 0
49 0
50 0
};
\legend{Safety violations, \safeopt\ $B_t\equiv 0.2$,Algorithm~4, \safeopt\ $B_t\equiv 30$}
\end{axis}
\end{tikzpicture}
    \end{subfigure}
        \begin{subfigure}{5.25cm}
\begin{tikzpicture}

\definecolor{darkgray176}{RGB}{176,176,176}
\definecolor{darkorange25512714}{RGB}{255,127,14}
\definecolor{forestgreen4416044}{RGB}{44,160,44}
\definecolor{lightgray204}{RGB}{204,204,204}
\definecolor{steelblue31119180}{RGB}{31,119,180}

\begin{axis}[
legend cell align={left},
legend style={
  fill opacity=0.8,
  draw opacity=1,
  text opacity=1,
  at={(0.5,0.5)},
  anchor=center,
  draw=lightgray204
},
tick align=outside,
tick pos=left,
x grid style={darkgray176},
xmin=-2.5, xmax=52.5,
y grid style={darkgray176},
ymin = 0.5,
height=4cm,
width=5cm,
title={\small{Mountain Car (1D)}},
xtick style={draw=none},
yticklabels={,,},
xticklabels={,,},
ytick style={draw=none}
]
\addplot [thick, aaltoBlue]
table {%
0 0.555805385112762
1 1.15865695476532
50 1.15865695476532
};
\addplot [draw=black, fill=black, mark=x, only marks, mark size=5]
table {%
16 0.5
};
\addplot [very thick, black, dashed]
table {%
0 0.555805385112762
1 1.15841126441956
19 1.15841126441956
20 1.15865516662598
30 1.15865516662598
};


\addplot [thick, darkorange25512714, dotted]
table {%
0 0.555805385112762
1 1.15867257118225
50 1.15867257118225
};
\addplot [semithick, red, dashed]
table {%
0 0
1 0
2 0
3 0
4 0
5 0
6 0
7 0
8 0
9 0
10 0
11 0
12 0
13 0
14 0
15 0
16 0
17 0
18 0
19 0
20 0
21 0
22 0
23 0
24 0
25 0
26 0
27 0
28 0
29 0
30 0
31 0
32 0
33 0
34 0
35 0
36 0
37 0
38 0
39 0
40 0
41 0
42 0
43 0
44 0
45 0
46 0
47 0
48 0
49 0
50 0
};
\end{axis}

\end{tikzpicture}
    \end{subfigure}
        \begin{subfigure}{5.25cm}
\begin{tikzpicture}

\definecolor{darkgray176}{RGB}{176,176,176}
\definecolor{darkorange25512714}{RGB}{255,127,14}
\definecolor{forestgreen4416044}{RGB}{44,160,44}
\definecolor{lightgray204}{RGB}{204,204,204}
\definecolor{steelblue31119180}{RGB}{31,119,180}

\begin{axis}[
legend cell align={left},
legend style={
  fill opacity=0.8,
  draw opacity=1,
  text opacity=1,
  at={(0.5,0.5)},
  anchor=center,
  draw=lightgray204
},
tick align=outside,
tick pos=left,
x grid style={darkgray176},
xmin=-2.5, xmax=52.5,
xtick style={color=black},
y grid style={darkgray176},
ymin=0.8,
height=4cm,
width=5cm,
title={\small{Lunar Lander (2D)}},
ytick style={color=black},
xtick style={draw=none},
ytick style={draw=none},
yticklabels={,,},
xticklabels={,,},
]
\addplot [draw=black, fill=black, mark=x, only marks, mark size=5]
table {%
2 0.8
4 0.8
8 0.8
10 0.8
22 0.8
};
\addplot [thick, black, dashed]
table {%
0 0.830873966217041
1 1.44357216358185
10 1.44357216358185
11 1.47105014324188
24 1.47105014324188
};

\addplot [thick, aaltoBlue]
table {%
0 0.830873966217041
1 1.38639044761658
2 1.67674517631531
50 1.67674517631531
};

\addplot [very thick, darkorange25512714, dotted]
table {%
0 0.830873966217041
1 1.4008367061615
2 1.45636069774628
3 1.75027096271515
50 1.75027096271515
};
\addplot [semithick, red, dashed]
table {%
0 0
1 0
2 0
3 0
4 0
5 0
6 0
7 0
8 0
9 0
10 0
11 0
12 0
13 0
14 0
15 0
16 0
17 0
18 0
19 0
20 0
21 0
22 0
23 0
24 0
25 0
26 0
27 0
28 0
29 0
30 0
31 0
32 0
33 0
34 0
35 0
36 0
37 0
38 0
39 0
40 0
41 0
42 0
43 0
44 0
45 0
46 0
47 0
48 0
49 0
50 0
};
\end{axis}

\end{tikzpicture}
    \end{subfigure}
        \begin{subfigure}{5.75cm}
\begin{tikzpicture}

\definecolor{darkgray176}{RGB}{176,176,176}
\definecolor{darkorange25512714}{RGB}{255,127,14}
\definecolor{forestgreen4416044}{RGB}{44,160,44}
\definecolor{lightgray204}{RGB}{204,204,204}
\definecolor{steelblue31119180}{RGB}{31,119,180}

\begin{axis}[
legend cell align={left},
legend style={
  fill opacity=0.8,
  draw opacity=1,
  text opacity=1,
  at={(0.5,0.5)},
  anchor=center,
  draw=lightgray204
},
tick align=outside,
tick pos=left,
x grid style={darkgray176},
xmin=-2.5, xmax=52.5,
xtick style={color=black},
y grid style={darkgray176},
ymin=0, ymax=0.4,
height=4cm,
width=5cm,
ylabel={Reward},
title={\small{Swimmer (2D)}},
ytick style={color=black},
ytick={0,0.3},
yticklabels={0, 1},
xlabel={Iteration}
]
\addplot [draw=black, fill=black, mark=x, only marks, mark size=5]
table {%
24 0
25 0
};
\addplot  [thick, black, dashed]
table {%
0 0.0588392242789268
1 0.0588392242789268
2 0.143409386277199
3 0.235898897051811
4 0.301049441099167
8 0.301049441099167
9 0.332220822572708
10 0.339309751987457
12 0.339309751987457
13 0.352369576692581
14 0.354056477546692
15 0.354088634252548
16 0.363837331533432
17 0.366511911153793
38 0.366511911153793
};
\addplot [thick, aaltoBlue]
table {%
0 0.0588392242789268
2 0.0588392242789268
3 0.0633627325296402
4 0.0633627325296402
5 0.0668168142437935
6 0.0819131433963776
7 0.116209730505943
8 0.158422812819481
9 0.181379839777946
10 0.183094352483749
11 0.187168657779694
12 0.202136784791946
13 0.241676345467567
14 0.246660575270653
15 0.279466569423676
26 0.279466569423676
27 0.281439393758774
28 0.290706872940063
31 0.290706872940063
32 0.31252259016037
36 0.31252259016037
37 0.321031391620636
38 0.321031391620636
39 0.324390232563019
40 0.339804083108902
50 0.339804083108902
};
\addplot [draw=darkorange25512714, fill=darkorange25512714, mark=square*, only marks, mark size=0.75]
table {%
0 0.0588392242789268
};
\end{axis}

\end{tikzpicture}
    \end{subfigure}
    \hspace*{0.225cm}
        \begin{subfigure}{5.25cm}
\begin{tikzpicture}

\definecolor{darkgray176}{RGB}{176,176,176}
\definecolor{steelblue31119180}{RGB}{31,119,180}
\definecolor{darkorange25512714}{RGB}{255,127,14}
\begin{axis}[
tick align=outside,
tick pos=left,
x grid style={darkgray176},
xmin=-2.5, xmax=52.5,
xtick style={color=black},
width=5cm,
height=4cm,
title={\small{Half Cheetah (6D)}},
y grid style={darkgray176},
ymin=4.59, ymax=5.79306499958038,
ytick style={color=black},
xlabel={Iteration},
yticklabels={,,},
ytick style={draw=none}
]
\addplot [thick, black, dashed]
table {%
0 4.59316349029541
1 4.59316349029541
2 4.59316349029541
3 4.59316349029541
4 4.59316349029541
5 4.59316349029541
6 4.59316349029541
7 4.59316349029541
8 4.59316349029541
9 4.59316349029541
10 4.86924600601196
11 4.86924600601196
12 4.86924600601196
13 4.86924600601196
14 4.86924600601196
15 4.86924600601196
16 4.86924600601196
17 4.86924600601196
18 4.86924600601196
};
\addplot [draw=darkorange25512714, fill=darkorange25512714, mark=square*, only marks, mark size=0.75]
table {%
0 4.59316349029541
};
\addplot [thick, aaltoBlue]
table {%
0 4.59316349029541
1 4.59381103515625
2 5.0296049118042
3 5.0296049118042
4 5.0296049118042
5 5.0296049118042
6 5.0296049118042
7 5.05480051040649
8 5.05480051040649
9 5.47524642944336
10 5.47524642944336
11 5.47524642944336
12 5.47524642944336
13 5.47524642944336
14 5.47524642944336
15 5.47524642944336
16 5.47524642944336
17 5.47524642944336
18 5.47524642944336
19 5.47524642944336
20 5.47524642944336
21 5.47524642944336
22 5.47524642944336
23 5.47524642944336
24 5.47524642944336
25 5.47524642944336
26 5.47524642944336
27 5.47524642944336
28 5.47524642944336
29 5.47524642944336
30 5.47524642944336
31 5.47524642944336
32 5.47524642944336
33 5.47524642944336
34 5.47524642944336
35 5.47524642944336
36 5.47524642944336
37 5.47524642944336
38 5.47524642944336
39 5.47524642944336
40 5.47524642944336
41 5.47524642944336
42 5.47524642944336
43 5.47524642944336
44 5.47524642944336
45 5.47524642944336
46 5.47524642944336
47 5.47524642944336
48 5.47524642944336
49 5.47524642944336
50 5.47524642944336
};
\addplot [semithick, aaltoRed, dashed]
table {%
0 0
1 0
2 0
3 0
4 0
5 0
6 0
7 0
8 0
9 0
10 0
11 0
12 0
13 0
14 0
15 0
16 0
17 0
18 0
19 0
20 0
21 0
22 0
23 0
24 0
25 0
26 0
27 0
28 0
29 0
30 0
31 0
32 0
33 0
34 0
35 0
36 0
37 0
38 0
39 0
40 0
41 0
42 0
43 0
44 0
45 0
46 0
47 0
48 0
49 0
50 0
};
\end{axis}
\end{tikzpicture}
    \end{subfigure}
    \hspace*{0.225cm}
        \begin{subfigure}{5.25cm}
    \begin{tikzpicture}

\definecolor{darkgray176}{RGB}{176,176,176}
\definecolor{darkorange25512714}{RGB}{255,127,14}
\definecolor{forestgreen4416044}{RGB}{44,160,44}
\definecolor{lightgray204}{RGB}{204,204,204}
\definecolor{steelblue31119180}{RGB}{31,119,180}
\pgfplotsset{
every axis legend/.append style={
at={(1, 0)},
anchor=south east,
},
}
\begin{axis}[
tick align=outside,
tick pos=left,
x grid style={darkgray176},
xmin=-2.5, xmax=52.5,
xtick style={color=black},
y grid style={darkgray176},
ymin=0.6, ymax=1.3,
width=5cm,
height=4cm,
title={\small{Ant (8D)}},
ytick style={color=black},
xlabel={Iteration},
ytick style={draw=none},
yticklabels={,,},
]

\addplot [thick, aaltoBlue]
table {%
0 0.710347235202789
1 1.04483699798584
2 1.04483699798584
3 1.04483699798584
4 1.04483699798584
5 1.04483699798584
6 1.04483699798584
7 1.04483699798584
8 1.04483699798584
9 1.04483699798584
10 1.04483699798584
11 1.04483699798584
12 1.04483699798584
13 1.04483699798584
14 1.04483699798584
15 1.04483699798584
16 1.04483699798584
17 1.04483699798584
18 1.11092531681061
19 1.11092531681061
20 1.11092531681061
21 1.11092531681061
22 1.11092531681061
23 1.11092531681061
24 1.11092531681061
25 1.11092531681061
26 1.11092531681061
27 1.11092531681061
28 1.11092531681061
29 1.11092531681061
30 1.11092531681061
31 1.11092531681061
32 1.11092531681061
33 1.11092531681061
34 1.11092531681061
35 1.11092531681061
36 1.11092531681061
37 1.11092531681061
38 1.11092531681061
39 1.11092531681061
40 1.11092531681061
41 1.11092531681061
42 1.11092531681061
43 1.11092531681061
44 1.11092531681061
45 1.11092531681061
46 1.11092531681061
47 1.20910227298737
48 1.20910227298737
49 1.20910227298737
50 1.20910227298737
};

\addplot [semithick, darkorange25512714, mark=square*, mark size=1.5]  
table {%
0 0.710347235202789
};

\addplot [thick, black, mark=*, mark size=0.8]
table {%
0 0.710347235202789
};

\end{axis}

\end{tikzpicture}
    \end{subfigure}
        \caption{
        \emph{RL benchmarks.}
        We optimize SAC policies by learning an additive bias in a sim-to-real inspired setting.
        Algorithm~\ref{alg:local_safe_BO} exhibits better scalability, safety, and performance than \safeopt.
        We plot the maximum scaled reward encountered over iterations and mark violations of $h=0$ with crosses.}
        \label{fig:gym}
\end{figure*}
Next, we evaluate our algorithm and compare it to \safeopt\ in challenging simulation benchmarks.
In particular, we consider a sim-to-real setting, where no safety guarantees are required during simulation.
Thus, we train policies in simulation using the soft actor-critic (SAC) algorithm \citep{haarnoja2018soft, stable-baselines3}.
Those RL policies map from the states to the actions in~$\mathbb R^n$ for the cart pole ($n=1$), mountain car ($n=1$), swimmer ($n=2$), lunar lander ($n=2$), half cheetah ($n=6$), and ant ($n=8$) environments \citep{brockman2016openai,todorov2012mujoco}.
Then, to imitate real-world experiments, we manipulate the environments by, \eg adding a wind disturbance for the lunar lander; see Appendix~\ref{app:RL} for details.
Thus, the policies learned with SAC still provide a safe starting point but are not optimal anymore.
As we now must guarantee safety, we optimize these initial policies by learning an additive bias term~$b\in\mathbb R^n$ using Algorithm~\ref{alg:local_safe_BO} and \safeopt.
Figure~\ref{fig:gym} shows the rewards over iterations for the different environments. 
Algorithm~\ref{alg:local_safe_BO} stays safe and learns a bias that improves the reward for all environments.
For \safeopt, a small RKHS norm leads to frequent safety violations (black crosses), which, \eg correspond to the lunar lander crashing, whereas a large RKHS norm mostly yields conservative exploration or premature stopping.
Importantly, even \safeopt\ with a small RKHS norm fails to explore noticeably in the half cheetah and ant environments, which is due to the coarse discretization in high dimensions, whereas our method improves scalability by exploiting locality and successfully improves the reward.

\paragraph{Hardware experiment}
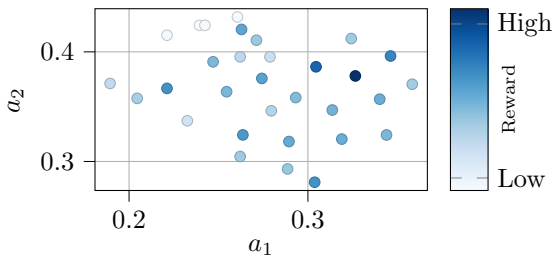
\begin{figure}
\centering
\begin{tikzpicture}

\definecolor{darkgray176}{RGB}{176,176,176}
\definecolor{magenta}{RGB}{255,0,255}
\pgfplotsset{
every axis legend/.append style={
at={(0, 0.2)},
anchor=south west,
},
}
\begin{axis}[
colorbar,
colorbar style={ytick={50,440},yticklabels={Low,High},ylabel={\scriptsize{Reward}},
ylabel style={rotate=0, yshift=-2cm},
},
colormap={mymap}{[1pt]
  rgb(0pt)=(0.968627450980392,0.984313725490196,1);
  rgb(1pt)=(0.870588235294118,0.92156862745098,0.968627450980392);
  rgb(2pt)=(0.776470588235294,0.858823529411765,0.937254901960784);
  rgb(3pt)=(0.619607843137255,0.792156862745098,0.882352941176471);
  rgb(4pt)=(0.419607843137255,0.682352941176471,0.83921568627451);
  rgb(5pt)=(0.258823529411765,0.572549019607843,0.776470588235294);
  rgb(6pt)=(0.129411764705882,0.443137254901961,0.709803921568627);
  rgb(7pt)=(0.0313725490196078,0.317647058823529,0.611764705882353);
  rgb(8pt)=(0.0313725490196078,0.188235294117647,0.419607843137255)
},
point meta max=478.656677246094,
point meta min=18.2441463470459,
tick align=outside,
tick pos=left,
x grid style={darkgray176},
xlabel={$a_1$},
xmin=0.180953337997198, xmax=0.366779953986406,
xtick style={color=black},
y grid style={darkgray176},
ylabel={$a_2$},
height=4cm,
width=6cm,
    ylabel style={yshift=-0.5em}, 
ymin=0.273522758483887, ymax=0.439356064796448,
ytick style={color=black},
ytick={0.3, 0.4},
xtick={0.2, 0.3},
grid=both
]

\addplot [
  colormap={mymap}{[1pt]
  rgb(0pt)=(0.968627450980392,0.984313725490196,1);
  rgb(1pt)=(0.870588235294118,0.92156862745098,0.968627450980392);
  rgb(2pt)=(0.776470588235294,0.858823529411765,0.937254901960784);
  rgb(3pt)=(0.619607843137255,0.792156862745098,0.882352941176471);
  rgb(4pt)=(0.419607843137255,0.682352941176471,0.83921568627451);
  rgb(5pt)=(0.258823529411765,0.572549019607843,0.776470588235294);
  rgb(6pt)=(0.129411764705882,0.443137254901961,0.709803921568627);
  rgb(7pt)=(0.0313725490196078,0.317647058823529,0.611764705882353);
  rgb(8pt)=(0.0313725490196078,0.188235294117647,0.419607843137255)
},
  only marks,
  scatter,
  scatter src=explicit
]
table [x=x, y=y, meta=colordata]{%
x  y  colordata
0.239399999380112 0.424199998378754 18.2441463470459
0.242424249649048 0.424242436885834 24.4306697845459
0.262878775596619 0.420454561710358 266.0604248046875
0.271212130784988 0.41060608625412 194.66806030273438
0.260606050491333 0.431818187236786 21.762081146240234
0.262121230363846 0.395454555749893 146.57666015625
0.278787881135941 0.395454525947571 118.1636962890625
0.246969684958458 0.390909075737 228.72743225097656
0.274242401123047 0.375757575035095 274.269775390625
0.254545480012894 0.363636374473572 232.5571746826172
0.279545485973358 0.346212118864059 159.85714721679688
0.293187886476517 0.358290910720825 221.17501831054688
0.221212148666382 0.415151536464691 26.14986228942871
0.221212103962898 0.366666704416275 305.3229675292969
0.23257577419281 0.337121248245239 108.15962982177734
0.204545438289642 0.35757577419281 179.0894317626953
0.189400002360344 0.371169686317444 139.3070068359375
0.3045454621315 0.386363655328751 384.4695739746094
0.326515167951584 0.378030270338058 478.65667724609375
0.313642412424088 0.346927285194397 238.38584899902344
0.34015154838562 0.356818228960037 250.29568481445312
0.263636380434036 0.324242442846298 293.6642761230469
0.289393931627274 0.318181782960892 256.39691162109375
0.324242442846298 0.41212123632431 186.3662567138672
0.346212148666382 0.396212100982666 326.84228515625
0.262121230363846 0.3045454621315 189.4579620361328
0.35833328962326 0.370454549789429 185.73330688476562
0.318945467472076 0.32041209936142 252.0086212158203
0.288636416196823 0.293181836605072 196.5853729248047
0.343945473432541 0.324200004339218 230.79847717285156
0.303787887096405 0.281060636043549 306.7078857421875
};
\end{axis}

\end{tikzpicture}
        \caption{\emph{Explored domain and rewards for the hardware experiment.}
    Algorithm~\ref{alg:local_safe_BO} safely optimizes the controller for a Furuta pendulum.
    }
            \label{fig:furuta}
\end{figure}

Lastly, we demonstrate the applicability of Algorithm~\ref{alg:local_safe_BO} to real-world systems by optimizing the balancing controller of a
Furuta pendulum~\cite{furuta1992swing}; see Appendix~\ref{app:hardware} for a visualization of the setup.
We consider a similar experimental setup as \cite{Baumann2021GO}, where the reward function corresponds to the control performance, and we tune the first two entries of a state-feedback controller.
We execute Algorithm~\ref{alg:local_safe_BO} with~$\ell=0.2$, $N=3, \Delta=0.15$ and we have
$S_0=[0.239, 0.424]^\top$.
After 30 iterations, we explored the domain to significantly improve the controller performance while only conducting safe experiments, as shown in the video and in Figure~\ref{fig:furuta}. 
This demonstrates that our algorithm is applicable to safety-critical real-world systems.

\section{LIMITATIONS}\label{sec:limitations}
In this section, we discuss the limitations of our contributions, specifically Assumption~\ref{asm:random}.
Assumption~\ref{asm:random} essentially states that~$f$ and~$\rho_{t,j}$ are i.i.d.\ samples from the same---potentially unknown---probability space.
However, in the frequentist setting,~$f$ is a fixed sample generated by \emph{nature's probability space}. 
Hence, even if we would be able to sample from the entire RKHS~$H_k$ (see Remark~\ref{re:limit}), in practice, we \emph{never} have access to 
nature's probability space and, thus, Assumption~\ref{asm:random} implies a sampling oracle. 
Hence, by generating random RKHS functions, we approximate that probability space and impose a \emph{prior} on~$f$.
Therefore, we essentially are in the Bayesian setting.
However, mixing both frequentist and Bayesian methods is fairly common \citep{bayarri2004interplay, baggio2022bayesian}.
Moreover, assuming an a priori tight upper bound on~$\|f\|_k$ \citep{sui2015safe} or assuming that the expected value of the RKHS norms of the random RKHS functions over-estimates~$\|f\|_k $ \citep{tokmak2024pacsbo} restricts nature's function space and also imposes prior knowledge on~$f$.
Also, we explain the mathematical meaning of Assumption~\ref{asm:random} in Appendix~\ref{app:assumption_rebuttal} and contrast it further to the assumptions made by \cite{sui2015safe} and \cite{tokmak2024pacsbo}.
In conclusion, we remove the \emph{a priori guess} on the RKHS norm by introducing Assumption~\ref{asm:random}, enabling us to incorporate data into the RKHS norm bound. 
Hence, we can cover a \emph{rich set of functions} and adjust the bounds as we gather more data (see Figures~\ref{fig:random_RKHS_functions} and~\ref{fig:numerical_investigation}), yielding reliable bounds in practice with random RKHS functions from sup- or sub-RKHSs of the RKHS of the ground truth.

\section{CONCLUSIONS}\label{sec:conclusion}
We presented a novel safe BO algorithm that learns an over-estimation of the RKHS norm from data, including statistical guarantees.
With that, it lifts the assumption of popular safe BO algorithms of knowing a tight upper bound on the RKHS norm a priori.
We further proved safety of the developed safe BO algorithm with RKHS norm over-estimation. 
The proposed algorithm was extended with an adaptive notion of locality and, thus, improved exploration and scalability. 
We demonstrated the benefits of our algorithm compared to \safeopt\ in simulation and showed that it can successfully handle real-world experiments.
Although we integrated the RKHS norm over-estimation and the locality into \safeopt, both can equally be integrated into any modification or extension thereof.
More importantly, we expect applications of the RKHS norm over-estimation to go beyond safe BO and open avenues for more realistic guarantees in general kernel-based methods or for estimating \eg Lipschitz constants with theoretical guarantees.
Future work includes proving optimality of Algorithm~\ref{alg:local_safe_BO}, investigating regret bounds, and disentangling the constraints from the reward function.
\section*{Acknowledgments}
We thank Harsha V.\ Guda for technical support.
We also wish to acknowledge CSC – IT Center for Science, Finland, for computational resources. 
This research was partially supported by \emph{Kjell och Märta Beijer Foundation}.
\bibliography{sample_paper}
\clearpage

 \begin{enumerate}

 \item For all models and algorithms presented, check if you include:
 \begin{enumerate}
   \item A clear description of the mathematical setting, assumptions, algorithm, and/or model. [Yes]
   \item An analysis of the properties and complexity (time, space, sample size) of any algorithm. [Yes]
   \item (Optional) Anonymized source code, with specification of all dependencies, including external libraries. [Yes]
 \end{enumerate}

 \item For any theoretical claim, check if you include:
 \begin{enumerate}
   \item Statements of the full set of assumptions of all theoretical results. [Yes]
   \item Complete proofs of all theoretical results. [Yes]
   \item Clear explanations of any assumptions. [Yes]     
 \end{enumerate}

 \item For all figures and tables that present empirical results, check if you include:
 \begin{enumerate}
   \item The code, data, and instructions needed to reproduce the main experimental results (either in the supplemental material or as a URL). [Yes]
   \item All the training details (e.g., data splits, hyperparameters, how they were chosen). [Yes]
         \item A clear definition of the specific measure or statistics and error bars (e.g., with respect to the random seed after running experiments multiple times). [Yes]
         \item A description of the computing infrastructure used. (e.g., type of GPUs, internal cluster, or cloud provider). [Yes]
 \end{enumerate}

 \item If you are using existing assets (e.g., code, data, models) or curating/releasing new assets, check if you include:
 \begin{enumerate}
   \item Citations of the creator If your work uses existing assets. [Yes]
   \item The license information of the assets, if applicable. [Not Applicable]
   \item New assets either in the supplemental material or as a URL, if applicable. [Yes] 
   \item Information about consent from data providers/curators. [Not Applicable]
   \item Discussion of sensible content if applicable, e.g., personally identifiable information or offensive content. [Not Applicable]
 \end{enumerate}

 \item If you used crowdsourcing or conducted research with human subjects, check if you include:
 \begin{enumerate}
   \item The full text of instructions given to participants and screenshots. [Not Applicable]
   \item Descriptions of potential participant risks, with links to Institutional Review Board (IRB) approvals if applicable. [Not Applicable]
   \item The estimated hourly wage paid to participants and the total amount spent on participant compensation. [Not Applicable]
 \end{enumerate}

 \end{enumerate}


%
%





%

%

\onecolumn














\aistatstitle{Safe exploration in reproducing kernel Hilbert spaces: Appendix}
\appendix
\addtocontents{toc}{\protect\setcounter{tocdepth}{2}}
\tableofcontents
\clearpage
\section{DERIVATION OF THE CONFIDENCE INTERVALS~\eqref{eq:Q}}\label{app:abbasi}
In this section, we derive the confidence intervals that were initially presented in the dissertation by \cite{abbasi2013online}.
These data-dependent bounds have gained increasing interest, see \eg \cite{Fiedler2021Practical} or \cite{fiedler2024safety}.

Writing Theorem~3.11 and Remark~3.13 by \cite{abbasi2013online} using our notation yields
\begin{align*}
\lvert f(\cdot)-\mu_t(\cdot)\rvert \leq \|\mathfrak m\|_{\overline V_t^{-1}}\left(
\sigma\sqrt{
2\log\left( 
\frac{\det(I_t+ K_t\sigma)^{\frac{1}{2}}}
{\delta}
\right)
}
+\sqrt{\sigma}B_t
\right),
\end{align*}
with
\begin{align*}
\|\mathfrak m\|_{\overline V_t^{-1}} = \frac{1}{\sqrt{\sigma}} \sigma_t(\cdot),
\end{align*}
which gives
\begin{align*}
\lvert f(\cdot)-\mu_t(\cdot)\rvert &\leq \frac{1}{\sqrt{\sigma}}\left(
\sigma\sqrt{
2\log\left( 
\frac{\det(I_t+ K_t/\sigma)^{\frac{1}{2}}}
{\delta}
\right)
}
+\sqrt \sigma B_t
\right)\sigma_t(\cdot) \\
&= \left(
\sqrt{\sigma}\sqrt{
2\log\left( 
\frac{\det(I_t+ K_t/\sigma)^{\frac{1}{2}}}
{\delta}
\right)
}
+B_t
\right)
\sigma_t(\cdot) \\
&= 
\left(
\sqrt{
2\sigma \log\left( 
\frac{\det(I_t+K_t/\sigma)^{\frac{1}{2}}}
{\delta}
\right)
}
+B_t
\right)
\sigma_t(\cdot) \\
&= 
\left(
\sqrt{
2\sigma \log\left( 
\det(I_t+ K_t/\sigma)^{\frac{1}{2}} \right)
-2 \sigma\log(\delta)
}
+B_t
\right)
\sigma_t(\cdot) \\
&= 
\left(
\sqrt{
\sigma \log\left( 
\det(I_t+ K_t/\sigma) 
\right)
-2 \sigma\log(\delta)
}
+B_t
\right)
\sigma_t(\cdot).
\end{align*}
\hfill \qed

\section{DERIVATION OF THE RKHS NORM FORMULA} \label{app:RKHS_norm_derivation}
In this section, we derive the general formula of the RKHS norm, \ie 
\begin{align*}
    \|f\|_k^2 = \sum_{s=1}^\infty \sum_{t=1}^\infty \alpha_i \alpha_j k(x_s,x_t).
\end{align*}
Let $f\in H_k$. 
    Then, we can write 
    \begin{align*}
        f=\sum_{t=1}^\infty\alpha_tk(\cdot,x_t)
    \end{align*}
In Hilbert spaces, the norm is given by the square root of the inner product of the function.
    Hence, 
    \begin{align*}
        \|f\|_k^2=\langle f, f \rangle_k,
    \end{align*}
 where $\langle \cdot, \cdot \rangle_k$ denotes the inner product of two functions in the RKHS of kernel $k$.
    Therefore, we have 
    \begin{align*}
    \|f\|_k^2&=\langle \sum_{t=1}^\infty\alpha_tk(\cdot,x_t), \sum_{t=1}^\infty\alpha_tk(\cdot,x_t) \rangle_k \\
    &=
    \sum_{t=1}^\infty\alpha_tk(\cdot,x_t) \sum_{s=1}^\infty\alpha_sk(\cdot,x_s) \\
    &=
    \sum_{t=1}^\infty
    \sum_{s=1}^\infty
    \alpha_t
    \alpha_s
    k(\cdot,x_t) k(\cdot,x_s) \\
    &= 
        \sum_{t=1}^\infty
    \sum_{s=1}^\infty
    \alpha_t
    \alpha_s
    k(x_s, x_t),
    \end{align*}
where the last equality follows from the reproducing property of reproducing kernel Hilbert spaces.
The ``center'' points are the~$x_s$ (or~$x_t$) points in this sum.
\section{ADDITIONAL FIGURE FOR THE INTRODUCTORY EXAMPLE}\label{app:intro}


Figure~\ref{fig:intro_over-estimation} shows the effect of conducting safe BO with a too conservative upper bound on the RKHS norm.
\begin{figure}[H]
    \centering
    \input{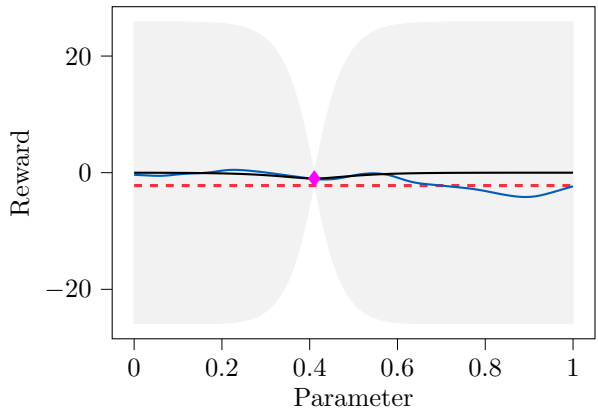}
    \caption{\emph{Safe BO corresponding to Figure~\ref{fig:SafeOpt intro}.}
    In this case, the guessed RKHS norm is five times the true RKHS norm, \ie the RKHS norm is conservatively over-estimated.
    The safe BO algorithm cannot sample any parameter since none is safe with high probability.
    Hence, a conservative over-estimation of the RKHS norm is undesirable.
    }
    \label{fig:intro_over-estimation}
\end{figure}
\section{ESTIMATING RKHS NORMS WITH RNNS}\label{app:RNN}

We use a custom RNN to process data from two distinct input sequences:
\emph{(i)} from the RKHS norm of the GP mean~$\mu_t$;
\emph{(ii)} from the reciprocal integral of the GP posterior variance~$\sigma_t^2$.
From these two sequences, the RNN extrapolates the unknown RKHS norm of the reward function~$\|f\|_k$.
For generating the training data and training the RNN, we used a cluster with \SI{60}{\giga\byte} 
RAM and 20 cores.

\paragraph{Architecture}
This model leverages two long-short-term memory RNN \citep{hochreiter1997long} branches with twenty hidden layers, respectively.
Moreover, each RNN branch contains two sigmoid and hyperbolic tangent activation functions, respectively.
We use this custom RNN setup to capture temporal dependencies within each input stream independently before merging their representations to produce unified predictions; see Figure~\ref{fig:RNN} for a schematic diagramm of the RNN.

\begin{figure}[h]
    \centering
    \usetikzlibrary{arrows.meta}
\begin{tikzpicture}

    \node (lstm1) [draw, rectangle, minimum width=40, minimum height=20] {LSTM 1};
    \node (lstm2) [draw, rectangle, minimum width=40, minimum height=20, below=of lstm1, yshift=-0.5cm] {LSTM 2};
    \node (concat) [draw, rectangle, minimum width=40, minimum height=20, right=of lstm1, xshift=0.5cm, yshift=-1.5cm] {Concatenate};
    \node (relu) [draw, rectangle, minimum width=40, minimum height=20, right=of concat, xshift=0.5cm] {ReLU};

    \draw[thick, -{Stealth[scale=1.5]}] (lstm1.east) -- (concat.west);
    \draw[thick, -{Stealth[scale=1.5]}] (lstm2.east) -- (concat.west);
    \draw[thick, -{Stealth[scale=1.5]}] (concat.east) -- (relu.west);


    \draw[thick, {Stealth[scale=1.5]}-] (lstm1.west) -- ++(-2,0) node[left] {GP mean};
    \draw[thick, {Stealth[scale=1.5]}-] (lstm2.west) -- ++(-2,0) node[left] {GP variance};
    \draw[thick, -{Stealth[scale=1.5]}] (relu.east) -- ++(2,0) node[right, align=center] {RKHS norm \\ estimate};
\end{tikzpicture}
    \caption{Schematic diagram of the used RNN.}
    \label{fig:RNN}
\end{figure}
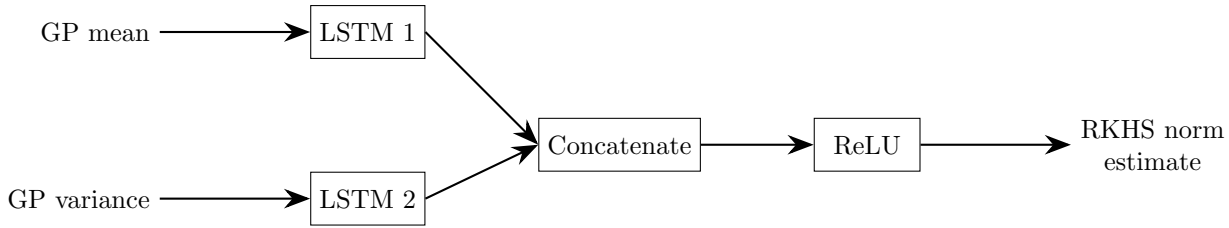

\paragraph{Training data}
Before training the RNN to estimate unknown RKHS norms~$\|f\|_k$, we require training data.
We generate training data by optimizing $10^3$ artificial RKHS functions~$g\in H_k$ using Algorithm~\ref{alg:local_safe_BO}.
To generate~$g$ and by executing Algorithm~\ref{alg:local_safe_BO}, we use the Matérn32 kernel with lengthscale~$\ell=0.1$.
We run Algorithm~\ref{alg:local_safe_BO} with $\delta=10^{-2}$, $\kappa=10^{-2}$, $\gamma=10^{-1}$, $\Delta=10^{-1}$, and~$N=3$ for 50 iterations.
To generate~$g$, we first sample the number of center points uniformly from $[600, 1000]$ and sample the center points~$x$ uniformly from~$\domain=[0,1]$.
Then, we sample~$\|g\|_k\in[0.5, 30]$ from a uniform distribution and scale the random coefficients~$\alpha$ to satisfy the pre-determined~$\|g\|_k$.
When executing Algorithm~\ref{alg:local_safe_BO}, we generate training data from each local object~$c\in \mathcal C_t$.
Hence, we require the corresponding RKHS norm~$\|g_c\|_k$ as the label, which is not directly inferred from the center points~$x$ and coefficients~$\alpha$ of the function~$g$.
Thus, we densely discretize the function~$g_c$ for any~$c\in C_t$ and any iteration~$t$, and compute a heuristic RKHS norm~$\|g_c\|_k$ using kernel interpolation; see \eg \cite{maddalena2021deterministic} for the computation of the RKHS norm of the interpolating function.

\paragraph{Performance}
The~$10^3$ functions~$g$ yield \num{280e3} training samples for the RNN.
We train the RNN with 100 epochs, a learning rate of $10^{-2}$, and the ADAM optimizer, which took around \SI{10}{\min}. 
We additionally preserved 20\% of validation data.
The root mean squared error on the validation data was approximately~\num{5e-3}.

\paragraph{Role of the RNN}
As mentioned in Section~\ref{sec:RKHS}, the RNN merely provides an additional layer of conservatism on the RKHS norm over-estimation and does not influence the provided theoretical guarantees.
However, it assists in accelerating Algorithm~\ref{alg:local_safe_BO}.
In Algorithm~\ref{alg:local_safe_BO}, we first loop through all local cubes~$\mathcal C_t$ and determine the most uncertain interesting parameter within each cube. 
Then, we conduct an experiment with the most uncertain interesting parameter among all local cubes.
Note that we only require PAC bounds, \ie guarantees on the RKHS norm over-estimation when conducting the experiment and not while looping through each local cube.
Therefore, to accelerate Algorithm~\ref{alg:local_safe_BO}, we loop through the local cubes and determine the uncertainties using only the RKHS norm estimation of the RNN since we do not conduct a safety-critical experiment yet.
Then, when having determined the most uncertain interesting parameter, we compute the PAC RKHS norm over-estimation and check whether the parameter remains safe with high probability before conducting the experiment.

\section{FURTHER ELABORATION ON ASSUMPTION~\ref{asm:random}}\label{app:assumption_rebuttal}

Assumption~\ref{asm:random} holds if the ground truth~$f$ and the random RKHS functions~$\rho_{t,j}$ are i.i.d.\ samples from the same---potentially unknown---probability space.
In the following, we demystify the assumption and contrast it to the assumption made in \safeopt, which is to assume that an upper bound on the RKHS norm is available a priori.

In concrete terms, our assumption restricts the complexity of the ground truth by requiring 
\begin{align*}
    f = \sum_{p=1}^{\hat N} \alpha_p k(x_p,\cdot).
\end{align*}
Since we compute random RKHS functions of the form
\begin{align*}
    \rho_{t,j} = \sum_{s=1}^{\hat N} \alpha_s k(x_s,\cdot), \quad t\geq 1, j\in\{1,\ldots,\mathrm m\},
\end{align*}
Assumption~\ref{asm:random} holds if~$\rho_{t,j}$ and~$f$ are i.i.d.\ samples from the same---potentially unknown---probability space.
We construct the random RKHS functions by deterministically fixing the first~$t\ll\hat N$ coefficients~$\alpha_{i:t}$ and center points~$x_{1:t}$ by the interpolation property subject to~$\sigma$-sub-Gaussian measurement noise.
The remaining coefficients and center points are i.i.d.\ samples from some probability space $(\Omega, \mathcal F, \nu)$.
Therefore, Assumption~\ref{asm:random} holds if
\begin{align}\label{eq:our_restriction}
   f = \sum_{p=1}^{\hat N} \alpha_p k(x_p,\cdot), \quad  \alpha_p, x_p 
\begin{cases}
    \text{from interpolation property} &\text{if } p \in [1,t] \\
     \text{i.i.d.\ samples from}\; (\Omega, \mathcal F, \nu) &\text{if } p \in [t+1,\hat N].
    \end{cases}
\end{align}
 In our experiments, we sample~$\alpha_p \in [-\bar\alpha,\bar\alpha]=[-1,1]$ and~$x_s\in\domain=[0,1]$ from \emph{uniform} distributions, allowing for a worst-case RKHS norm of up to 500.
In comparison, \safeopt requires
\begin{align}\label{eq:safeopt_restriction}
   f = \sum_{p=1}^{\infty} \alpha_p k(x_p,\cdot), \quad  \|f\|_k = \sqrt{\sum_{p=1}^\infty\sum_{s=1}^\infty \alpha_p\alpha_sk(x_p,x_s)} \overset{!}{\leq} B,
\end{align}
where~$B$ is the a priori upper bound on the RKHS norm of the ground truth~$f$.

Essentially, both assumptions, \ie \eqref{eq:our_restriction} and~\eqref{eq:safeopt_restriction}, restrict the complexity of the ground truth by assuming sufficiently regular behavior.
Regularity assumptions are necessary as considering arbitrarily complex functions would not lead to any practical bounds.
In practice, we need to approximate the probability space $(\Omega, \mathcal F, \nu)$.
Consequently, \safeopt\ needs to approximate the probability space as well; the set of possible outcomes are the functions from that RKHS \emph{subject to the RKHS norm condition} $\|f\|_k\leq B$.
The probability distribution in this setting is nature's probability distribution, which is the classic frequentist setting.
In contrast to \safeopt, our assumption captures larger RKHS norms and systematically incorporates data instead of having a static a priori restriction on the functions.

We mention throughout the paper how, why, and when \safeopt's assumption breaks;
more importantly, when this a priori restriction of possible functions for the ground truth~$f$ leads to safety violations. 
To make our assumption work in practice, we choose~$\hat N$ and~$\bar\alpha$ in~\eqref{eq:our_restriction} large enough to cover a broad range of functions.
Heuristically, it is sensible to restrict ourselves to a pre-RKHS setting (\ie $\hat N<\infty$) since the bounded norm property of~$\|f\|_k<\infty$ ``implies that the coefficients~$\alpha_p$ decay sufficiently fast as~$p$ increases'' \citep{berkenkamp2023bayesian}.

\paragraph{Contrasting Assumption~\ref{asm:random} to Assumption~1 by \cite{tokmak2024pacsbo}}

Assumption~\ref{asm:random} essentially states that the random RKHS functions and the reward function are i.i.d.\ samples from the same---potentially unknown---probability space. %
In comparison, Assumption~1 in \cite{tokmak2024pacsbo} requires that the RKHS norms of the random RKHS functions over-estimate the RKHS norm of the reward function in expectation.
Hence, the connection between the random RKHS functions and the ground truth does not directly become obvious since it is unclear under which conditions on the ground truth and the random RKHS functions this assumption holds.
In contrast, our assumption imposes a direct connection between the random RKHS functions and the ground truth.

\section{PROOFS}\label{app:proof}
In this section, we provide the proofs of our theoretical contributions, which we presented in Section~\ref{sec:theory}.
For the reader's convenience, we will restate the mathematical claims before providing their proofs within each subsection.

\subsection{Proof of Theorem~\ref{th:RKHS_scenario}}\label{proof:RKHS_scenario}

\setcounter{theorem}{0} 
\begin{theorem} [RKHS norm over-estimation]
Given Assumptions~\ref{asm:chow} and~\ref{asm:random}, for any iteration~$t\geq 1$,~$\gamma,\kappa\in(0,1)$, and $m\in\mathbb{N}$ such that $(1-\gamma)^{m-1}(1+\gamma(m-1))\leq\kappa$, consider~$B_t$ returned by Algorithm~\ref{alg:PAC}.
With confidence at least~$1-\kappa$, we have~$B_t\geq \|f\|_k$ with probability at least~$1-\gamma$.
\end{theorem}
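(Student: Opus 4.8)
The plan is to exploit the fact that Assumption~\ref{asm:random} puts the ground truth $f$ on the same footing as the random RKHS functions $\rho_{t,1},\dots,\rho_{t,m}$ --- making $f,\rho_{t,1},\dots,\rho_{t,m}$ an i.i.d.\ $(m+1)$-tuple --- so that the two-layer probabilistic claim reduces to a standard order-statistics (scenario-approach) estimate. First I would set $r_0:=\|f\|_k$ and $r_j:=\|\rho_{t,j}\|_k$ for $j\in\{1,\dots,m\}$; by Assumption~\ref{asm:random} these are i.i.d.\ real-valued random variables with some common cumulative distribution function $F$, and I would use Assumption~\ref{asm:chow} to keep the RKHS and sub-Gaussian noise model in force so that these norms are well-defined random variables. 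The continuous sampling of the coefficients makes $F$ continuous, which streamlines the argument below; if one does not want to assume this, each equality involving uniforms becomes a stochastic-dominance inequality in the favourable direction, which is all that is needed.

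Next I would disentangle the two probabilities in the statement. The outer ``confidence'' $1-\kappa$ is over the draw of $\rho_{t,1},\dots,\rho_{t,m}$ (equivalently of $r_1,\dots,r_m$), while the ``probability at least $1-\gamma$'' is over the draw of $f$ (equivalently of $r_0$), conditionally on that sample. By construction in Algorithm~\ref{alg:PAC}, $B_t$ is at least the second-largest of $r_1,\dots,r_m$; writing $r_{(m-1)}$ for that second-largest (the $(m-1)$-th order statistic of the sample), the conditional probability that $B_t$ fails to over-estimate is
\begin{align*}
V(r_1,\dots,r_m)\;:=\;\Pr\!\left[\,r_0>B_t\;\middle|\;r_1,\dots,r_m\,\right]\;=\;1-F(B_t)\;\le\;1-F\!\left(r_{(m-1)}\right),
\end{align*}
and the theorem's assertion is exactly $\Pr\!\left[\,V\le\gamma\,\right]\ge 1-\kappa$, the outer probability now being over $r_1,\dots,r_m$ alone. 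It therefore suffices to show $\Pr\!\left[\,F(r_{(m-1)})<1-\gamma\,\right]\le\kappa$.

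Then I would apply the probability integral transform: $U_j:=F(r_j)$ are i.i.d.\ uniform on $[0,1]$, and since $F$ is nondecreasing, $F(r_{(m-1)})=U_{(m-1)}$ is the second-largest among $m$ i.i.d.\ uniforms. The event $\{U_{(m-1)}<1-\gamma\}$ is precisely the event that at most one of the $U_j$ lies in $[1-\gamma,1]$, and the number of such $U_j$ is $\mathrm{Binomial}(m,\gamma)$; hence
\begin{align*}
\Pr\!\left[\,U_{(m-1)}<1-\gamma\,\right]\;=\;\Pr\!\left[\,\mathrm{Bin}(m,\gamma)\le 1\,\right]\;=\;(1-\gamma)^{m}+m\,\gamma\,(1-\gamma)^{m-1}\;=\;(1-\gamma)^{m-1}\bigl(1+\gamma(m-1)\bigr),
\end{align*}
which is $\le\kappa$ by the standing hypothesis on $m$. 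Chaining the two displays yields $\Pr[V\le\gamma]\ge 1-\kappa$, proving the claim.

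I expect the only genuinely nontrivial step to be the bookkeeping of the second paragraph: correctly attributing the ``confidence'' to the draw of the $\rho_{t,j}$ and the ``probability'' to the draw of $f$, and recognising that under Assumption~\ref{asm:random} the conditional failure probability is governed by where $r_0$ sits relative to the order statistics of $r_1,\dots,r_m$. Everything after that is the textbook scenario-approach / Calafiore--Campi-type order-statistics estimate (cf.\ \citep{tokmak2024pacsbo}), and the continuity of $F$ is a harmless technicality rather than a real obstacle.
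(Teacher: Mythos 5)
Your argument is essentially correct and takes a genuinely more elementary route than the paper. The paper maps the problem onto a sampling-and-discarding scenario program and invokes Theorem~2.1 of \cite{campi2011sampling} as a black box, which forces it to verify convexity, uniqueness, and almost-sure violation of the discarded constraints (its conditions C1--C5) and to split into three cases. You instead observe that for a scalar decision variable the whole guarantee reduces to an order-statistics computation: with $f,\rho_{t,1},\dots,\rho_{t,m}$ i.i.d.\ under Assumption~\ref{asm:random}, the probability integral transform and a $\mathrm{Binomial}(m,\gamma)$ tail give exactly $(1-\gamma)^{m-1}(1+\gamma(m-1))$, which is the hypothesis on $m$. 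This is cleaner, makes explicit where the i.i.d.\ coupling between $f$ and the $\rho_{t,j}$ is used (the paper leaves this implicit), and your handling of a possibly discontinuous $F$ via stochastic dominance is correct. Your two-layer bookkeeping (outer confidence over the sample, inner probability over $f$ conditional on the sample) matches the paper's formalization precisely.

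There is one concrete mismatch with the algorithm, which you could not have seen: Algorithm~\ref{alg:PAC} does not keep $B_t$ above the \emph{second}-largest norm. It discards the $r$ largest norms, where $r$ is chosen as large as possible subject to $\sum_{i=0}^{r}\binom{m}{i}\gamma^{i}(1-\gamma)^{m-i}\le\kappa$, and returns (at least) $\|\rho_{t,m-r}\|_k$, i.e.\ the $(r+1)$-th largest; the theorem's hypothesis only guarantees that $r=1$ is admissible, not that $r=1$ is used. The fix is mechanical within your framework: replace $U_{(m-1)}$ by $U_{(m-r)}$, so that the bad event is $\{\mathrm{Bin}(m,\gamma)\le r\}$, whose probability is $\le\kappa$ by the algorithm's own selection rule for $r$. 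Two further details you leave out: the algorithm also takes a maximum with the RNN estimate (harmless, since enlarging $B_t$ only helps) and a minimum with $B_{t-1}$ (the paper's Case~III), the latter requiring the one-line remark that if the returned value equals $B_{t-1}$ then the iteration-$(t-1)$ guarantee is inherited. With these adjustments your proof covers the full statement.
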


We prove the theorem by following a (sampling-and-discarding) scenario approach \citep{campi2011sampling,calafiore2006scenario}.
Consider any iteration~$t\geq 1$ and write the RKHS norm over-estimation as a constrained optimization problem
\begin{align} \label{eq:impossible}
\min_{B_t^*\in\mathbb{R}_{\geq B_t}} \nonumber &B_t^* \\
\text{subject to} \quad &B_t^* \geq \|f\|_k.
\end{align}
In this notation,~$B_t^*$ corresponds to the optimization variable and~$B_t$ to the value returned by the RNN. 
We could similarly consider the optimization domain~$\mathbb R_{\geq 0}$.
However, by lower-bounding~$B_t^*$ with the initial estimate obtained from the RNN, we introduce some conservatism.
Clearly, Problem~\eqref{eq:impossible} is not solvable since~$\|f\|_k$ is unknown.
Hence, we formulate the optimization problem using the scenario approach \citep{calafiore2006scenario} with~$m$ i.i.d.\ random RKHS functions~$\rho_{t,j}$:

\begin{align}\label{eq:scenario}
\min_{B_t^*\in\mathbb{R}_{\geq B_t}} &B_t^* \nonumber \\
\text{subject to} \quad &B_t^* \geq \|\rho_{t,j}\|_k \quad \forall j\in\{1,\ldots, m\}.
\end{align}

We can use a scenario approach~\eqref{eq:scenario} to tackle Problem~\eqref{eq:impossible} since the RKHS norms are i.i.d.\ random variables from the same probability space \citep{calafiore2006scenario}.
Specifically, by solving~\eqref{eq:scenario}, we obtain a solution that satisfies all~$m$ constraints, which, in return, yields a PAC solution for Problem~\eqref{eq:impossible}.
However, some of the random RKHS functions could be outliers with unreasonably high RKHS norms. 
To trade feasibility (constraint satisfaction with respect to all random RKHS functions) for performance (a smaller RKHS norm over-estimation), we follow a sampling-and-discarding scenario approach \citep{campi2011sampling}.
To this end, we formulate the following scalar optimization problem:
\begin{align} \label{eq:discard}
\min_{B_t^*\in\mathbb{R}_{\geq B_t}} &B_t^* \nonumber  \\
\text{subject to} \quad &B_t^* \geq \|\rho_{t,j}\|_k \quad \forall i\in\{1,\ldots, m-r\} \\
&B_t^* < \|\rho_{t,j}\|_k \quad \forall j\in\{m-r+1,\ldots, m\}, \nonumber
\end{align}
\ie the optimal solution violates~$r$ constraints corresponding to the~$r$ largest random RKHS norms.

We continue to map our problem to a sampling-and-discarding scenario approach, specifically to Theorem~2.1 by \cite{campi2011sampling}.
Consider the probability space~$(\mathbb R_{\geq 0}, \mathcal B(\mathbb R_{\geq 0}), \mathbb P)$.
The probability space with~$m$ scenarios can be written as~$(\mathbb{R}_{\geq 0}^m, \mathcal B(\mathbb R_{\geq 0}^m), \mathbb P^m)$, \ie a classic \emph{product probability space}, equivalent to the setting in \cite{romao2022exact}.

Before using Theorem~2.1 by \cite{campi2011sampling}, we have to satisfy the following conditions:
\begin{enumerate}[label=(C\arabic*)]
    \item The domain of the optimization problem is convex and closed. \label{cond:1}
    \item The objective function is convex. \label{cond:2}
    \item The feasible domain is convex and closed. \label{cond:extra}
    \item The optimization problem is feasible for~$m<\infty$ with a feasibility domain with nonempty interior and unique solution.
    \label{cond:3}
    \item The optimal solution violates all~$r$ discarded constraints almost surely. \label{cond:4}
\end{enumerate}
We continue the proof in three different cases.

\paragraph{Case~I, $B_t < \|\rho_{t,m}\|_k \land B_t \leq B_{t-1}$}
In this case, the RKHS norm estimation returned by the RNN is smaller than the largest random RKHS norm and smaller than the previous PAC RKHS norm over-estimation.
Condition~\ref{cond:1} is satisfied since~$\mathbb{R}_{\geq B_t}$ is convex and closed for any~$B_t \in \mathbb R$.
Condition~\ref{cond:2} directly follows from having a linear objective function.
Condition~\ref{cond:extra} holds since the feasible domain is~$[
\|\rho_{t,m-r}\|_k, \|\rho_{t,m-r+1}\|_k
)\subseteq \mathbb{R}_{\geq 0}$, with~$r$ computed in Algorithm~\ref{alg:PAC}.
Moreover, Problem~\eqref{eq:discard} is feasible for~$m<\infty$ with a feasibility domain with nonempty interior and unique solution~\ref{cond:3}.
In fact, the solution of~\eqref{eq:discard} is
\begin{align}\label{eq:sol}
    B_{t,m,r}^\star=\max\{\|\rho_{t,m-r}\|_k,B_t\},
\end{align}
explicitly denoting that the value depends on the number of scenarios~$m$ and the number of removed constraints~$r<m$.

We now prove claim~\ref{cond:4}, \ie that~$B_{t,m,r}^\star$ under-estimates the RKHS norms corresponding to~$j=m-r+1,\ldots,m$ in~\eqref{eq:discard} almost surely.
To this end, note that the RKHS norms are sorted in an ascending order and that~$\|\rho_{t,j}\|_k\neq\|\rho_{t,i}\|_k, i,j\in\{1,\ldots,m\}, i\neq j$ almost surely.
Since~$B_{t,m,r}^\star=\max\{\|\rho_{t,m-r}\|_k,B_t\}$ with $B_t < \|\rho_{t,m-r}\|_k$ by Algorithm~\ref{alg:PAC} and~$\|\rho_{t,m-r}\|_k<\|\rho_{t,j}\|_k$, $\forall j\in\{m-r+1,\ldots,m\}$ almost surely, the claim holds.
Hence, we can use the result of Theorem~2.1 in \cite{campi2011sampling}:
\begin{align}\label{eq:campi_bound}
    &\mathbb P^m\left[(\|\rho_{t,1}\|_k,\ldots,\|\rho_{t,m}\|_k)\in \mathbb{R}_{\geq 0}^m :
    \mathbb P\left[
    \|f\|_k \in \mathbb R_{\geq 0}{:}\; B^\star_{t,m,r} \geq \|f\|_k
    \right]\geq 1-\gamma
    \right]\nonumber\\
    &\geq 1- \sum_{i=0}^r {m\choose i}\gamma^i (1-\gamma)^{m-i}.
\end{align}
Inequality~\eqref{eq:campi_bound} provides PAC bounds on the constraint satisfaction for any unknown random variable from the same probability space. 
Therefore, it probabilistically quantifies the constraint satisfaction of the optimal solution of~\eqref{eq:discard} with respect to the unsolvable optimization problem~\eqref{eq:impossible}, where we upper-bound the unknown RKHS norm~$\|f\|_k$. 
Since Algorithm~\ref{alg:PAC} requires 
\begin{align*}
     \sum_{i=0}^r {m\choose i}\gamma^i (1-\gamma)^{m-i} \leq\kappa
\end{align*}
and sets~$B_t=\max\{\|\rho_{t,m-r}\|_k,B_t\}$,
we have 
\begin{align}\label{eq:PAC_bounds}
    &\mathbb P^m\left[(\|\rho_{t,1}\|_k,\ldots,\|\rho_{t,m}\|_k)\in \mathbb{R}_{\geq 0}^m :
    \mathbb P\left[
    \|f\|_k \in \mathbb R_{\geq 0}{:}\; B_t \geq \|f\|_k
    \right]\geq 1-\gamma
    \right]\geq 1- \kappa,
\end{align}
which concludes the proof for Case~I.

\paragraph{Case~II, $B_t \geq \|\rho_{t,m}\|_k \land B_t \leq B_{t-1}$}
In this case, the RKHS norm estimation returned by the RNN is larger than the largest random RKHS norm and smaller than the previous PAC RKHS norm over-estimation.
Then, we recover the classic scenario approach, \ie we satisfy all~$m$ constraints, which can also be seen as a sampling-and-discarding scenario approach with~$r=0$ discarded constraints in Problem~\eqref{eq:discard}.
Conditions~\ref{cond:1}-\ref{cond:3} are satisfied equivalently to Case~I, and Condition~\ref{cond:4} holds trivially since~$r=0$.
The optimal solution of Problem~\eqref{eq:discard} is given by
\begin{align*}
    B_{t,m,0}^\star=B_t
\end{align*}
and Algorithm~\ref{alg:PAC} returns~$B_t$ as the PAC RKHS norm over-estimation.

Note that we choose~$\gamma,m,\kappa$ such that~$(1-\gamma)^{m-1}(1+\gamma(m-1))\leq\kappa$ in Theorem~\ref{th:RKHS_scenario}.
Since
\begin{align}\label{eq:case2_kappa}
 \sum_{i=0}^0 {m\choose i}\gamma^i (1-\gamma)^{m-i} & \leq \sum_{i=0}^1 {m\choose i}\gamma^i (1-\gamma)^{m-i} \nonumber \\
 &= (1-\gamma)^{m-1}(1+\gamma(m-1))  \\
 &\leq \kappa, \nonumber
\end{align}
we can directly obtain PAC bounds for the optimal solution of the sampling-and-discarding scenario approach~\eqref{eq:discard} with~$r=0$.
Namely,
\begin{align*}
    &\mathbb P^m\left[(\|\rho_{t,1}\|_k,\ldots,\|\rho_{t,m}\|_k)\in \mathbb{R}_{\geq 0}^m :
    \mathbb P\left[
    \|f\|_k \in \mathbb R_{\geq 0}{:}\; B_t \geq \|f\|_k
    \right]\geq 1-\gamma
    \right] \\
    &\geq 1- \sum_{i=0}^0 {m\choose i}\gamma^i (1-\gamma)^{m-i}  \overset{\eqref{eq:case2_kappa}}{\geq} 1- \kappa,
\end{align*}
which concludes the proof for Case~II.

\paragraph{Case~III, $B_t>B_{t-1}$}
We now consider the case where the RKHS norm over-estimation at the previous iteration was tighter than the over-estimation at the current iteration.
In this case, we choose
\begin{align*}
    B_t = \min\{B_t,B_{t-1}\},
\end{align*}
see Algorithm~\ref{alg:PAC},
with~$B_0=\infty$ by convention.
The reason behind this choice is that if the estimation is PAC at iteration~$t-1$, it is again PAC at iteration~$t$.
\hfill \qed

\subsection{Proof of Corollary~\ref{co:RKHS_stop}}\label{proof:RKHS_stop}

\setcounter{corollary}{0} 
\begin{corollary}[Lifting Theorem~\ref{th:RKHS_scenario} to all iterations]
Under the hypotheses of Theorem~\ref{th:RKHS_scenario}, receive~$B_t$ from Algorithm~\ref{alg:PAC} at all iterations~$t$.
Then, with confidence at least~$1-\kappa$,~$B_t$ over-estimates the ground truth RKHS norm~$\|f\|_k$ jointly for all iterations~$t\geq 1$ with probability at least~$1-\gamma$.
\end{corollary}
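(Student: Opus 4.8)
The plan is to lean on the one structural feature of Algorithm~\ref{alg:PAC} that already appeared in Case~III of the proof of Theorem~\ref{th:RKHS_scenario}: the update $B_t\leftarrow\min\{B_t,B_{t-1}\}$ (with $B_0=\infty$) makes the sequence $(B_t)_{t\geq 1}$ non-increasing \emph{pathwise}. Consequently, viewing $\{B_t\geq\|f\|_k\}$ as an event in the sample space of the ground truth $f$, this event is non-increasing in $t$, so $\bigcap_{t\geq 1}\{B_t\geq\|f\|_k\}=\{\forall t\geq 1:B_t\geq\|f\|_k\}$ and, by continuity of probability from above, $\mathbb P_f[\forall t:B_t\geq\|f\|_k]=\lim_{t\to\infty}\mathbb P_f[B_t\geq\|f\|_k]=\inf_{t\geq 1}\mathbb P_f[B_t\geq\|f\|_k]$. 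This already reduces the ``joint over all $t$'' coverage to an infimum of the per-iteration coverages.

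First I would fix the two-level probabilistic set-up explicitly. I would work on the product space carrying the entire (infinite) sequence of scenario batches $\{\rho_{t,j}\}_{t\geq 1,\,j\in\{1,\dots,m\}}$, with $\mathbb P$ the induced joint law; write $g_t$ for the scenario-dependent coverage $\mathbb P_f[B_t\geq\|f\|_k]$ and $A_t:=\{g_t\geq 1-\gamma\}$. Each $A_t$ depends only on the first $t$ batches (through $B_1,\dots,B_t$), hence is measurable, and Theorem~\ref{th:RKHS_scenario}, applied at the fixed iteration $t$, is exactly the statement $\mathbb P[A_t]\geq 1-\kappa$. Here I would emphasise that Theorem~\ref{th:RKHS_scenario} must be read as a guarantee over \emph{all} randomness $B_t$ depends on — including earlier batches entering via the $\min$ with $B_{t-1}$ — which its three-case proof indeed delivers; the only extra bookkeeping needed is the filtration/measurability just described.

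The decisive step is to note that $(A_t)_{t\geq 1}$ is a \emph{decreasing} family: since $(B_t)$ is non-increasing, $g_t$ is non-increasing in $t$, so $A_{t+1}\subseteq A_t$. The hard part is exactly what this observation circumvents: a naive union bound across iterations is fatal, because the per-iteration failure budget $\kappa$ does not shrink with $t$ and $\sum_t\kappa$ diverges. For a decreasing family, however, continuity of measure from above gives $\mathbb P\!\left[\bigcap_{t\geq 1}A_t\right]=\lim_{t\to\infty}\mathbb P[A_t]\geq 1-\kappa$ with no union bound at all. It remains to translate the event $\bigcap_t A_t$: on it, $\inf_t g_t\geq 1-\gamma$, and by the continuity-from-above identity in $f$-space noted above, $\inf_t g_t=\mathbb P_f[\forall t:B_t\geq\|f\|_k]$; therefore, with scenario-confidence at least $1-\kappa$, the $B_t$ jointly over-estimate $\|f\|_k$ for all $t\geq 1$ with probability at least $1-\gamma$, which is the claim. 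If one instead prefers a finite horizon $T$, the same idea collapses to one line — monotonicity gives $\{\forall t\leq T:B_t\geq\|f\|_k\}=\{B_T\geq\|f\|_k\}$ and Theorem~\ref{th:RKHS_scenario} at $t=T$ finishes it — and the infinite-horizon statement is precisely this argument together with the continuity-from-above passage to the limit.
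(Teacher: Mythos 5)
Your proof is correct, and it rests on the same structural fact the paper's proof rests on --- the update $B_t=\min\{B_t,B_{t-1}\}$ makes the sequence pathwise non-increasing, which is what lets one avoid a union bound over iterations --- but the formalization is genuinely different. The paper casts $\{B_t\}$ as a supermartingale and invokes a stopping-time construction in the style of Abbasi-Yadkori/Chowdhury--Gopalan: it defines the first hitting time $\tau'$ of the bad event $\{B_t<\|f\|_k\}$, identifies $\bigcup_t\mathscr B_t$ with $\{\tau'<\infty\}$, and bounds its probability by the per-iteration failure probability; the treatment of the outer ($1-\kappa$) confidence level is then handled rather informally, essentially by assertion. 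You instead exploit monotonicity twice and use only continuity of measure on nested events: once in $f$-space, where $\{B_t\geq\|f\|_k\}$ decreases in $t$ so the joint coverage equals $\inf_t g_t$, and once in scenario-space, where the crucial observation $A_{t+1}\subseteq A_t$ (because $g_t$ is pointwise non-increasing) gives $\mathbb P[\cap_t A_t]=\lim_t\mathbb P[A_t]\geq 1-\kappa$ directly from the per-iteration statement of Theorem~\ref{th:RKHS_scenario}. What your route buys is an explicit, self-contained argument that the outer confidence level does \emph{not} degrade across iterations --- the step the paper's write-up glosses over --- at the cost of needing the measurability/filtration bookkeeping you correctly flag (each $A_t$ is measurable with respect to the first $t$ scenario batches, and Theorem~\ref{th:RKHS_scenario} must be read as a marginal guarantee over all randomness $B_t$ depends on, including the inherited $B_{t-1}$ in the paper's Case~III). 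What the paper's route buys is the connection to the standard self-normalized-bound literature, though as written it does not actually carry out the optional-stopping machinery it gestures at. Your finite-horizon remark (monotonicity collapses $\{\forall t\leq T:B_t\geq\|f\|_k\}$ to $\{B_T\geq\|f\|_k\}$) is a nice sanity check that neither proof needs any probabilistic content beyond the single-iteration theorem.
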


Let~$\{B_t\}_{t=1}^T, \; T\in\mathbb{N}$ be the discrete-time stochastic process containing the RKHS norm over-estimations for each iteration~$t$.
Since we choose~$B_t=\min\{B_{t-1}, B_t\}$ in Algorithm~\ref{alg:PAC}, we have
\begin{align}\label{eq:decreasing_B}
    B_t \leq B_{t-1} \leq \ldots \leq B_1\quad \forall t\geq 1.
\end{align}
Moreover, let~$\{\mathfrak F_t\}_{t=1}^T$ be a filtration with~$\mathfrak F_t=\sigma(B_1,\ldots,B_t)$ the~$\sigma$-algebras.
Then, we have that~$B_t\in \mathfrak F_t$ and due to~\eqref{eq:decreasing_B}, $\mathbb E[B_t]\leq B_1 < \infty$.\footnote{%
Note that the expected value is with respect to the probability space $(\mathbb{R}_{\geq 0}, \mathcal B(\mathbb R_{\geq 0}), \mathbb P)$ (\ie with respect to the probability measure~$\mathbb P$) since the random variable~$B_t$ is defined on that probability space.
}
Moreover,
\begin{align*}
    \mathbb{E}[B_{t+1} \vert\mathfrak F_{t}] \leq B_t \leq B_1\quad \forall t\geq 1,
\end{align*}
follows from~\eqref{eq:decreasing_B},
\ie$\{B_t\}_{t=1}^T$ is a supermartingale with respect to the filtration~$\{\mathfrak F_t\}^{T}_{t=1}$ \cite[Section~4.2]{durrett2019probability}.
Therefore, we can use a stopping-time construction for (super)martingales as done in Theorem~1 by \cite{Abbasi2011Improved} and Theorem~1 by \cite{chowdhury2017kernelized}.

Let us define the bad event
\begin{align*}
    \mathscr B_t=\left\{\omega \in \Omega{:}\;
    B_t < \|f\|_k
    \right\}    
\end{align*}
as under-estimating the ground truth RKHS norm~$\|f\|_k$.
Let~$\tau^\prime$ be the first time when the bad event~$\mathscr B_t$ happens, \ie 
\begin{align*}
    \tau^\prime(\omega) \coloneqq \min\{t\geq 1{:}\; \omega \in \mathscr{B}_t\}
\end{align*} 
with~$\min\{\emptyset\}=\infty$ by convention.
Since
\begin{align*}
    \bigcup_{t\geq 1} \mathscr{B}_t = \{\omega \in \Omega{:}\; \tau^\prime(\omega) < \infty\},
\end{align*}
we have
\begin{align}\label{eq:bad_event_chain}
    \mathbb P[\cup_{t\geq 1}  \mathscr B_t] &= \mathbb P[\tau^\prime<\infty] \nonumber\\
    &= \mathbb P[B_t < \|f\|_k, \tau^\prime < \infty] \\
    &\leq \mathbb P[B_t < \|f\|_k]. \nonumber 
\end{align}
In Theorem~\ref{th:RKHS_scenario}, we proved that
\begin{align*}
    \mathbb P^m\left[(\|\rho_{t,1}\|_k,\ldots,\|\rho_{t,m}\|_k)\in \mathbb{R}_{\geq 0}^m :
    \mathbb P\left[
    \|f\|_k \in \mathbb R_{\geq 0}{:}\; B_t \geq \|f\|_k
    \right]\geq 1-\gamma
    \right]\geq 1- \kappa.
\end{align*}
for any (fixed)~$t\geq 1$.
Therefore, 
\begin{align*}
    \mathbb P^m\left[(\|\rho_{t,1}\|_k,\ldots,\|\rho_{t,m}\|_k)\in \mathbb{R}_{\geq 0}^m :
    \mathbb P\left[
    \|f\|_k \in \mathbb R_{\geq 0}{:}\; B_t \geq \|f\|_k
    \right]\leq \gamma
    \right]\geq 1- \kappa.
\end{align*} for any (fixed)~$t\geq 1$,
which with~\eqref{eq:bad_event_chain} implies that
the statement in Theorem~\ref{th:RKHS_scenario} now holds
holds \emph{jointly} for all~$t\geq 1$.
That is, lifting the statement to hold jointly for all iterations is to upper-bound the probability that the bad event~$\mathscr B_t$ happens, which we do in~\eqref{eq:bad_event_chain}.
This probability is upper-bounded by the probability of under-estimating the RKHS norm.
In words, with confidence at least~$1-\kappa$, the RKHS norm over-estimation holds jointly for all iterations with probability at least~$1-\gamma$, where the confidence and probability are stated with respect to the probability measures~$\mathbb P$ and~$\mathbb P^m$, respectively.
\hfill \qed

\subsection{Proof of Theorem~\ref{th:error_bound}}\label{proof:error_bound}

\begin{theorem}[Confidence intervals] 
Under the same hypotheses as those of Corollary~\ref{co:RKHS_stop},  let~$B_t$ be returned by Algorithm~\ref{alg:PAC} $\forall t\geq 1$ with~$\kappa,\gamma\in(0,1)$.
Moreover, define~$Q_t(a)$ as in~\eqref{eq:Q} with any~$\delta\in(0,1)$ and~$C_t\coloneqq C_{t-1}\cap Q_t$ with~$C_0=\mathbb R$.
Then, with confidence at least~$1-\kappa$, $f(a) \in  C_t(a)$ holds jointly for all~$a \in \domain$ and for all~$t\geq 1$ with probability at least~$(1-\gamma)(1-\delta)$.
\end{theorem}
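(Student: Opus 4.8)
The plan is to combine three ingredients, each already available: the PAC RKHS-norm over-estimation that holds jointly over all iterations (Corollary~\ref{co:RKHS_stop}), the Abbasi-Yadkori confidence intervals derived in Appendix~\ref{app:abbasi}, and a conditioning argument to multiply the two failure probabilities. First I would recall that, by Corollary~\ref{co:RKHS_stop}, with confidence at least $1-\kappa$ (over the product measure $\mathbb P^m$ governing the random RKHS functions) the event
\[
\mathcal E_B \coloneqq \bigl\{B_t \geq \|f\|_k \text{ for all } t\geq 1\bigr\}
\]
has $\mathbb P$-probability at least $1-\gamma$. Next I would invoke the data-dependent confidence bound of Appendix~\ref{app:abbasi}: conditioned on any fixed (valid) bound $B \geq \|f\|_k$, the event
\[
\mathcal E_Q \coloneqq \Bigl\{\,\lvert f(a)-\mu_t(a)\rvert \leq \bigl(\sqrt{\sigma\log\det(I_t+K_t/\sigma) - 2\sigma\log\delta}\, + B\bigr)\sigma_t(a)\ \ \forall a\in\domain,\ \forall t\geq 1\Bigr\}
\]
holds with probability at least $1-\delta$; this is exactly the statement that $f(a)\in Q_t(a)$ jointly, with $Q_t$ as in~\eqref{eq:Q} using $B_t$. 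Since $C_t = C_{t-1}\cap Q_t$ with $C_0=\mathbb R$, on $\mathcal E_Q$ we also have $f(a)\in C_t(a)$ jointly for all $a$ and $t$ by an immediate induction on $t$.

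The core of the argument is then to glue these two events. On $\mathcal E_B$, each $B_t$ is a legitimate upper bound on $\|f\|_k$, so the Abbasi-Yadkori bound applies with that (random) $B_t$; the monotonicity step in Algorithm~\ref{alg:PAC} and the fact that $Q_t$ is increasing in the plugged-in norm bound mean that using $B_t$ rather than the true $\|f\|_k$ only inflates the intervals, hence $\mathcal E_Q$ still holds. The key subtlety is that the randomness in $\mathcal E_Q$ (noise in the observations, i.e.\ the measure under which the Abbasi-Yadkori martingale argument runs) is independent of — or at least can be handled by conditioning on — the randomness generating the $\rho_{t,j}$ that underlies $\mathcal E_B$. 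Concretely, I would condition on the draw of the random RKHS functions: for $\mathbb P^m$-almost every such draw in the good $1-\kappa$ set, $\mathbb P[\mathcal E_B]\geq 1-\gamma$; and on $\mathcal E_B$, $\mathbb P[\mathcal E_Q \mid \mathcal E_B]\geq 1-\delta$ by Appendix~\ref{app:abbasi} (the confidence-interval guarantee is uniform over admissible norm bounds, so conditioning on $\mathcal E_B$ does not spoil it). Hence
\[
\mathbb P[\mathcal E_B \cap \mathcal E_Q] = \mathbb P[\mathcal E_B]\,\mathbb P[\mathcal E_Q\mid \mathcal E_B] \geq (1-\gamma)(1-\delta),
\]
and on $\mathcal E_B\cap\mathcal E_Q$ we have $f(a)\in C_t(a)$ for all $a\in\domain$ and all $t\geq 1$. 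Taking this together with the outer $1-\kappa$ confidence over $\mathbb P^m$ yields the claim.

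I expect the main obstacle to be making the conditioning rigorous: one must argue that the Abbasi-Yadkori self-normalized bound holds uniformly over the admissible values of the (data- and scenario-dependent, hence random) bound $B_t$, so that it can be applied \emph{after} conditioning on $\mathcal E_B$ without any circularity — the event $\mathcal E_Q$ must not secretly depend on knowing $\|f\|_k$. The clean way is to note that Appendix~\ref{app:abbasi}'s bound $\lvert f-\mu_t\rvert \le (\cdots + B_t)\sigma_t$ is, on $\mathcal E_B$, implied by the same bound with $\|f\|_k$ in place of $B_t$ (since $B_t\ge\|f\|_k$ and the right-hand side is monotone in the norm term), and the latter is a statement purely about the observation noise with probability $\ge 1-\delta$, independent of the scenario randomness. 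A secondary, lighter point is the induction establishing $f\in C_t$ from $f\in Q_s$ for all $s\le t$, which is routine given $C_t=\bigcap_{s\le t}Q_s$ and $C_0=\mathbb R$. Finally, I would state explicitly which probability space each ``confidence'' and ``probability'' refers to, mirroring the bookkeeping already done in the proof of Corollary~\ref{co:RKHS_stop}.
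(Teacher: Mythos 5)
Your proposal is correct and follows essentially the same route as the paper: decompose the failure into the RKHS-norm over-estimation event and the self-normalized noise event, observe that on the former the interval built from $B_t\geq\|f\|_k$ contains the interval built from the true norm, and multiply the two probabilities using the independence of the scenario randomness and the observation noise. The paper formalizes this last step by explicitly constructing a product probability space rather than by conditioning, but the two arguments are equivalent, and your explicit monotonicity remark is exactly the (implicit) reason the paper can identify $\mathfrak Q_t$ with the joint event $(\mathfrak B_t,\mathcal E_t)$.
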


First, we define the following events (the complementary event is denoted by the superscript $\C$):

$\mathfrak C_t$: It holds that $f(a) \in C_t(a)$ jointly for all $a\in \domain$ and for all~$t\geq 1$.\\
$\mathcal E_t$: It holds that $\|\epsilon_{1:t}\|_{((K_t+\sigma I_t)^{-1}+I_t)^{-1}}\leq 2\sigma^2\ln\left(\frac{\sqrt{\det(1+\sigma)I_t+K_t}}{\delta}
\right)$ jointly for all~$t\geq 1$ and for any~$\delta\in(0,1)$.\\
$\mathfrak Q_t$: It holds that $f(a) \in Q_t(a)$ jointly for all $a\in \domain$ and for all~$t\geq 1$.\\
$\mathfrak B_t$: It holds that $B_t \geq \|f\|_k$ jointly for all~$t\geq 1$.

The proof aims at providing a lower bound on the probability of occurrence of event~$\mathfrak C_t$.
We start by investigating the probability of the event~$\mathfrak Q_t$ from which we can directly infer the probability of~$\mathfrak C_t$.

The challenge in this proof is that state-of-the-art confidence intervals from \eg \cite{abbasi2013online} or \cite{chowdhury2017kernelized} consider the RKHS norm as a \emph{deterministic} object and, therefore, only have the sub-Gaussian measurement noise as the source of stochasticity.
In contrast, we over-estimate the RKHS norm, thus making it a random variable.
Therefore, we have two sources of uncertainty that are defined on two distinct probability spaces.

\fakepar{Uncertainty~\emph{(i)}}
From Corollary~\ref{co:RKHS_stop}, we have that
\begin{align*}
    \mathbb P_1^m\left[(\|\rho_{t,1}\|_k,\ldots,\|\rho_{t,m}\|_k)\in \mathbb{R}_{\geq 0}^m :
    \mathbb P_1\left[
    \|f\|_k \in \mathbb R_{\geq 0}{:}\; B_t \geq \|f\|_k
    \right]\geq 1-\gamma
    \right]\geq 1- \kappa.
\end{align*}
jointly for all~$t\geq 1$,
\ie
\begin{align*}
    \mathbb P_1^m\left[(\|\rho_{t,1}\|_k,\ldots,\|\rho_{t,m}\|_k)\in \mathbb{R}_{\geq 0}^m :
    \mathbb P_1\left[
    \mathfrak B_t
    \right]\geq 1-\gamma
    \right]\geq 1- \kappa.
\end{align*}
The bounds are derived with respect to the inner probability space $(\mathbb R_{\geq 0}, \mathcal B(\mathbb R_{\geq 0}), \mathbb P_1)$
and the outer product probability space
$(\mathbb R_{\geq 0}^m, \mathcal B(\mathbb R_{\geq 0}^m), \mathbb P_1^m)$.
The inner probability measure~$\mathbb P_1$ quantifies the uncertainty on the \emph{hypothesis} that the RKHS norm over-estimation is correct, while the outer probability measure~$\mathbb P_1^m$ quantifies the \emph{sampling-based} uncertainty.\footnote{%
The random variable~$B_t$ is computed using a sampling-based approach by sampling~$m$ i.i.d.\ random RKHS functions; see Theorem~\ref{th:RKHS_scenario}.
Since the generation of this ``training set'' is random, the resulting hypothesis is endowed with additional uncertainty, which requires us to introduce the outer layer of probability.
}

We map the (inner) probability space to a simpler and more interpretable but for our needs equivalent probability space.
Instead of working on the sample space~$\mathbb R_{\geq 0}$, we work with the introduced events~$\mathfrak B_t$ and~$\mathfrak B_t^\C$.
Note that the event-based sample space and the $\sigma$-algebra are instances from the original sample space and $\sigma$-algebra. 
The events show a more interpretable version of the original probability space.
However, since the events are equivalently represented within the old and new settings, we preserve the original probability measure~$\mathbb P_1$ and obtain the probability space 
\begin{align*}
(\{\mathfrak B, \mathfrak B^\mathrm{C}\}, 2^{\{\mathfrak B, \mathfrak B^\mathrm{C}\}}, \mathbb P_1),
\end{align*}
with $2^{\{\mathfrak B, \mathfrak B^\mathrm{C}\}}\coloneqq\{\emptyset, \mathfrak B, \mathfrak B^\mathrm{C}, \{\mathfrak B, \mathfrak B^\mathrm{C}\}\}$, \ie the $\sigma$-algebra is the power set of the sample space.
In this discrete $\sigma$-algebra, the probability measure is given by the tabular mapping
\begin{align*}
    \mathbb P_1[\{\mathfrak B, \mathfrak B^\mathrm{C}\}] &= 1 \\
    \mathbb P_1[\emptyset] &= 0 \\
    \mathbb P_1[\mathfrak B] &=  1-\gamma \\
    \mathbb P_1[\mathfrak B^\mathrm C] &= \gamma, 
\end{align*}
where the first two results follow from the definition of valid probability measures, the third equality follows from Corollary~\ref{co:RKHS_stop}, while the final equality follows from the fact that $\mathbb P_1[\{\mathfrak B, \mathfrak B^\mathrm{C}\}]=\mathbb P_1[\mathfrak B]+\mathbb P_1[\mathfrak B^\mathrm{C}]$ since~$\mathfrak B$ and~$\mathfrak B^\mathrm{C}$ are disjoint by construction.

\begin{remark}[Source of Uncertainty~\emph{(i)}]
    The uncertainty arises from the randomness of the random RKHS functions. 
    As described in Section~\ref{sec:RKHS}, the first~$t$ center points and coefficients are \emph{deterministic} and set given the collected data points by the interpolating property subject to $\sigma$-sub-Gaussian noise.
    The randomness is solely introduced by sampling the tail coefficients and tail center points from, \eg uniform distributions.
    Therefore, this uncertainty is purely epistemic.
\end{remark}

\fakepar{Uncertainty~\emph{(ii)}}
From the works of \cite{chowdhury2017kernelized} and \cite{abbasi2013online}, we have probabilistic confidence intervals with a deterministic upper bound on the RKHS norm.
Hence, the uncertainty arises solely from the sub-Gaussian measurement noise, \ie from the probability of the occurrence of event~$\mathcal E_t$.
We form an equivalent event-based probability space transformation based on event~$\mathcal E_t$ as we did for Uncertainty~\emph{(i)} instead of working on the original probability space.
The event-based probability space is given by
\begin{align*}
    ((\mathcal E_t, \mathcal E_t^\C), 2^{\{\mathcal E_t, \mathcal E_t^\C\}}, \mathbb P_2).
\end{align*}
The $\sigma$-algebra is again the power set of the sample space, \ie $2^{\{\mathcal E_t,\mathcal E_t^\mathfrak{C}\}}=\{\{\mathcal E_t, \mathcal E_t^\mathrm{C}\},\mathcal E_t, \mathcal E_t^\mathrm{C},\emptyset\}$.
We take the same probability measure as in the original probability space and write the probability measure as the tabular mapping
\begin{align*}
    \mathbb P_2[\{\mathcal E_t, \mathcal E_t^\mathrm{C}\}] &= 1 \\
    \mathbb P_2[\emptyset] &= 0 \\
    \mathbb P_2[\mathcal E_t] &=  1-\delta \\
    \mathbb P_2[\mathcal E_t^\mathrm C] &= \delta.
\end{align*}
The fact that~$\mathbb P_2[\mathcal E_t]=1-\delta$ is derived in Theorem~1 by \cite{chowdhury2017kernelized} and Theorem~3.4 by \cite{abbasi2013online}.
The uncertainty of this event is purely aleatoric. It arises from applying Markov's inequality \citep[Theorem~1.6.4.]{durrett2019probability} to probabilistically bound the norm of the accumulated measurement noise with respect to a positive definite matrix.

Since we want to combine Uncertainties~\emph{(i)} and~\emph{(ii)}, we extend~$\mathbb P_2[\mathcal E_t]$ with the outer probability~$\mathbb P_1^m$.
That is, we include the sampling-based probability uncertainty on the training set of the random RKHS functions into the uncertainty of the measurement noise of conducting experiments.
This extension is trivial and purely artificial since the generation of the random RKHS functions does not influence the measurement noise. 
Therefore, we can state
\begin{align*}
    \mathbb{P}_1^m\left[
(\|\rho_{t,1}\|_k,\ldots,\|\rho_{t,m}\|_k)\in \mathbb{R}_{\geq 0}^m:
\mathbb P_2[
\mathcal E_t
]\geq 1-\delta
    \right] = 1.
\end{align*}

\fakepar{Constructing a product probability space}
Uncertainty~\emph{(i)} is purely epistemic and Uncertainty~\emph{(ii)} is purely aleatoric and both uncertainties are independent from each other.
Therefore, we can create a \emph{product probability space} between both individual probability spaces to quantify the uncertainty on the confidence interval, \ie on events~$\mathfrak Q_t$ and~$\mathfrak C_t$ while treating the RKHS norm as a random variable.

First, we create a the unique product probability measure as described in, \eg Theorem 1.7.1 by \cite{durrett2019probability}.
The unique probability measure of the product probability space is given by 
\begin{align*}
    \mathbb P[(\mathfrak B_t, \mathcal E_t)] = \mathbb P_1[\mathfrak B_t] \cdot \mathbb P_2[\mathcal E_t],
\end{align*}
where we denote by~$\mathbb P[(\cdot, \cdot)]$ the probability measure of the product probability space,\footnote{%
The product probability space is naturally given by the triple $((\mathfrak B_t, \mathfrak B_t^\C)\times (\mathcal E_t, \mathcal E_t), 2^{\{\mathfrak B_t, \mathfrak B_t^\C\} \times (\mathcal E_t, \mathcal E_t^\C)}, \mathbb P).$}
which maps a tuple of orthogonal random variables to the probability of joint occurrence.
Since
\begin{align*}
\mathbb P[(\mathfrak B_t, \mathcal E_t)] = \mathbb P_1[\mathfrak B_t] \cdot \mathbb P_2[\mathcal E_t],
\end{align*}
we have that
\begin{align*}
   \mathbb P[(\mathfrak B_t, \mathcal E_t)] = (1-\gamma)(1-\delta). 
\end{align*}
Moreover, we embed the uncertainty of the hypothesis into the stochasticity of the sampling process of the random RKHS functions by enveloping the inner probabilistic statement with the outer probability given by the measure~$\mathbb P_1^m$.
Hence, in conclusion, we can write
\begin{align*}
    \mathbb P_1^m\left[(\|\rho_{t,1}\|_k,\ldots,\|\rho_{t,m}\|_k)\in \mathbb{R}_{\geq 0}^m :
    \mathbb P\left[
    (\mathfrak B_t, \mathcal E_t)
    \right]\geq (1-\gamma)(1-\delta)
    \right]&\geq (1- \kappa)\cdot 1 = 1-\kappa
\end{align*}
Now, note that the ground truth~$f(a)$ lies within the confidence interval~$Q_t(a)$ if~\emph{(i)} the RKHS norm over-estimation and~\emph{(ii)} the bound on the accumulated noise hold, \ie if the event tuple~$(\mathfrak B_t, \mathcal E_t)$ holds.
Therefore, event~$\mathfrak Q_t$ is \emph{equivalent} to event $(\mathfrak B_t, \mathcal E_t)$ and we can write
\begin{align*}
    \mathbb P_1^m\left[(\|\rho_{t,1}\|_k,\ldots,\|\rho_{t,m}\|_k)\in \mathbb{R}_{\geq 0}^m :
    \mathbb P\left[
    \mathfrak Q_t
    \right]\geq (1-\gamma)(1-\delta)
    \right]&\geq 1-\kappa.
\end{align*}
In words, with confidence at least $1-\kappa$, the hypothesis that the ground truth lies within the confidence intervals with with the measurement noise \emph{and} the RKHS norm as random variables holds with probability at least $(1-\gamma)(1-\delta)$.
Finally, from Corollary~7.1 by \cite{berkenkamp2023bayesian}, we have that
\begin{align*}
    \mathbb P[\mathfrak C_t] \equiv \mathbb P[\mathfrak Q_t]
\end{align*}
and, therefore,
\begin{align*}
    \mathbb P_1^m\left[(\|\rho_{t,1}\|_k,\ldots,\|\rho_{t,m}\|_k)\in \mathbb{R}_{\geq 0}^m :
    \mathbb P\left[
    \mathfrak C_t
    \right]\geq (1-\gamma)(1-\delta)
    \right]&\geq 1-\kappa.
\end{align*}

The confidence is stated with respect to probability measure~$\mathbb P_1^m$ that comprises the stochasticity of the random RKHS function generation, whereas the probability of the hypothesis is stated with respect to the product probability measure~$\mathbb P$.\hfill \qed

\subsection{Proof of Theorem~\ref{th:safety}}\label{proof:safety}

\begin{theorem}[Safety] 
Under the same hypotheses as those of Theorem~\ref{th:error_bound}, initialize Algorithm~\ref{alg:global_safe_BO} with a safe set~$S_0\neq\emptyset: f(a)\geq h \; \forall a \in S_0$.
Then, with confidence at least~$1-\kappa$,
$
f(a_t) \geq h
$
jointly $\forall t\geq 1$
with probability at least~$(1-\gamma)(1-\delta)$ when running Algorithm~\ref{alg:global_safe_BO}.
\end{theorem}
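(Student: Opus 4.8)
The plan is to derive the safety guarantee directly from the confidence interval result in Theorem~\ref{th:error_bound} together with the structure of Algorithm~\ref{alg:global_safe_BO}. First I would recall that, by Theorem~\ref{th:error_bound}, with confidence at least $1-\kappa$ the event $\mathfrak C_t$ (that $f(a)\in C_t(a)$ jointly for all $a\in\domain$ and all $t\geq 1$) holds with probability at least $(1-\gamma)(1-\delta)$, where the confidence is with respect to $\mathbb P_1^m$ and the probability with respect to the product measure $\mathbb P$. The key observation is that everything below is a \emph{deterministic} consequence of $\mathfrak C_t$: once the confidence intervals are valid, safety follows purely from how Algorithm~\ref{alg:global_safe_BO} constructs its safe sets. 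So the goal reduces to showing $\{\mathfrak C_t\}\subseteq\{f(a_t)\geq h\ \forall t\geq 1\}$, i.e., a pointwise (sample-path) implication.

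To show this implication I would argue by induction on $t$. The base case is the assumption $S_0\neq\emptyset$ with $f(a)\geq h$ for all $a\in S_0$, so any $a_1$ chosen from $S_0$ (or more precisely from the safe set $S_1$ built from $S_0$ and the lower confidence bound $\ell_1(a)=\min C_1(a)$) satisfies $f(a_1)\geq h$. For the inductive step, suppose $a_1,\ldots,a_{t-1}$ have all been safe. Algorithm~\ref{alg:global_safe_BO} only expands its safe set to parameters $a$ for which the lower confidence bound $\ell_t(a)=\min C_t(a)\geq h$ (possibly after the Lipschitz-type expansion used in \safeopt-style algorithms). On the event $\mathfrak C_t$ we have $f(a)\geq \ell_t(a)$ for every $a$, hence every $a$ added to the safe set—and in particular the sampled $a_t$—satisfies $f(a_t)\geq \ell_t(a_t)\geq h$. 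Since $\mathfrak C_t$ is the joint event over all iterations, this chain of implications holds simultaneously for all $t\geq 1$ on that event, giving $\mathbf 1[\mathfrak C_t]\leq \mathbf 1[f(a_t)\geq h\ \forall t\geq 1]$.

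Finally I would translate this set inclusion back into the nested probability statement. Because $\{\mathfrak C_t\}\subseteq\{f(a_t)\geq h\ \forall t\}$, the inner probability only increases: $\mathbb P[f(a_t)\geq h\ \forall t\geq 1]\geq \mathbb P[\mathfrak C_t]$, and this inequality holds on every realization of the random RKHS functions. Intersecting with the $\mathbb P_1^m$-event of probability at least $1-\kappa$ on which $\mathbb P[\mathfrak C_t]\geq(1-\gamma)(1-\delta)$, we conclude
\begin{align*}
\mathbb P_1^m\!\left[(\|\rho_{t,1}\|_k,\ldots,\|\rho_{t,m}\|_k)\in\mathbb R_{\geq 0}^m:\ \mathbb P\!\left[f(a_t)\geq h\ \forall t\geq 1\right]\geq(1-\gamma)(1-\delta)\right]\geq 1-\kappa,
\end{align*}
which is exactly the claim. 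The main obstacle I anticipate is not probabilistic but bookkeeping: one must state precisely how Algorithm~\ref{alg:global_safe_BO} defines the safe set $S_t$ (the lower-bound condition $\ell_t\geq h$, and any Lipschitz/continuity expansion it uses) and verify that \emph{every} such expansion step is justified by $\mathfrak C_t$ alone, so that no additional event is needed; this is the place where a subtle gap could hide if the algorithm's expansion rule relied on more than the validity of the confidence bounds.
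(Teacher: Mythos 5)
Your overall strategy is the same as the paper's: reduce safety to a sample-path implication from the joint confidence-interval event $\mathfrak C_t$, then run an induction over the safe-set construction, and finally translate the set inclusion back into the nested $(1-\kappa)$ / $(1-\gamma)(1-\delta)$ statement. However, the one step you explicitly leave open --- ``verify that every expansion step is justified by $\mathfrak C_t$ alone'' --- is precisely the substantive content of the paper's proof, and your inductive step as written is not correct for expanded points. The safe set here is not $\{a:\ell_t(a)\geq h\}$; by~\eqref{eq:safe_set} a new point $a'$ enters $S_t$ when there exists an already-safe $a\in S_{t-1}$ with $\ell_t(a)-B_t\,d_k(a,a')\geq h$. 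For such an $a'$ the chain $f(a_t)\geq\ell_t(a_t)\geq h$ fails in general, because the lower bound of $a'$ itself need not exceed $h$. The correct chain is $f(a')\geq f(a)-B_t\,d_k(a,a')\geq\ell_t(a)-B_t\,d_k(a,a')\geq h$, and its first inequality requires the continuity bound $\lvert f(a)-f(a')\rvert\leq B_t\,d_k(a,a')$, which is not literally part of the event ``$f$ lies in the confidence intervals.''

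The missing idea is Lemma~\ref{le:cont}: by the reproducing property and Cauchy--Schwarz, $\lvert f(a)-f(a')\rvert=\lvert\langle f,k(a,\cdot)-k(a',\cdot)\rangle_k\rvert\leq\|f\|_k\,d_k(a,a')$ holds deterministically, so the needed continuity bound holds whenever $B_t\geq\|f\|_k$. Since the joint event $\{B_t\geq\|f\|_k\ \forall t\}$ is exactly one of the two ingredients of the confidence-interval event $\mathfrak C_t\equiv(\mathfrak B_t,\mathcal E_t)$, the continuity event is implied by $\mathfrak C_t$ and costs no additional probability; this is the content of Remark~\ref{re:events}, and it is the reason no separately assumed Lipschitz constant (and no extra union bound) is needed. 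With that observation supplied, your inclusion $\mathfrak C_t\subseteq\{f(a_t)\geq h\ \forall t\geq 1\}$ and the final probabilistic bookkeeping go through exactly as in the paper.
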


The proof is similar to the proofs of Theorem~1 and Lemma~11 by \cite{sui2015safe}. 
Specifically, we prove that we remain safe with high probability when only sampling within the set of safe samples.

\safeopt-like algorithms start with an initial safe set and extend the safe set by (probabilistically) lower-bounding the function values of inputs on the domain based on information of inputs that are already classified as safe.
This interpretation naturally requires a notion of continuity and regularity to infer the behavior of inputs based on the behavior of neighboring inputs. 
Therefore, \safeopt\ and many \safeopt-like algorithms require the Lipschitz constant as an additional parameter next to the RKHS norm to execute the algorithm.
In contrast, we present a Lipschitz-like continuity for RKHS functions for which we use the (semi)metric~\eqref{eq:semi-metric}. 

\begin{lemma}[RKHS-induced continuity]\cite[Proposition~3.1]{Fiedler2023Lipschitz} \label{le:cont}
Let all conditions in Theorem~\ref{th:safety} hold and let~$B_t$ be returned by Algorithm~\ref{alg:PAC}.
Then, jointly for all $a,a^\prime\in\domain$ and for all~$t\geq 1$, with confidence at least~$1-\kappa$,
\begin{align*}
 \lvert f(a)-f(a^\prime)\rvert \leq B_t d_k(a,a^\prime)
\end{align*}
with probability at least~$1-\gamma$.\footnote{%
The confidence is stated with respect to the probability measure~$\mathbb P_1^m$ whereas the probability is stated with respect to the probability measure~$\mathbb P_1$.}
\end{lemma}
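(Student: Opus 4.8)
The plan is to split the statement into a purely deterministic RKHS inequality and a probabilistic lifting step that invokes the guarantee already established for $B_t$. The deterministic core is the standard fact that, in a reproducing kernel Hilbert space, point evaluation is a bounded linear functional, so the modulus of continuity of any $f \in H_k$ is controlled by its RKHS norm through the kernel (semi)metric $d_k$ of \eqref{eq:semi-metric}. Nothing in this core is random, so the entire ``jointly for all $a,a' \in \domain$'' quantifier will be handled here for free, and the only stochasticity is then confined to replacing $\|f\|_k$ by $B_t$.

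First I would fix arbitrary $a, a' \in \domain$ and use the reproducing property $f(a) = \langle f,\, k(\cdot, a)\rangle_k$ to write
\begin{align*}
f(a) - f(a') = \langle f, \, k(\cdot, a) - k(\cdot, a') \rangle_k.
\end{align*}
Applying the Cauchy--Schwarz inequality in $H_k$ then gives
\begin{align*}
\lvert f(a) - f(a') \rvert \leq \|f\|_k \, \| k(\cdot, a) - k(\cdot, a') \|_k = \|f\|_k \, d_k(a, a'),
\end{align*}
where the final equality identifies the kernel semi-metric of \eqref{eq:semi-metric}, since expanding $\| k(\cdot,a) - k(\cdot,a')\|_k^2$ with the reproducing property yields $k(a,a) - 2k(a,a') + k(a',a') = d_k(a,a')^2$. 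Because $a, a'$ were arbitrary and no randomness enters this computation, the bound $\lvert f(a) - f(a')\rvert \leq \|f\|_k\, d_k(a,a')$ holds deterministically and \emph{simultaneously} for all $a, a' \in \domain$.

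The second step is the probabilistic lifting. Corollary~\ref{co:RKHS_stop} guarantees that, with confidence at least $1-\kappa$ (with respect to $\mathbb P_1^m$), the event $\{B_t \geq \|f\|_k \text{ for all } t \geq 1\}$ holds with probability at least $1-\gamma$ (with respect to $\mathbb P_1$). On this event, I would chain
\begin{align*}
\lvert f(a) - f(a') \rvert \leq \|f\|_k \, d_k(a,a') \leq B_t \, d_k(a,a')
\end{align*}
for all $a, a' \in \domain$ and all $t \geq 1$, where the second inequality uses $B_t \geq \|f\|_k$ together with $d_k(a,a') \geq 0$. Since the only stochastic object appearing in the conclusion is $B_t$ --- the first inequality being valid on the whole sample space --- the confidence and probability of the resulting statement are inherited verbatim from Corollary~\ref{co:RKHS_stop}, producing confidence $1-\kappa$ and probability $1-\gamma$ jointly over $a, a'$ and $t$.

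The main obstacle here is organizational rather than analytical: one must check that the three quantifiers compose correctly. The joint-over-$(a,a')$ quantifier is free because it lives entirely inside the deterministic inequality; the joint-over-$t$ quantifier and the two-layer $(\text{confidence } 1-\kappa,\ \text{probability } 1-\gamma)$ structure are supplied wholesale by Corollary~\ref{co:RKHS_stop}. The one point warranting care is that $d_k(a,a') \geq 0$, so that multiplication by $B_t \geq \|f\|_k \geq 0$ preserves the direction of the inequality; this is immediate since $d_k$ is a semi-metric.
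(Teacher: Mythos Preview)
Your argument is correct and mirrors the paper's proof essentially step for step: reproducing property, Cauchy--Schwarz, identification of $d_k$, and then the invocation of Corollary~\ref{co:RKHS_stop} to replace $\|f\|_k$ by $B_t$, with the observation that only this last step carries any stochasticity. The paper presents this as a single chain of (in)equalities and tags each line with its justification, but the content is the same.
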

\begin{proof}
With confidence~$1-\kappa$ and probability~$1-\gamma$,
    \begin{align*}
\lvert f(a) - f(a^\prime) \rvert &= \lvert \langle f, k(a, \cdot) - k(a^\prime, \cdot) \rangle_k \rvert \tag*{
\cite[Definition~4.18]{Steinwart2008SVM}} \\
&\leq \|f\|_k \sqrt{k(a,a) - k(a^\prime, a) - k(a, a^\prime) + k(a^\prime, a^\prime)} \tag*{Cauchy-Schwarz inequality}\\
&= \|f\|_k
d_k(a,a^\prime) \tag*{(Semi)metric~\eqref{eq:semi-metric}}\\
&\leq B_t d_k(a,a^\prime) \tag*{\text{Corollary}~\ref{co:RKHS_stop}},
\end{align*}
where $\langle f, g \rangle_k$ denotes the inner product between two functions in the RKHS of kernel~$k$.
Note that solely the last inequality introduces stochasticity and the previous steps hold deterministically.
\end{proof}

For each iteration~$t\geq 1$, we are only allowed to sample within the safe set~$S_t$~\eqref{eq:safe_set}.
The following lemma exploits the definition of the safe set~$S_t$ to prove that we can guarantee safety with high probably for all iterations when only sampling within~$S_t$.
\begin{lemma}\label{le:sui_PAC}
Under the same hypotheses of Theorem~\ref{th:safety}, with confidence at least~$1-\kappa$,
\begin{align*}
    \forall a \in S_t, \; f(a) \geq h \
\end{align*}
jointly for all iterations~$t\geq 1$ with probability of at least~$(1-\delta)(1-\gamma)$.
\end{lemma}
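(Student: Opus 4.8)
The plan is to proceed by induction on the iteration index~$t$, using the recursive definition of the safe set~$S_t$ together with Lemma~\ref{le:cont} (RKHS-induced continuity) and the confidence-interval guarantee of Theorem~\ref{th:error_bound}. First I would condition on the joint event on which both of the following hold: (a) the confidence intervals are valid, i.e.\ $f(a)\in C_t(a)$ jointly for all $a\in\domain$ and all $t\geq 1$; and (b) the RKHS-continuity bound $\lvert f(a)-f(a')\rvert\leq B_t d_k(a,a')$ holds jointly for all $a,a'\in\domain$ and all $t\geq 1$. By Theorem~\ref{th:error_bound}, event~(a) has probability at least~$(1-\gamma)(1-\delta)$ with confidence at least~$1-\kappa$; by Lemma~\ref{le:cont}, event~(b) has probability at least~$1-\gamma$ with confidence at least~$1-\kappa$. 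Crucially, both events are controlled by the \emph{same} underlying randomness in the RKHS-norm over-estimation (the bound $B_t\geq\|f\|_k$) plus, for~(a), the measurement-noise event $\mathcal E_t$; the continuity bound needs only $B_t\geq\|f\|_k$ and is otherwise deterministic. Hence the intersection event (a)$\,\cap\,$(b) is governed by exactly the event pair $(\mathfrak B_t,\mathcal E_t)$ of Theorem~\ref{th:error_bound}'s proof, so it again has probability at least~$(1-\gamma)(1-\delta)$ with confidence at least~$1-\kappa$. The rest of the argument is purely deterministic on this event.

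On that event I would run the induction. The base case $t=0$: by hypothesis $S_0\neq\emptyset$ and $f(a)\geq h$ for all $a\in S_0$, so the claim holds at initialization. For the inductive step, assume $f(a)\geq h$ for all $a\in S_{t-1}$. Recall the safe set is built as $S_t = \bigcup_{a'\in S_{t-1}}\{a\in\domain : \ell_t(a') - B_t d_k(a,a') \geq h\}$, where $\ell_t(a')$ is the lower confidence bound at $a'$ (the lower endpoint of $C_t(a')$). Take any $a\in S_t$; then there is some $a'\in S_{t-1}$ with $\ell_t(a') - B_t d_k(a,a')\geq h$. On our event, $f(a')\geq \ell_t(a')$ by confidence-interval validity, and $f(a)\geq f(a') - B_t d_k(a,a')$ by the RKHS-continuity bound. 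Chaining these, $f(a)\geq f(a') - B_t d_k(a,a') \geq \ell_t(a') - B_t d_k(a,a') \geq h$, which closes the induction. Since this holds for every $t\geq 1$ simultaneously on the event, we conclude $f(a)\geq h$ for all $a\in S_t$ jointly for all $t\geq 1$.

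Finally I would translate the probabilistic bookkeeping back: the event supporting the deterministic argument is (a)$\,\cap\,$(b), which by the discussion above holds with probability at least~$(1-\gamma)(1-\delta)$ under the product measure~$\mathbb P$, with confidence at least~$1-\kappa$ under the sampling measure~$\mathbb P_1^m$. This is precisely the claimed statement. The main obstacle I anticipate is the careful handling of the two-layer (confidence/probability) structure: one must verify that the continuity event of Lemma~\ref{le:cont} and the confidence-interval event of Theorem~\ref{th:error_bound} are \emph{not} independent obstacles whose probabilities would multiply further, but rather are simultaneously implied by the single good event $\{B_t\geq\|f\|_k\ \forall t\}\cap\{\mathcal E_t\ \forall t\}$, so that no additional union-bound loss is incurred beyond the $(1-\gamma)(1-\delta)$ already paid in Theorem~\ref{th:error_bound}. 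A secondary subtlety is ensuring the induction uses the confidence bound at $a'\in S_{t-1}$ at the \emph{current} iteration $t$ (so that $\ell_t$, not $\ell_{t-1}$, appears), which is consistent with the definition of $S_t$ and the "jointly for all $t$" validity of the confidence intervals.
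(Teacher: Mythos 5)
Your proposal is correct and follows essentially the same route as the paper's proof: an induction over $t$ on the good event, chaining $f(a)\geq f(a')-B_t d_k(a,a')\geq \ell_t(a')-B_t d_k(a,a')\geq h$ via Lemma~\ref{le:cont} and Theorem~\ref{th:error_bound}. Your key observation that the continuity event is implied by the RKHS-norm over-estimation event $\mathfrak B_t$ (so the intersection costs nothing beyond the $(1-\gamma)(1-\delta)$ already paid, since $\mathfrak C_t \equiv (\mathfrak B_t,\mathcal E_t)$ implies $\mathfrak L_t \equiv \mathfrak B_t$) is exactly the bookkeeping the paper carries out in its Remark on the product probability space.
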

\begin{proof}
The lemma is akin to Lemma~11 by \cite{sui2015safe}.
However, we replace the assumption of knowing a true upper bound on the RKHS norm~$\|f\|_k$ with the PAC RKHS norm over-estimation received by Algorithm~\ref{alg:PAC}.
Furthermore, in contrast to \cite{sui2015safe}, we do not require the Lipschitz constant and prove safety with high probability by exploiting the RKHS norm induced continuity formulated in Lemma~\ref{le:cont} instead.

First, similar to the proof of Theorem~\ref{th:error_bound}, we introduce the following events:\\
$ \Sigma_t$: It holds that~$f(a)\geq h$ jointly for all $a  \in S_t$ and for all $t \geq 1$.\\
$\mathfrak C_t$: It holds that $f(a) \in C_t(a)$ jointly for all $a\in \domain$ and for all~$t\geq 1$.\\
$\mathcal E_t$: It holds that $\|\epsilon_{1:t}\|_{((K_t+\sigma I_t)^{-1}+I_t)^{-1}}\leq 2\sigma^2\ln\left(\frac{\sqrt{\det(1+\sigma)I_t+K_t}}{\delta}
\right)$ jointly for all~$t\geq 1$ and for any~$\delta\in(0,1)$.\\
$\mathfrak B_t$: It holds that $B_t \geq \|f\|_k$ jointly for all~$t\geq 1$.\\
$\mathfrak L_t$: It holds that $\lvert f(a)-f(a^\prime)\rvert \leq B_t d_k(a,a^\prime)$ jointly for all $a\in \domain$ and for all~$t\geq 1$.

First, we project the statement in Lemma~\ref{le:cont} onto the product probability space before providing a lower bound on the occurrence of the event~$\Sigma_t$.
From Lemma~\ref{le:cont}, we have that
\begin{align*}
    \mathbb P_1^m\left[(\|\rho_{t,1}\|_k,\ldots,\|\rho_{t,m}\|_k)\in \mathbb{R}_{\geq 0}^m:
    \mathbb P_1\left[
    \mathfrak L_t
    \right]\geq 1-\gamma
    \right]&\geq  1-\kappa.
\end{align*}
We now embed the inner probability from probability measure~$\mathbb P_1$ into the product probability measure~$\mathbb P$. 
Note that event~$\mathfrak L_t$ (on the probability space governed by the probability measure~$\mathbb P_1$) solely depends on the correctness of the RKHS norm over-estimation, \ie on the occurrence of event~$\mathfrak B_t$.
Therefore, with respect to the product probability measure~$\mathbb P$, event~$\mathfrak L_t$
is \emph{equivalent} to the union of the events $(\mathfrak B_t, \mathcal E_t)\cup (\mathfrak B_t, \mathcal E_t^\C)$ since it solely depends on the correctness of the RKHS norm over-estimation.
Therefore, we can write
\begin{align*}
    \mathbb P_1\left[
    \mathfrak L_t
    \right] \equiv \mathbb P[(\mathfrak B_t, \mathcal E_t^\C)\cup (\mathfrak B_t, \mathcal E_t^\C)]
    &= \mathbb P[(\mathfrak B_t, \mathcal E_t)] +  \mathbb P[(\mathfrak B_t, \mathcal E_t^\C)] \\
    &= (1-\gamma)(1-\delta) + (1-\gamma)\delta \\
    &= 1-\gamma,
\end{align*}
\ie we can write $\mathfrak L_t \coloneqq (\mathfrak B_t, \mathcal E_t)\cup (\mathfrak B_t, \mathcal E_t^\C).$
Therefore, the probability that the continuity statement from Lemma~\ref{le:cont} holds is given by
\begin{align*}
        \mathbb P_1^m\left[(\|\rho_{t,1}\|_k,\ldots,\|\rho_{t,m}\|_k)\in \mathbb{R}_{\geq 0}^m:
    \mathbb P\left[
    \mathfrak L_t
    \right]\geq 1-\gamma
    \right]&\geq  (1-\kappa).
\end{align*}

\begin{remark}[Introduced events and their role in the product probability space]\label{re:events}
    We did not explicitly introduce~$\mathfrak L_t$ into the $\sigma$-algebra of the product probability space, which is the power set of the sample set $(\mathfrak B_t, \mathfrak B_t^\C)\times (\mathcal E_t, \mathcal E_t^\C)$.
However, we showed that~$\mathfrak L_t$ can be rewritten as $(\mathfrak B_t, \mathcal E_t)\cup (\mathfrak B_t, \mathcal E_t^\C)$, thus making it an explicit a member of the~$\sigma$-algebra of the product probability space.
Clearly, $(\mathfrak B_t, \mathcal E_t)\cup (\mathfrak B_t, \mathcal E_t^\C)=\mathfrak B_t$, \ie $\mathfrak L_t \equiv \mathfrak B_t$.
This already became evident in the proof of Lemma~\ref{le:cont}, where the stochasticity of the continuity statement was solely introduced by the stochasticity on the RKHS norm over-estimation.
\end{remark}

We now move on to lower-bounding the probability of occurrence of event~$\Sigma_t$, which clearly proves the lemma.
Equivalent to the proof of Lemma~11 in \cite{sui2015safe}, we prove the lemma by induction.

Base case:
In the first iteration, we set~$S_1\equiv S_0$, see Algorithm~\ref{alg:acquisition}.
Hence, by assumption, for all $a \in S_1, f(a) \geq h$ holds deterministically.

Induction step:
Assume for some~$t\geq 2$, $f(a)\geq h,\; \forall a \in S_{t-1}$.
We show that $f(a)\geq h, \; \forall a\in S_{t-1}$ implies $f(a^\prime)\geq h, \; \forall a^\prime\in S_{t}$.
By the definition of the safe set~\eqref{eq:safe_set},  $\forall a^\prime\in S_t$,  $\exists a \in S_{t-1}$  such that
\begin{align*}
    h \leq \ell_t(a) - B_td_k(a,a^\prime)
\end{align*}
holds deterministically.
Moreover, we have shown in Theorem~\ref{th:error_bound}, that
    \begin{align*}
    \mathbb P_1^m\left[(\|\rho_{t,1}\|_k,\ldots,\|\rho_{t,m}\|_k)\in \mathbb{R}_{\geq 0}^m :
    \mathbb P\left[
   \mathfrak C_t
    \right]\geq (1-\gamma)(1-\delta)
    \right]&\geq 1- \kappa,
\end{align*}
which implies that\footnote{%
The implication follows from the fact that $\min C_t(a)\eqqcolon\ell_t(a)$.
}
    \begin{align*}
    \mathbb P_1^m\left[(\|\rho_{t,1}\|_k,\ldots,\|\rho_{t,m}\|_k)\in \mathbb{R}_{\geq 0}^m :
    \mathbb P\left[
    f(a) \geq \ell_t(a)
    \right]\geq (1-\gamma)(1-\delta)
    \right]&\geq 1- \kappa.
\end{align*}
Therefore,
    \begin{align*}
    \mathbb P_1^m\left[(\|\rho_{t,1}\|_k,\ldots,\|\rho_{t,m}\|_k)\in \mathbb{R}_{\geq 0}^m :
    \mathbb P\left[
    h \leq f(a)-B_t d_k(a,a^\prime)
    \right]\geq (1-\gamma)(1-\delta)
    \right]&\geq 1- \kappa.
\end{align*}
Finally, observe that event~$\mathfrak C_t$ implies event~$\mathfrak L_t$ (see Remark~\ref{re:events}).
Therefore,
    \begin{align*}
    \mathbb P_1^m\left[(\|\rho_{t,1}\|_k,\ldots,\|\rho_{t,m}\|_k)\in \mathbb{R}_{\geq 0}^m :
    \mathbb P\left[
    h \leq f(a^\prime)
    \right]\geq (1-\gamma)(1-\delta)
    \right]&\geq 1- \kappa
\end{align*}
by Lemma~\ref{le:cont}.


\end{proof}
Theorem~\ref{th:safety} follows directly from Lemma~\ref{le:sui_PAC}. \hfill \qed
\section{RKHS NORM OVER-ESTIMATION TIGHTNESS COMPARED TO \cite{tokmak2024pacsbo}}\label{app:tightness}
Figure~\ref{fig:tightness_PACSBO} shows the tightness of the RKHS norm over-estimation by showing the ratio~$\nicefrac{B_t}{\|f\|_k}$ over the number of iterations.
We see that the RKHS norm over-estimation using the scenario approach (left sub-figure, our work) yields \emph{significantly} tighter over-estimations than the work by \cite{tokmak2024pacsbo}, who directly use Hoeffding's inequality to obtain PAC bounds.
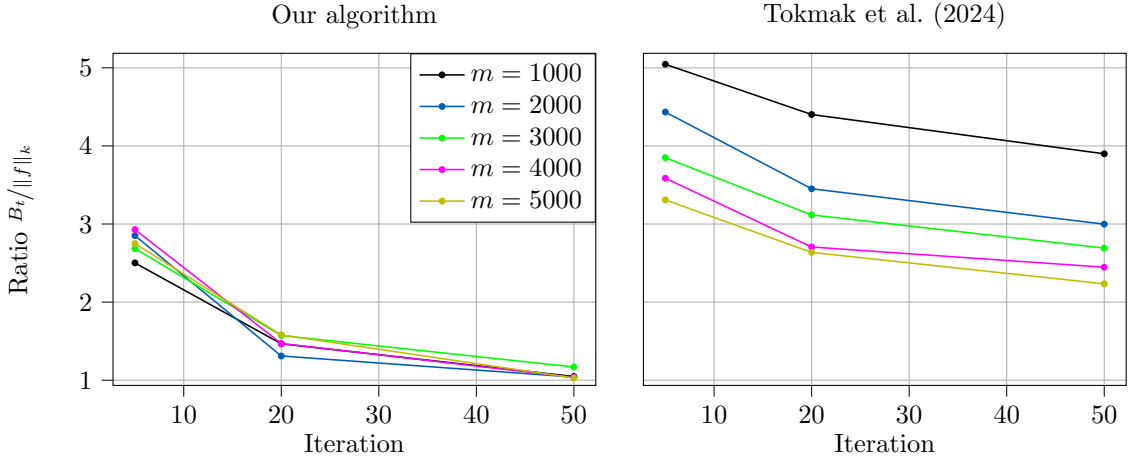
\begin{figure}
   \centering
\begin{tikzpicture}

\definecolor{darkgray176}{RGB}{176,176,176}
\definecolor{darkviolet1910191}{RGB}{191,0,191}
\definecolor{goldenrod1911910}{RGB}{191,191,0}
\definecolor{green01270}{RGB}{0,127,0}
\pgfplotsset{
every axis legend/.append style={
at={(1, 1)},
anchor=north east,
},
}
\begin{axis}[
tick align=outside,
tick pos=left,
title={Our algorithm},
x grid style={darkgray176},
xlabel={Iteration},
xmin=2.75, xmax=52.25,
xtick style={color=black},
y grid style={darkgray176},
ylabel={Ratio $\nicefrac{B_t}{\|f\|_k}$},
ymin=0.933425812721252, ymax=5.18607661724091,
ytick style={color=black},
width=8cm,
x grid style={darkgray176},
grid=both,
height=6cm
]
\addplot [semithick, black, mark=*, mark size=1, mark options={solid}]
table {%
5 2.50180377960205
20 1.46683855056763
50 1.0470739364624
};
\addplot [semithick, aaltoBlue, mark=*, mark size=1, mark options={solid}]
table {%
5 2.85113201141357
20 1.31017990112305
50 1.0402889251709
};
\addplot [semithick, green, mark=*, mark size=1, mark options={solid}]
table {%
5 2.68533611297607
20 1.57247476577759
50 1.16962976455688
};
\addplot [semithick, magenta, mark=*, mark size=1, mark options={solid}]
table {%
5 2.92869167327881
20 1.47010135650635
50 1.03612585067749
};
\addplot [semithick, goldenrod1911910, mark=*, mark size=1, mark options={solid}]
table {%
5 2.75022411346436
20 1.57946701049805
50 1.0284384727478
};
\legend{$m=1000$, $m=2000$, $m=3000$, $m=4000$, $m=5000$}
\end{axis}

\end{tikzpicture}
\begin{tikzpicture}

\definecolor{darkgray176}{RGB}{176,176,176}
\definecolor{darkviolet1910191}{RGB}{191,0,191}
\definecolor{goldenrod1911910}{RGB}{191,191,0}
\definecolor{green01270}{RGB}{0,127,0}

\begin{axis}[
tick align=outside,
tick pos=left,
title={Tokmak et al.\ (2024)},
x grid style={darkgray176},
xlabel={Iteration},
xmin=2.75, xmax=52.25,
xtick style={color=black},
y grid style={darkgray176},
ytick style={draw=none},
yticklabels={,,},
ymin=0.933425812721252, ymax=5.18607661724091,
ytick style={color=black},
width=8cm,
height=6cm,
x grid style={darkgray176},
grid=both
]
\addplot [semithick, black, mark=*, mark size=1, mark options={solid}]
table {%
5 5.04550838470459
20 4.40223598480225
50 3.89910244941711
};
\addplot [semithick, aaltoBlue, mark=*, mark size=1, mark options={solid}]
table {%
5 4.43380451202393
20 3.45140528678894
50 2.99823832511902
};
\addplot [semithick, green, mark=*, mark size=1, mark options={solid}]
table {%
5 3.84963226318359
20 3.11664152145386
50 2.69217872619629
};
\addplot [semithick, magenta, mark=*, mark size=1, mark options={solid}]
table {%
5 3.58673238754272
20 2.70576047897339
50 2.44706869125366
};
\addplot [semithick, goldenrod1911910, mark=*, mark size=1, mark options={solid}]
table {%
5 3.30864858627319
20 2.63648223876953
50 2.23414373397827
};
\end{axis}

\end{tikzpicture}
     \caption{Tightness of the RKHS norm over-estimation of our algorithm (left) compared to \cite{tokmak2024pacsbo} (right) when computing~$m$ random RKHS functions.}
     \label{fig:tightness_PACSBO}
\end{figure}
\section{ADAPTIVE NOTION OF LOCALITY} \label{app:adaptive_locality}

In Section~\ref{sec:locality}, we explain the motivation behind exploiting locality to reduce conservatism and, thus, improve exploration.
Figure~\ref{fig:locality_smoothness} shows a toy example in which working with the \emph{global} RKHS norm would result in unnecessarily conservative exploration in most parts of the domain.
This is because the sub-domain in the center has the largest slope and, thus, mainly contributes to increasing the (global) RKHS norm, whereas the sub-domains on the right- and left-hand side have \emph{significantly} smaller slopes and local RKHS norms.
These locally smaller RKHS norms naturally yield tighter confidence intervals when defining them on these local sub-domains.

Now, we contrast our \emph{adaptive} notion of locality to the locality introduced in \cite{tokmak2024pacsbo}.
As mentioned in Section~\ref{sec:locality}, we define sub-domains as local cubes around each sample~$a\in a_{1:t}$.
The size of these local cubes and the number of local cubes around each sample are hyperparameters. 
In addition to these sub-domains, we preserve the global domain.

In contrast, \cite{tokmak2024pacsbo} use  \emph{(i)}~the global domain, \emph{(ii)}~the convex hull of the samples, \emph{(iii)}~some intermediate domain between the convex hull and the global domain.
    A weakness of this approach is that it requires at least two samples to start exploration since the convex hull of a singleton set is purposeless in the considered setting.
    Furthermore, let~$S_0=\{0,1\}$ be the set of initial safe samples on the global domain~$\mathcal A=[0,1]$. 
    Then, the locality introduced by \cite{tokmak2024pacsbo} would again be impractical since the convex hull of samples is equal to the global domain, resulting in global exploration without a notion of locality.
    This design choice introduces limitations on the initial samples, restricting practical applicability.

    In our paper, singleton safe sets and points far apart do not negatively influence the notion of locality.
    Furthermore, we can additionally influence the size of the domain with the hyperparameter~$\Delta$.
    This hyperparameter is a major difference from \cite{tokmak2024pacsbo} and the main reason why our algorithm is significantly more scalable than \safeopt.
    Consider, \eg the six-dimensional half-cheetah environment, where we set $\Delta=0.05$.
    In \safeopt, or in the unfavorable local setting of \cite{tokmak2024pacsbo}, we would revert to~$\Delta=1$, which would give a Euclidean distance of~$\mathcal O(10^{-1})$ between the samples, whereas the distance in our case can be reduced by a factor of 200.
    This improves scalability, thus allowing for exploration in higher dimensions.

\begin{figure}
    \centering
\input{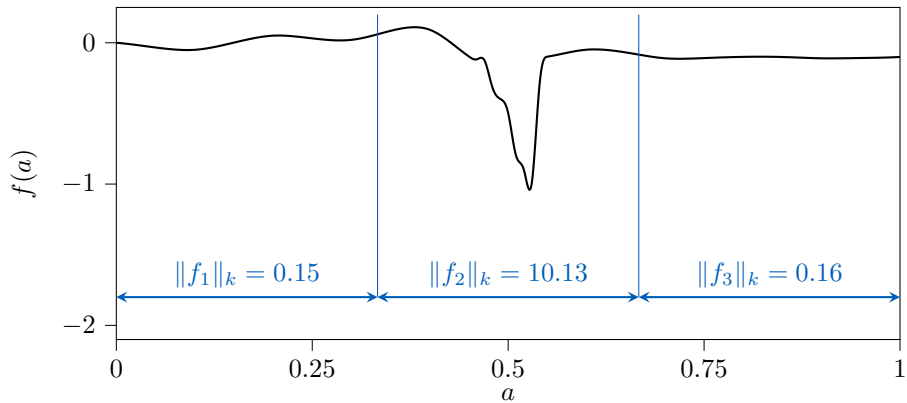}
\caption{\emph{General idea of locality.} We introduce a local interpretation of the RKHS norm and local sub-domains to exploit local ``smoothness'' and, thus, work with smaller RKHS norms on sub-domains with smaller slopes.}
    \label{fig:locality_smoothness}
\end{figure}

\section{REQUIRED COMPUTATION TIME FOR THE SCENARIO APPROACH}\label{app:scenario_time}
\begin{figure}
    \centering
\begin{tikzpicture}

\definecolor{crimson2143940}{RGB}{214,39,40}
\definecolor{darkgray176}{RGB}{176,176,176}
\definecolor{darkorange25512714}{RGB}{255,127,14}
\definecolor{forestgreen4416044}{RGB}{44,160,44}
\definecolor{steelblue31119180}{RGB}{31,119,180}

\pgfplotsset{
every axis legend/.append style={
at={(0, 0)},
anchor=south west,
legend columns=2
},
}
\begin{axis}[
tick align=outside,
tick pos=left,
legend style={nodes={scale=0.8}},
x grid style={darkgray176},
xtick style={color=black},
y grid style={darkgray176},
ymin=-5, ymax=47,
height=8cm,
grid=both,
width=12cm,
ytick style={color=black},
xlabel={Iteration},
ylabel={Time (\SI{}{\second})},
ytick={0, 20, 40}
]

\path [fill=aaltoBlue, fill opacity=0.1]
(axis cs:5,4.23205813270889)
--(axis cs:5,3.90662561553635)
--(axis cs:20,3.90933930124291)
--(axis cs:50,3.47098205322953)
--(axis cs:100,5.01331486581447)
--(axis cs:200,5.20138064881622)
--(axis cs:300,4.19649538259779)
--(axis cs:300,5.293666036317)
--(axis cs:300,5.293666036317)
--(axis cs:200,6.97096084097565)
--(axis cs:100,6.55898928762791)
--(axis cs:50,5.31811468367843)
--(axis cs:20,4.85340965543739)
--(axis cs:5,4.23205813270889)
--cycle;

\path [fill=darkorange25512714, fill opacity=0.1]
(axis cs:5,10.9675749470673)
--(axis cs:5,9.68008216123953)
--(axis cs:20,9.81869939918585)
--(axis cs:50,9.82342417841961)
--(axis cs:100,12.738153058968)
--(axis cs:200,12.7224686640653)
--(axis cs:300,9.45183713915094)
--(axis cs:300,11.7941501331116)
--(axis cs:300,11.7941501331116)
--(axis cs:200,14.0388911706057)
--(axis cs:100,14.6970326595684)
--(axis cs:50,12.4514519869704)
--(axis cs:20,11.9683925235742)
--(axis cs:5,10.9675749470673)
--cycle;

\path [fill=forestgreen4416044, fill opacity=0.1]
(axis cs:5,22.6149815257106)
--(axis cs:5,19.5689971749272)
--(axis cs:20,19.0733725700382)
--(axis cs:50,19.2945893190934)
--(axis cs:100,25.981862758749)
--(axis cs:200,25.2463405694075)
--(axis cs:300,19.2558916827379)
--(axis cs:300,22.7201111534895)
--(axis cs:300,22.7201111534895)
--(axis cs:200,28.5616637144975)
--(axis cs:100,27.5121150740452)
--(axis cs:50,21.3056266881392)
--(axis cs:20,21.8242956008908)
--(axis cs:5,22.6149815257106)
--cycle;

\path [fill=aaltoRed, fill opacity=0.1]
(axis cs:5,34.1373860679284)
--(axis cs:5,29.2726400532112)
--(axis cs:20,29.1583391314189)
--(axis cs:50,28.2154407854749)
--(axis cs:100,37.4763096714798)
--(axis cs:200,38.2871952310105)
--(axis cs:300,29.3602933249838)
--(axis cs:300,33.4285246529215)
--(axis cs:300,33.4285246529215)
--(axis cs:200,41.764780114601)
--(axis cs:100,45.7932561492141)
--(axis cs:50,36.1643607262897)
--(axis cs:20,34.2885557050069)
--(axis cs:5,34.1373860679284)
--cycle;

\addplot [semithick, aaltoBlue, mark=*, mark size=1, mark options={solid}]
table {%
5 4.06934187412262
20 4.38137447834015
50 4.39454836845398
100 5.78615207672119
200 6.08617074489594
300 4.7450807094574
};
\addplot [semithick, darkorange25512714, mark=*, mark size=1, mark options={solid}]
table {%
5 10.3238285541534
20 10.89354596138
50 11.137438082695
100 13.7175928592682
200 13.3806799173355
300 10.6229936361313
};
\addplot [semithick, forestgreen4416044, mark=*, mark size=1, mark options={solid}]
table {%
5 21.0919893503189
20 20.4488340854645
50 20.3001080036163
100 26.7469889163971
200 26.9040021419525
300 20.9880014181137
};
\addplot [semithick, aaltoRed, mark=*, mark size=1, mark options={solid}]
table {%
5 31.7050130605698
20 31.7234474182129
50 32.1899007558823
100 41.634782910347
200 40.0259876728058
300 31.3944089889526
};
\legend{$m=500$, $m=1000$, $m=2000$, $m=3000$}
\end{axis}

\end{tikzpicture}    \caption{\emph{Computation time for the scenario approach with~$m$ random RKHS functions.} The computation time is independent of the iteration and, hence, the amount of collected data.}
    \label{fig:scenario_time}
\end{figure}
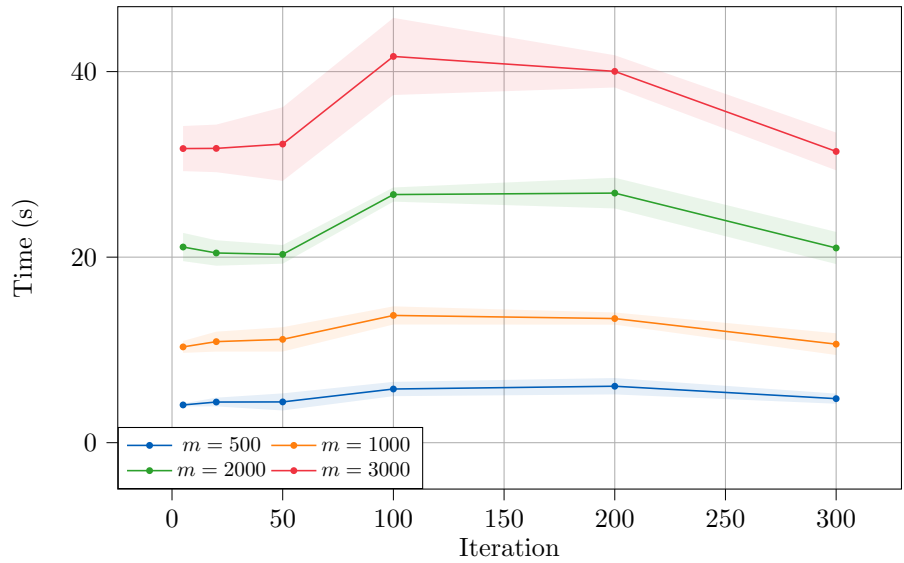
We investigate the required computation time for the scenario approach.
Additionally, we comment on the influence on the total computation time when integrating the scenario approach in \safeopt-type algorithms.

Figure~\ref{fig:scenario_time} shows the computation time of the scenario approach over the number of iterations for different~$m$.
The required time expectedly increases with the number of constraints~$m$, whereas a significant dependence of the computation time on the iteration cannot be observed.
This is in huge contrast to \safeopt.
The computation time of \safeopt\ is investigated in Figure~2 by \cite{baumann2023computationally}, where a significant increase is observable with the number of iterations.
Crucially, the influence of the scenario approach on the total computation time is insignificant; especially because our algorithm, akin to \safeopt, works in an episodic setting, where we do not have to meet real-time requirements.

\section{ABLATION STUDY FOR THE SCENARIO APPROACH}\label{app:ablation_scenario}

\begin{figure}
\centering
\begin{tikzpicture}

\definecolor{darkgray176}{RGB}{176,176,176}
\definecolor{steelblue31119180}{RGB}{31,119,180}

\begin{axis}[
tick align=outside,
tick pos=left,
title={\small{$m=2500$, $\kappa=0.001$}},
x grid style={darkgray176},
xlabel={$\gamma$},
xmin=-0.0039, xmax=0.1039,
xtick style={color=black},
y grid style={darkgray176},
ylabel={$r$},
ymin=-10.1, ymax=212.1,
width=5cm,
height=5cm,
x grid style={darkgray176},
ytick style={color=black},
grid=both]
\addplot [semithick, aaltoBlue]
table {%
0.001 0
0.002 0
0.003 0
0.004 1
0.005 2
0.006 4
0.007 5
0.008 7
0.009 8
0.01 10
0.011 12
0.012 14
0.013 15
0.014 17
0.015 19
0.016 21
0.017 23
0.018 25
0.019 27
0.02 29
0.021 31
0.022 33
0.023 35
0.024 37
0.025 39
0.026 41
0.027 43
0.028 45
0.029 47
0.03 49
0.031 51
0.032 53
0.033 55
0.034 57
0.035 59
0.036 62
0.037 64
0.038 66
0.039 68
0.04 70
0.041 72
0.042 74
0.043 77
0.044 79
0.045 81
0.046 83
0.047 85
0.048 87
0.049 89
0.05 92
0.051 94
0.052 96
0.053 98
0.054 100
0.055 103
0.056 105
0.057 107
0.058 109
0.059 111
0.06 114
0.061 116
0.062 118
0.063 120
0.064 122
0.065 125
0.066 127
0.067 129
0.068 131
0.069 134
0.07 136
0.071 138
0.072 140
0.073 143
0.074 145
0.075 147
0.076 149
0.077 152
0.078 154
0.079 156
0.08 158
0.081 161
0.082 163
0.083 165
0.084 167
0.085 170
0.086 172
0.087 174
0.088 176
0.089 179
0.09 181
0.091 183
0.092 186
0.093 188
0.094 190
0.095 192
0.096 195
0.097 197
0.098 199
0.099 202
};
\end{axis}

\end{tikzpicture}
\begin{tikzpicture}

\definecolor{darkgray176}{RGB}{176,176,176}
\definecolor{steelblue31119180}{RGB}{31,119,180}

\begin{axis}[
tick align=outside,
tick pos=left,
x grid style={darkgray176},
xlabel={$m$},
title={\small{$\gamma=0.01$, $\kappa=0.001$}},
xmin=0, xmax=7325,
xtick style={color=black},
y grid style={darkgray176},
ylabel={$r$},
width=5cm,
height=5cm,
x grid style={darkgray176},
grid=both,
ymin=-2.25, ymax=47.25,
ytick style={color=black}
]
\addplot [semithick, aaltoBlue]
table {%
500 0
1000 1
1500 4
2000 7
2500 10
3000 14
3500 17
4000 21
4500 25
5000 29
5500 33
6000 37
6500 41
7000 45
};
\end{axis}

\end{tikzpicture}
\begin{tikzpicture}

\definecolor{darkgray176}{RGB}{176,176,176}
\definecolor{steelblue31119180}{RGB}{31,119,180}

\begin{axis}[
log basis y={10},
tick align=outside,
tick pos=left,
x grid style={darkgray176},
grid=both,
xlabel={$m$},
ylabel style={yshift=1em},
xmin=0, xmax=7325,
xtick style={color=black},
y grid style={darkgray176},
ylabel={$\kappa$ (log)},
x grid style={darkgray176},
width=5cm,
height=5cm,
ymode=log,
ytick style={color=black},
ytick={1e-36,
1e-28,
1e-20,
1e-12,
1e-4
},
yticklabels={
  ${10^{-36}}$),
  $10^{-28}$,
  $10^{-20}$,
  $10^{-12}$,
  $10^{-4}$
},
title={\small{$r=0$, $\gamma=0.01$}}
]
\addplot [semithick, aaltoBlue]
table {%
500 0.0065704830424146
1000 4.31712474106579e-05
1500 2.83655949031613e-07
2000 1.86375660299223e-09
2500 1.22457811551487e-11
3000 8.04606974210249e-14
3500 5.28665647985697e-16
4000 3.47358867519715e-18
4500 2.28231554867063e-20
5000 1.49959156099795e-22
5500 9.8530409220851e-25
6000 6.47392382947773e-27
6500 4.25368067394672e-29
7000 2.79487367360137e-31
};
\end{axis}

\end{tikzpicture}
\caption{Influence of the hyperparameters of the scenario approach.}
\label{fig:ablation_scenario}
\end{figure}
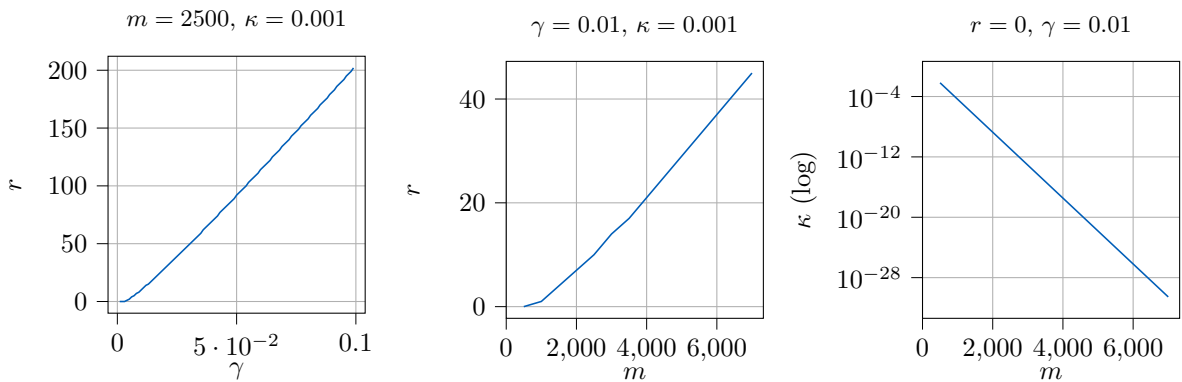

Figure~\ref{fig:ablation_scenario} shows an ablation study of the hyperparameters~$\gamma, m, r$, and~$\kappa$ for the scenario approach.
\emph{Left:} The number of constraints~$r$ that we can discard grows linearly with the decrease of the parameter $\gamma$ when keeping $m$ and $\kappa$ constant.
\emph{Center:} The number of constraints~$r$ that we can discard grows linearly with the number of total constraints~$m$ when keeping~$\gamma$ and~$\kappa$ constant.
\emph{Right:} The confidence parameter~$\kappa$ decreases exponentially with the number of total constraints~$m$ when keeping~$\gamma$ and~$r$ constant.

\section{ABLATION STUDY USING ALGORITHM~\ref{alg:local_safe_BO} WITH DIFFERENT KERNELS}\label{app:numerical_ablation}

\begin{figure}[h]
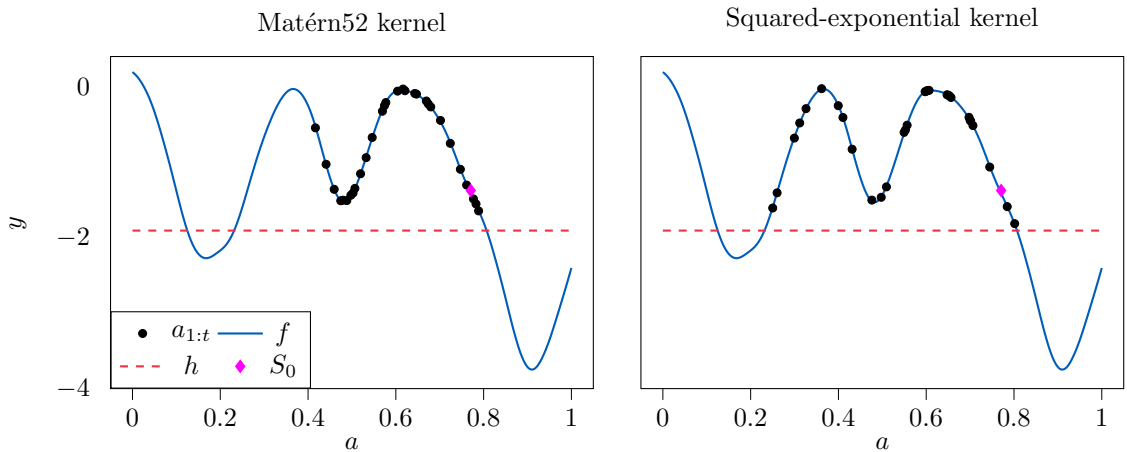

\centering
\input{final_figs/numerical_ablation_matern52}
\input{final_figs/numerical_ablation_RBF}
\caption{\emph{Performance of our safe BO algorithm using different kernels.} Even with a misspecified kernel, our algorithm explores the domain safely.}
\label{fig:numerical_ablation}
\end{figure}

In Figure~\ref{fig:numerical_ablation}, we perform an ablation study by conducting the numerical experiment from Section~\ref{sec:experiments} using Algorithm~\ref{alg:local_safe_BO} with different kernels.
Hence, the kernel used in Algorithm~\ref{alg:local_safe_BO} differs from the one with which the reward function is created.
Although we cannot provide any theoretical guarantees for this setting, Figure~\ref{fig:numerical_ablation} shows the successful deployment of our algorithm in a setting in which the kernel is misspecified.

\section{ABLATION STUDY USING ALGORITHM~\ref{alg:local_safe_BO} WITH DIFFERENT LOCALITY PARAMETERS}\label{app:locality_ablation}

\begin{figure}[h]
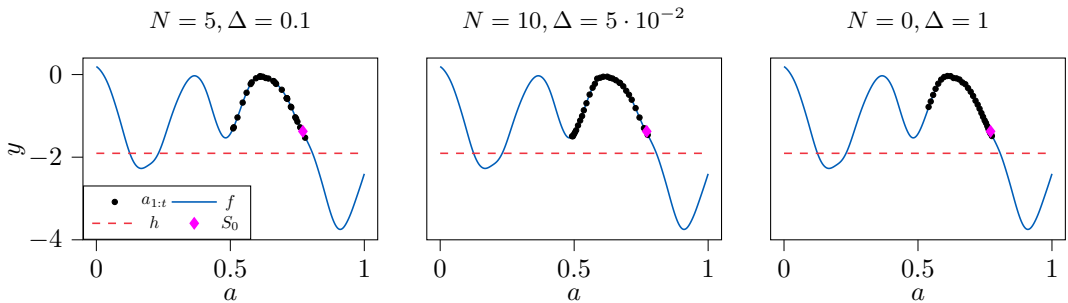

\centering
\input{final_figs/locality_ablation/original}
\input{final_figs/locality_ablation/delta_cube_005_N_10}
\input{final_figs/locality_ablation/delta_cube_1_N0}
\caption{Performance of our safe BO algorithm using different locality parameters~$N$ and~$\Delta$.}
\label{fig:locality_ablation}
\end{figure}

In Figure~\ref{fig:locality_ablation}, we perform an ablation study by conducting the numerical experiment from Section~\ref{sec:experiments} using Algorithm~\ref{alg:local_safe_BO} with different locality parameters~$N$ and~$\Delta$.
The left sub-figure is created using the original locality settings described in Section~\ref{sec:experiments}, \ie it is identical to the left sub-figure in Figure~\ref{fig:1D_toy}.
The sub-figure in the center is created using more and smaller local cubes.
Therefore, Algorithm~\ref{alg:local_safe_BO} has more options for local cubes to iterate through, which (slightly) improves exploration behavior.
The right sub-figure results from executing Algorithm~\ref{alg:local_safe_BO} without local cubes, \ie by only iterating through the global domain~$\domain$.
Therefore, we naturally recover Algorithm~\ref{alg:global_safe_BO}.
As expected and stated in Section~\ref{sec:locality}, we observe inferior exploration compared to the setting that actively exploits locality.
\section{SAFE RL POLICY OPTIMIZATION IN OPENAI GYM} \label{app:RL}
In this section, we provide further details on the RL benchmark simulations.
As discussed in Section~\ref{sec:experiments}, we trained the SAC algorithm \citep{haarnoja2018soft} in various OpenAI Gym environments \citep{brockman2016openai}, in particular, the mountain car, the cart-pole system, the swimmer, the lunar lander, the half-cheetah, and the ant. 
We then alter specific physical properties within each environment to imitate real-world experiments, in which we utilize our proposed algorithm and \safeopt\ to optimize an action bias matching the dimensionality of the action space.
We next state the remaining hyperparameters and detail how we alter the physical properties for the different environments.
We conducted the experiments on a cluster with \SI{60}{\giga\byte} RAM and 20 cores.

\paragraph{Mountain Car (1D)}
We set $N=3$, $\Delta=10^{-1}$, and discretize the environment with $10^3$ points.
For the imitated real experiments, we reduce the power of the car from $0.015$ to $0.013$.
The target is to reach the top of the mountain; any position before or behind the goal point at the end of an episode was considered unsafe.

\paragraph{Cart Pole (1D)}
We set $N=3$, $\Delta=10^{-1}$, and discretize the environment with $10^3$ points.
For the imitated real experiments, we change the pole length from $0.6$ to $0.8$.
The goal is to maintain the pole in an upright position; dropping the pole was considered unsafe.

\paragraph{Swimmer (2D)}
We set $N=5$, $\Delta=10^{-1}$, and discretize the environment with \num{5e2} points per dimension.
For the imitated real experiments, we change the lengths of the ``torso'' and ``back'' links from $0.$1 to $0.3$. 
The goal is to achieve forward movement of the swimmer; any backward movement was considered unsafe.

\paragraph{Lunar Lander (2D)}
We set $N=5$, $\Delta=10^{-1}$, and discretize the environment with \num{5e2} points per dimension.
For the imitated real experiments, we add wind of velocity \SI{3}{\meter\per\second}. 
The goal was for the lander to descend and come to a complete rest; any instance of the lander tipping over or crashing was considered unsafe.

\paragraph{Half Cheetah (6D)}
We set $N=10$, $\Delta=5\cdot 10^{-2}$, and discretize the environment with $8$ points per dimension.
For the imitated real experiments, we change the thickness of the back link from $0.046$ to $0.066$. 
The goal is to ensure forward movement without falling; any fall was considered unsafe.

\paragraph{Ant (8D)}
We set $N=10$, $\Delta=\num{5e-2}$, and discretize the environment with $5$ points per dimension.
For the imitated real experiments, we change the thickness of the leg joint from $0.08$ to $0.18$. 
The goal is to ensure forward movement without falling; any fall was considered unsafe.

\section{HARDWARE EXPERIMENT} \label{app:hardware}
We conducted the hardware experiment on an Ubuntu laptop with \SI{32}{\giga\byte} RAM and an Intel Core i7-12700H processor.
Figure~\ref{fig:furuta_appendix} shows the setup of the Furuta pendulum.

\begin{figure}
    \centering
    \includegraphics[width=6.75cm]{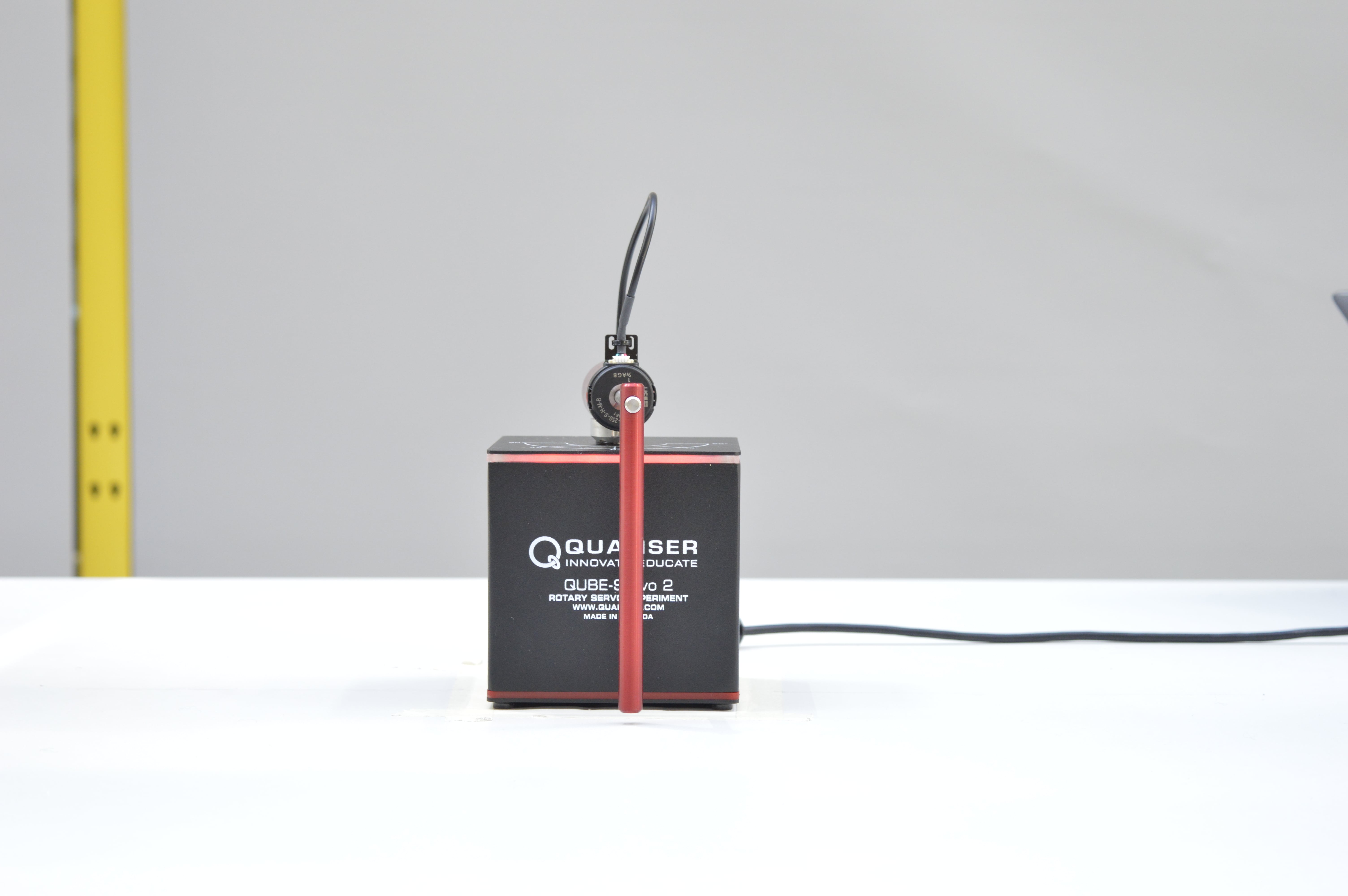}
    \includegraphics[width=6.75cm]{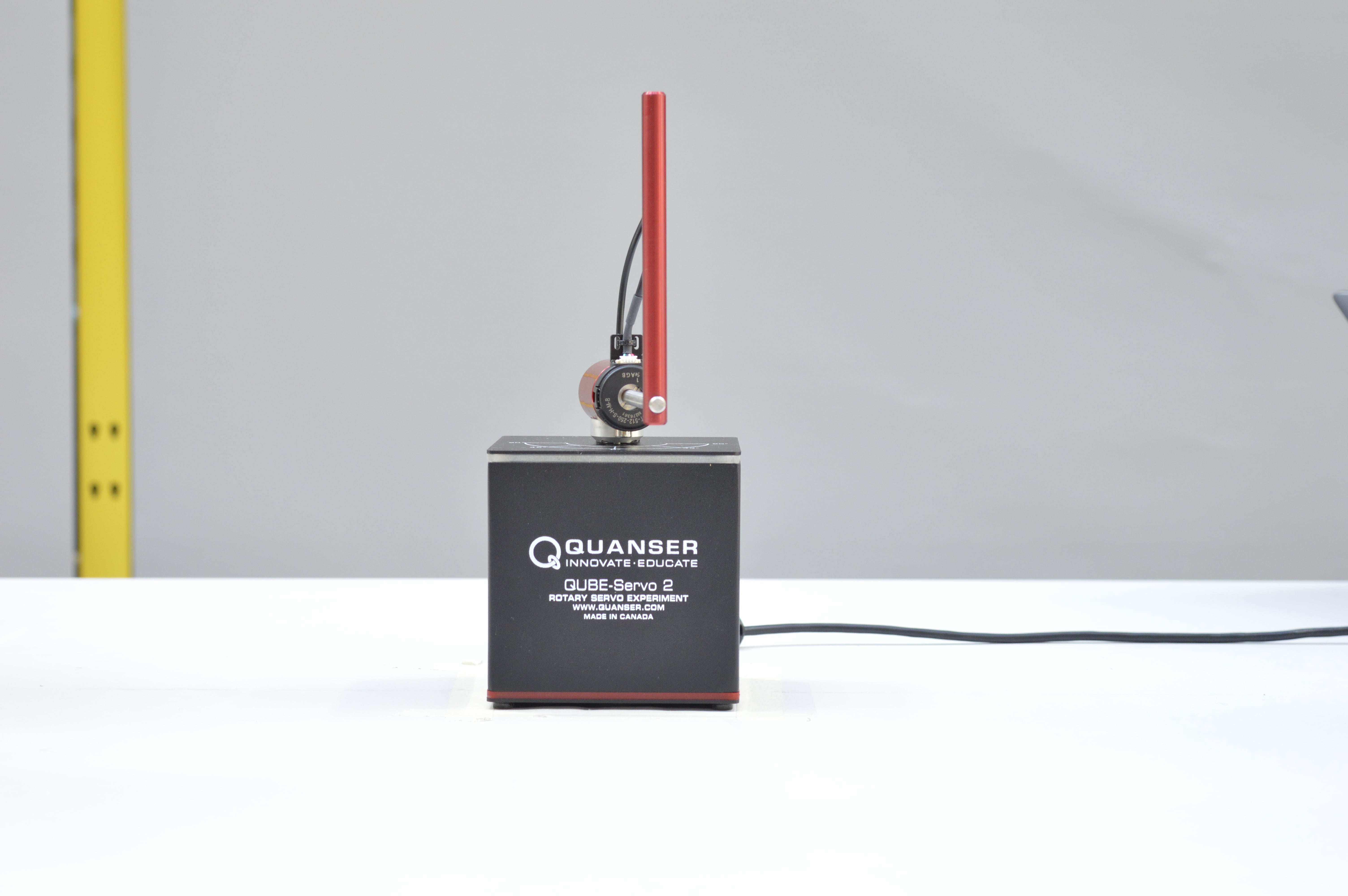}
    \caption{\emph{Hardware setup}.
    The Furuta pendulum starts from a downward position (left) and is swung upright.
    Then, we use a state-feedback controller to balance the pole (right).
    }
    \label{fig:furuta_appendix}
\end{figure}




\end{document}